\documentclass[twoside,11pt]{article}

\usepackage{amsthm}
\usepackage[preprint]{jmlr2e}

\usepackage[utf8]{inputenc}
\usepackage[T1]{fontenc}
\usepackage{url}
\usepackage{booktabs}
\usepackage{multirow}
\usepackage{amsfonts}
\usepackage{nicefrac}
\usepackage{microtype}
\usepackage{xcolor}

\usepackage{amsmath}
\usepackage{mathtools}
\usepackage{bm}
\usepackage{bbm}

\usepackage{subcaption}
\usepackage{tikz}
\usetikzlibrary{positioning,arrows.meta,calc}
\usepackage{wrapfig}
\usepackage{float}

\usepackage{algorithm}
\usepackage{algorithmic}
\usepackage{cleveref}
\usepackage{aliascnt}

\makeatletter
\let\theorem\@undefined       \let\endtheorem\@undefined
\let\lemma\@undefined         \let\endlemma\@undefined
\let\proposition\@undefined   \let\endproposition\@undefined
\let\corollary\@undefined     \let\endcorollary\@undefined
\let\definition\@undefined    \let\enddefinition\@undefined
\let\remark\@undefined        \let\endremark\@undefined
\let\example\@undefined       \let\endexample\@undefined
\let\conjecture\@undefined    \let\endconjecture\@undefined
\let\axiom\@undefined         \let\endaxiom\@undefined
\let\c@theorem\@undefined     \let\thetheorem\@undefined
\let\c@example\@undefined     \let\theexample\@undefined
\makeatother

\newtheorem{theorem}{Theorem}[section]

\newaliascnt{lemma}{theorem}
\newtheorem{lemma}[lemma]{Lemma}
\aliascntresetthe{lemma}

\newaliascnt{proposition}{theorem}
\newtheorem{proposition}[proposition]{Proposition}
\aliascntresetthe{proposition}

\newaliascnt{corollary}{theorem}
\newtheorem{corollary}[corollary]{Corollary}
\aliascntresetthe{corollary}

\newaliascnt{definition}{theorem}
\newtheorem{definition}[definition]{Definition}
\aliascntresetthe{definition}

\newaliascnt{assumption}{theorem}
\newtheorem{assumption}[assumption]{Assumption}
\aliascntresetthe{assumption}

\newaliascnt{remark}{theorem}
\newtheorem{remark}[remark]{Remark}
\aliascntresetthe{remark}

\newaliascnt{example}{theorem}
\newtheorem{example}[example]{Example}
\aliascntresetthe{example}

\crefname{assumption}{Assumption}{Assumptions}
\Crefname{assumption}{Assumption}{Assumptions}
\crefname{lemma}{Lemma}{Lemmas}
\Crefname{lemma}{Lemma}{Lemmas}
\crefname{proposition}{Proposition}{Propositions}
\Crefname{proposition}{Proposition}{Propositions}
\crefname{corollary}{Corollary}{Corollaries}
\Crefname{corollary}{Corollary}{Corollaries}
\crefname{definition}{Definition}{Definitions}
\Crefname{definition}{Definition}{Definitions}
\crefname{remark}{Remark}{Remarks}
\Crefname{remark}{Remark}{Remarks}
\crefname{example}{Example}{Examples}
\Crefname{example}{Example}{Examples}

\newcommand{\cX}{\mathcal{X}}
\newcommand{\cY}{\mathcal{Y}}

\newcommand{\cU}{\mathcal{U}}
\newcommand{\cH}{\mathcal{H}}
\newcommand{\cP}{\mathcal{P}}

\newcommand{\cF}{\mathcal{F}}

\DeclareMathOperator{\supp}{supp}
\DeclareMathOperator{\rank}{rank}
\DeclareMathOperator{\range}{range}

\newcommand{\softmax}{\mathrm{softmax}}
\newcommand{\sigmoid}{\mathrm{sigmoid}}

\DeclareMathOperator*{\argmin}{arg\,min}
\DeclareMathOperator*{\argmax}{arg\,max}

\DeclareMathOperator{\E}{\mathbb{E}}
\DeclareMathOperator{\R}{\mathbb{R}}
\DeclareMathOperator{\1}{\mathbbm{1}}

\jmlrheading{1}{2026}{1-\pageref{LastPage}}{}{}{}{Kasra Jalaldoust}
\ShortHeadings{Abduction--Deduction Entanglement}{Jalaldoust \& Bareinboim}
\firstpageno{1}
\usepackage{lastpage}

\title{Abduction--Deduction Entanglement:\\ Domain Generalization via Representation Transplants}

\author{\name Kasra Jalaldoust \email kasra@cs.columbia.edu \\
\name Elias Bareinboim \email eb@cs.columbia.edu \\
        \addr Department of Computer Science\\
        Columbia University\\
        New York, NY 10027, USA}

\begin{document}
\maketitle

\begin{abstract}

Prediction models trained under the source distribution do not generalize well to a different target distribution. A valid inference about an unseen data distribution must be anchored by the invariance of certain causal mechanisms that generate the source and target data, however, these structural invariances are non-identifiable from the source data alone. Under mild causal assumptions about the data, we show that the optimal prediction in the target is in fact \emph{partially} identifiable by the source distribution. The result rests on a simple observation: In any domain, the optimal prediction can be factorized into what we call a pair of \emph{abduction and deduction maps}, where the abduction map makes inference about some unobserved variables (possibly confounders) from the observed variables and the deduction map predicts the label using both the observed and inferred quantities. Access to large source data pins down the optimal prediction, thus constrains the valid abduction--deduction ensembles that produce it --- a non-identifiability that we call the \emph{abduction--deduction entanglement}. To leverage this, we parameterize the constrained family using what we call a \emph{representation transplant}, that is a specific linear transformation in the representation space that manipulates the abduction content of the representation while retaining the deduction component. Invariance of the causal mechanism generating the label implies existence of an \emph{invariant deduction} map between source and target. Thus, we can search the space of plausible target distributions via a parametric transplant. We use this scheme in a learner--adversary game that, under an idealistic optimization, provably terminates with the learner having the minimax-optimal target prediction. Evaluations verify the theory, showing that the method is competitive in DG benchmarks.

\end{abstract}

\begin{keywords}
  domain generalization, distribution shift, causal inference
\end{keywords}

\section{Introduction}
\label{sec:intro}

A prediction model trained on data from one distribution often degrades unexpectedly when the test distribution differs from the training one. From a learning-theoretic standpoint there is no a priori reason for it to be otherwise: without information relating source and target, no rule on the source controls the target risk~\citep{bendavid2010theory}. Challenges in domain generalization (DG) revolve around \emph{which} structural assumptions are useful to bridge source and target domains  while remaining plausible in modern ML applications.

Causal structural assumptions are promising inductive biases that allow principled probabilistic inference about unseen circumstances, and causal inference has inspired some of the important research directions on generalizability in machine learning. The driving principle in many of these work is that the statistical relationships between two data distributions are due to \emph{invariance} of certain underlying causal mechanisms that generate the data. In the case of prediction, a favored assumption is that the label $Y$ is generated by a causal mechanism that remains invariant, while the causal mechanisms that generate the covariates $X$ are subject to change. Under more restrictive assertions, such as absence of unobserved common causes (i.e., \emph{confounders}), one can derive the invariant causal prediction principle~\citep{peters2016causal}: there exists a \emph{representation} of covariates that learning to predict the label based on that representation yields a stable prediction across the domains. This inspires many of the work in DG that uses the heterogeneity in the source data to find invariant representations. Nonetheless, standardized DG benchmarks~\citep{gulrajani2021search,koh2021wilds} indicate that under a uniform evaluation protocol, the existing methods do not consistently improve over vanilla ERM and the failure is partially due to a mismatch between the assumed structure and real-world generalization necessities. We argue that an alternative notion of invariance can be productive for domain generalization, without further structural assumptions; below is a motivating example.

\subsection{A motivating example: moral questions and response prediction}
Consider a passive Q\&A panel: Subjective questions $X$ on different moral topics responded by individuals with different personalities $U$, with binary responses $Y\in\{y_0,y_1\}$ that represent action/inaction in a hypothetical scenario. The individuals answer to each question based on their moral values captured by $\tilde U := \psi(U) \in \{-1,0,1\}^3$, a \emph{coarsening} of the personality $U$ along three axes (\Cref{tab:moral-axes}), giving a total of $|\tilde\cU|=27$ different moral profiles.

\begin{table}[t]
\centering
\small
\begin{tabular}{lccc}
\toprule
\textbf{Axis} & \textbf{$+1$} & \textbf{$0$} & \textbf{$-1$} \\
\midrule
$U_{\mathrm{conseq}}$ & Consequentialist (only outcomes matter) & Balanced     & Deontologist (rules are intrinsic) \\
$U_{\mathrm{indv}}$   & Individualist (autonomy paramount)      & Balanced     & Collectivist (group welfare first) \\
$U_{\mathrm{care}}$   & Care/compassion-prioritizing            & Balanced  & Justice/impartiality-prioritizing \\
\bottomrule
\end{tabular}
\caption{The three moral-value axes constituting the coarsening $\tilde U \in \{-1, 0, +1\}^3$ of the underlying moral values $U$. Each axis ranges over a moral pole; $0$ denotes a balanced/mixed orientation. The convention $+1$ denotes ``embodies the named pole''.}\label{tab:moral-axes}
\end{table}

For instance, consider the following question $x$: \emph{``A scientist developing a drug that increases intelligence needs to test it on a volunteer who may suffer permanent mental impairment"}. The choices offered to the responder are $y_0$: \emph{``refuse to use the drug"}, and  $y_1$: \emph{``use it.''} A \emph{consequentialist} person ($\tilde u_{1}=+1$), weighing the societal benefit of a working drug against one volunteer's risk, may respond $y_1$; a \emph{deontologist} ($\tilde u_{1}=-1$), holding that one must not use a person as a means to ends, is likely to respond $y_0$. Additionally, we know that each responder also has different tendencies regarding \emph{which} questions they engage with: questions foregrounding \emph{freedom} or \emph{pleasure} attract more responses from \emph{individualists}, while questions foregrounding \emph{duty} or \emph{promises} attract more responses from deontologist--collectivist individuals.

Suppose one uses this data and trains a prediction model to approximate $P(y_1\mid x)$. By the law of total probability, the model admits the following factorization:
\begin{equation}
    P(y_1\mid x) \;=\; \sum_{\tilde u \in \tilde \cU}\;\underbrace{P(\tilde u\mid x)}_{\text{who might've responded to }x\text{?}}\,\cdot\,\underbrace{P(y_1\mid x,\tilde u)}_{\text{what response would such a person give?}}
\end{equation}
Any such prediction model can be viewed as an ensemble of two steps: an \emph{abduction} $P(\tilde U\mid X)$ that infers the moral values of the question's responder, and a \emph{deduction} $P(\tilde Y\mid X,\tilde{U})$ step that predicts what such an individual would respond. A familiar reader will recognize the same notion of abduction used in the 3-step counterfactual inference procedure---abduction, action, prediction~\citep{pearl2009causality}.

The factorization above is particularly useful to understand why a prediction fails under \emph{distribution shifts}. For instance, in the moral questions example, it is likely that the abduction step ``who might've responded to question $x$'' changes from one domain to the other; for instance, the change may be due to different moral value distributions across the populations, i.e., $P(\tilde U) \neq P^\star(\tilde{U})$, or a change in question--responder association, i.e., $P(X\mid \tilde U) \neq P^\star(X\mid \tilde{U})$, e.g., when in one population the individuals tend to engage with questions that challenges their views and in the other population individuals engage with questions even if it does not directly correspond to their moral stance. 

Notice that the prediction rule $P(Y\mid X)$ is identifiable from the question/response data distribution $P(X,Y)$. However, the true abduction and deduction remain unknown since for any random variable $\tilde U'$ there exists a valid factorization with abduction $P(\tilde U'\mid X)$ and deduction $P(Y\mid X,\tilde{U}')$. The two factors, however, are not \emph{independent} of each other: the ensemble must equal the prediction rule that is revealed by the data; we refer to this as the \emph{abduction--deduction entanglement}. But how can an \emph{entanglement} be useful for DG?

\subsection{Existing remedy: eliminate the entanglement}
A substantial line of work makes the DG problem tractable via structural assertions that \emph{remove} the entanglement altogether. Invariant Causal Prediction~\citep{peters2016causal} and Invariant Risk Minimization~\citep{arjovsky2019invariant} center around existence of a feature subset $X_I \subset X$ such that optimal prediction based on it is invariant across source and target, $P(Y\mid X_I)=P^\star(Y\mid X_I)$; this is especially true for the direct causes of $Y$. The learning objective becomes identifying $X_I$ from heterogeneous source data, e.g., diverse-enough domains or different interventional regimes. We defer a full discussion of this lineage to \Cref{sec:related-work}; here we read it through the abduction--deduction lens.

\paragraph{Relation to A/D.} Existence of an invariant predictor $X_I \subset X$ is interesting from abduction--deduction point-of-view: It is equivalent to invariance of both abduction and deduction for \emph{any} factorization, i.e., for all random variables $\tilde{U}'$, under some non-degeneracy assumptions\footnote{Assuming that the equalities are not accidental, e.g., we exclude the cases where a mismatch between the abduction and deduction maps cancels out through marginalization; this notion is closely related to the notion of causal faithfulness~\citep{spirtes2000causation,pearl2009causality} widely employed throughout causal machine learning.},
\begin{equation}
    P(Y\mid X_I) = P^\star(Y\mid X_I) {\iff} \begin{cases}
        P(\tilde{U}'\mid X_I) = P^\star(\tilde{U}'\mid X_I) &\text{ (abduction invariance)} \\
        P(Y\mid X_I,\tilde{U}') = P^\star(Y\mid X_I,\tilde{U}') &\text{ (deduction invariance)}
    \end{cases}
\end{equation}
In words, causal assumptions that make invariant prediction viable are implicitly escaping the entanglement by making it idle. Not only this may not be viable (either practically, or even conceptually) in certain DG tasks of interest, but also invariance of both abduction and deduction may not be necessary for the source data to inform prediction in the target; we discuss how deduction invariance is enough. To make the deduction parameterization tractable, however, we need a representation that exposes the abduction and deduction factors as linear readouts---which leads us to the role of pretrained foundation representations.

\subsection{Generalizability as a representation learning objective}
Another important objective in unsupervised ML is to learn data \emph{representations} that are useful for downstream tasks such as prediction/classification. Causal representation learning ~\citep{scholkopf2021toward,locatello2019challenging,hyvarinen2016unsupervised,khemakhem2020variational,vonkugelgen2021self,ahuja2023interventional,brehmer2022weakly,squires2023linear,lippe2022citris} aims to leverage data heterogeneity to identify the underlying causal variables and mechanism. The premise is that a ``causal" representation captures changes to the data distribution in \emph{disentangled} ways, and therefore, offers interpretability, robustness, and generalization for free to the downstream learner. In spirit, our discussion on abduction--deduction entanglement is related to the literature on disentangled representation learning, with a key distinction that we do not attempt to identify/disentangle the factors, rather use them to reason about a specified prediction task.

A complementary thread in modern ML attempts the same goal in downstream tasks by learning representations from large pooled datasets, a.k.a., the foundation models~\citep{bommasani2021opportunities,devlin2019bert,brown2020language,radford2021learning}. The rationale is that a pretrained image/text representation $\phi:\cX\to\R^d$ that has been exposed to very large sets of \emph{diverse} data from different domains would automatically develop the \emph{inductive biases} that are necessary to generalize to unseen distributions. 

\paragraph{Relation to A/D.} The foundation models perspective to learning generalizable representations is in fact sensible from the abduction--deduction perspective: Consider a moral question $x$ and a pretrained text embedding $\phi(x) \in \R^d$. Many different \emph{concepts} related to question $x$ are captured by its representation, and empirical evidence suggests that these concepts are often linearly encoded in pretrained representations, a.k.a., the linear representation hypothesis (LRH)---tracing back to early evidence of arithmetic structure in word embeddings~\citep{mikolov2013distributed} and feature-direction findings in vision-model interpretability~\citep{olah2020zoom}, and formalized as a named hypothesis by~\citet{park2024linear}. If these concepts encoded in $\phi(X)$, e.g., the realistic/idealistic framing of the question $x$, happens to \emph{correlate} well with the responder's moral value $\tilde{U}$, then the representation contains the information necessary to mimic the true abduction rule, e.g., it is possible that $P(\tilde{U}\mid x)$ is linearly expressible in $\phi(x)$. The LRH motivates---but does not formally imply---the softmax/sigmoid parameterization in \Cref{eq:abduction-param,eq:deduction-param}; the latter is our parametric working assumption, distinct from LRH as a phenomenon about pretrained representations. 

\subsection{Contribution} In this work, rather than removing entanglement, we embrace it; we parameterize the collection of abduction--deduction maps that are entangled by a source prediction. To make generalization viable, we assert \emph{invariant deduction}, i.e., we assume that there exists some unobserved variable $\tilde{U}$ for which $P(Y\mid X,\tilde{U}) = P^\star(Y\mid X,\tilde{U})$, and to make matters tractable, we assume that the abduction/deduction maps for $\tilde{U}$ are linearly encoded by $\phi(x)$, inspired by the linear representation hypothesis. Notably, invariant deduction is readily derived from invariance of the causal mechanism for the label, i.e., $f_Y(X,U) = f^\star_Y(X,U)$, \emph{provided} the coarsening is sufficient for $Y$ given $X$---that is, $P(Y\mid X, U) = P(Y\mid X, \tilde U)$, equivalently $Y\perp U\mid X, \tilde U$. Under an arbitrary coarsening these can come apart, since marginalizing $U$ over the fiber $\psi^{-1}(\tilde u)$ involves the domain-varying conditional $P^s(U\mid X,\tilde U)$. Invariance of $f_Y$ is commonly assumed in many causality-inspired work in DG. Despite the non-identifiability, we show that the source distribution $P(X,Y)$ partially identifies the target prediction $P^\star(Y\mid X)$. For prediction in the target, we use a learner--adversary game called \emph{Causal Robust Optimization} (CRO) whose solution attains the best worst-case target risk over causally plausible target distributions (\Cref{sec:transplant,sec:algorithm}). Furthermore, we show that the existence of linear invariant representations~\citep{arjovsky2019invariant} emerges as a structural consequence---a corollary of the parameterization---rather than the learning principle. Empirically, CRO shows competitive results based on unified evaluation on some of the standard DG benchmarks in DomainBed~\citep{gulrajani2021search}. Most notably, on the diagnostic Colored MNIST cell where source and target color--label correlations are anti-aligned, CRO achieves $76\%$ target accuracy while the established and externally verified baselines---ERM, IRM~\citep{arjovsky2019invariant}, GroupDRO~\citep{sagawa2020distributionally}, V-REx~\citep{krueger2021out}, CORAL~\citep{sun2016deep}, and Mixup~\citep{zhang2018mixup}---collapse to around $10\%$. CRO also achieves \emph{competitive} performance on real-world benchmarks (PACS, OfficeHome) where the deduction-invariance assumption is only approximate (\Cref{sec:experiments}).

\subsection{Preliminaries and notation}
We use uppercase $Z$ to denote random variables, and lowercase $z$ to denote their values; the matching calligraphic letter denotes the set of possible values, e.g., $x\in\cX$. We write $P(y_1\mid x)$ for $P(Y=1 \mid X=x)$; $\softmax$, $\sigmoid$ carry their standard definitions. 

We work with structural causal models (SCMs)~\citep{pearl2009causality} of the form,
\begin{equation*}
    U\sim P(u), \quad X\gets f_X(U), \quad Y\gets f_Y(X,U).
\end{equation*}
where $X,Y$ are observable covariates and label, and $U$ is the unobserved common cause, a.k.a., the confounder. A \emph{coarsening} of $U$ is any (typically discrete) variable $\tilde U := \psi(U) \in \tilde\cU$. In the multi-domain setting we observe $X,Y$ in $K$ source distributions $P^1, P^2, \dots, P^K$, and seek inference about an unseen target $P^\star$; in the appropriate context, $P^{+}$ refers to pooled source distribution, i.e., $P^{+}(x,y) = \frac{1}{K}\sum_{s=1}^K P^s(x,y)$.

A predictor is $g:=h\circ\phi$ with $\phi:\cX\to\R^d$ being a fixed pretrained representation and $h\in\cH$ a classification head that outputs class probabilities. The risk of $g$ under $P$ is denoted by $R_P(g):=\E_P[\ell(g(X),Y)]$, where $\ell$ is a specified loss function. The running choice in this paper is the convex surrogate cross-entropy $\ell(\hat p, y)=-\log \hat p_y$, used both in the algorithm and in our theoretical statements; we report $0\textnormal{-}1$ accuracy as the evaluation metric. 

\section{Partial Transportability Result}
\label{sec:transplant}

This section develops the paper's core machinery. We observe $K$ source distributions $P^1(X,Y),\dots,P^K(X,Y)$ and seek to predict under an unseen target distribution $P^\star(X,Y)$ with small target risk $R_{P^\star}(g) := \E_{P^\star}[\ell(g(X),Y)]$. Each domain induces the causal graph in \Cref{fig:SD-for-bow-graph}: $X,Y$ are observable, $U$ is an unobserved common cause of $X$ and $Y$, the response mechanism $f_Y$ is shared across domains, and the cross-domain variation enters through a domain-specific prior $\pi_s(U)$ over the unobserved $U$. We work with a discrete coarsening $\tilde U := \psi(U) \in \tilde\cU$ of the underlying $U$, used to parameterize the abduction--deduction factorization below, and a frozen pretrained representation $\phi:\cX\to\R^d$. Our construction has three parts: (i) an abduction--deduction factorization of the prediction rule into a domain-varying abduction map and a domain-invariant deduction map, parameterized as softmax/sigmoid readouts linear in $\phi$ (softmax-linear abduction and logit-linear deduction); (ii) a low-rank linear operator on $\phi$---the \emph{cross-domain transplant}---that swaps the abduction component of a representation across domains while preserving its deduction component; and (iii) a parameterized family of plausible target distributions derived from these two ingredients, against which we will optimize worst-case target risk in \Cref{sec:algorithm}.

\begin{wrapfigure}{r}{0.32\textwidth}
\centering
\vspace{-\baselineskip}
\begin{tikzpicture}[
    every node/.style={circle, draw, minimum size=0.7cm, inner sep=1pt},
    >={Stealth[length=1.6mm]}
]
    \node[dashed] (U) at (0,1.4) {$U$};
    \node[rectangle, fill=black, minimum size=0.25cm, inner sep=0pt, label=above:$s$] (S) at (-2.1,1) {};
    \node (X) at (-1.1,0) {$X$};
    \node (Y) at (1.1,0) {$Y$};

    \draw[->, dashed] (U) -- (X);
    \draw[->, dashed] (U) -- (Y);
    \draw[->] (X) -- (Y);
    \draw[->] (S) -- (X);
\end{tikzpicture}
\caption{Per-domain causal graph for the multi-domain setting: $X, Y$ are observable, $U$ is unobserved (dashed). The response mechanism $f_Y$ is shared across domains; the cross-domain variation enters through a domain-specific prior $\pi_s(U)$ over the unobserved $U$ (equivalently, through a domain-specific $f_X^s$).}
\label{fig:SD-for-bow-graph}
\vspace{-\baselineskip}
\end{wrapfigure}

\subsection{Illustrative example: domain generalization}
\label{sec:illustrative-multi}

Continuing the moral-questions example from \Cref{sec:intro}: source data is collected from $K$ Q\&A panels, each with its own prior $\pi_s(\tilde U)$ over the $|\tilde\cU|=27$ moral-values triples (\Cref{tab:domain-priors}); the target $P^\star$ is an unrepresented panel whose composition we do not see. Concretely, three natural panels are a \emph{pragmatist} panel concentrated on consequentialist--individualist--care individuals, an \emph{idealist} panel concentrated on deontologist--collectivist--justice individuals, and a \emph{centrist} panel concentrated on the balanced (mixed) corner. Each panel induces the causal graph of \Cref{fig:SD-for-bow-graph}; the response mechanism $f_Y$ and the question library are shared, while the prior $\pi_s(\tilde U)$ varies across $s$. Within each source $s$, a sampled question $X$ is more likely to land on an individual whose $\tilde U$ aligns with the panel's mix, so the abduction map $P^s(\tilde u\mid x)$ varies with $s$. Given the same question and the same $\tilde U$, however, the response is the same across panels---a consequentialist's choice on a fixed question does not depend on which panel they were sampled from. \Cref{tab:sample-scenarios} shows one representative scenario per panel. We return to this example throughout the section as an illustration; the development itself is in the abstract notation of \Cref{fig:SD-for-bow-graph}.

\begin{table}[t]
\centering
\caption{Three illustrative source panels with per-axis moral-values priors $\pi_s(\tilde U)$, independent across the three axes within a panel. Each entry is the probability mass on $(+1,\,0,\,-1)$. Only the moral-values prior varies across panels; the scenario library $\cX$ and the response mechanism $P(Y\mid X,\tilde U)$ are shared.}
\label{tab:domain-priors}
\small
\begin{tabular}{lcccl}
\toprule
\textbf{Panel} & $\pi_s(\tilde U_{\mathrm{conseq}})$ & $\pi_s(\tilde U_{\mathrm{indv}})$ & $\pi_s(\tilde U_{\mathrm{care}})$ & \textbf{Modal moral values} \\
\midrule
Pragmatist & $(.80,\,.10,\,.10)$ & $(.80,\,.10,\,.10)$ & $(.80,\,.10,\,.10)$ & $(+1,+1,+1)$ \\
Idealist   & $(.10,\,.10,\,.80)$ & $(.10,\,.10,\,.80)$ & $(.10,\,.10,\,.80)$ & $(-1,-1,-1)$ \\
Centrist   & $(.10,\,.80,\,.10)$ & $(.10,\,.80,\,.10)$ & $(.10,\,.80,\,.10)$ & $(0,0,0)$ \\
\bottomrule
\end{tabular}
\end{table}

\begin{table}[t]
\centering
\small
\begin{tabular}{lcp{8.4cm}c}
\toprule
\textbf{Panel} & \textbf{Moral values $\tilde U$} & \textbf{Scenario} & $Y$ \\
\midrule
Centrist & $(+1,0,0)$ & \emph{Context}: A scientist developing a formula to increase intelligence needs to test it on a volunteer who may suffer a reduction of mental ability. \quad $A$: refuse to use the formula. \quad $B$: use the formula. & $B$ \\
\midrule
Pragmatist & $(0,+1,+1)$ & \emph{Context}: A best friend hosting a party asks you to bring alcohol, even though the guests are too young to drink it. \quad $A$: don't bring it. \quad $B$: bring it anyway. & $A$ \\
\midrule
Idealist & $(-1,-1,+1)$ & \emph{Context}: An engineer discovers a flaw in a product that could lead to disastrous consequences if released; not fixing it would delay the launch for months and cost the company a fortune. \quad $A$: report the flaw. \quad $B$: keep the information to myself. & $A$ \\
\bottomrule
\end{tabular}
\caption{One representative scenario per panel. The $\tilde U$ column gives the moral-values triple under which the response $Y$ was elicited for that scenario---a representative draw under the panel's prior, not necessarily the panel's modal triple (cf.\ \Cref{tab:domain-priors}). Each scenario presents a context and two alternatives $A$ and $B$.}
\label{tab:sample-scenarios}
\end{table}

\subsection{Abduction--deduction parameterization}
\label{sec:parameterization}

For each domain $s\in\{1,\dots,K,\star\}$, the law of total probability decomposes the prediction rule into an inner product
\begin{equation}\label{eq:AD-decomp}
    P^s(y\mid x) = \sum_{\tilde u \in \tilde \cU} P^s(\tilde u\mid x)\cdot P^s(y\mid x,\tilde u),
\end{equation}
which we write as
\begin{equation}\label{eq:AD-entanglement}
    P^s(y_1\mid x) = \vec{P}^s(\bm{\tilde u}\mid x)^\top\,\vec{P}^s(y_1\mid x,\bm{\tilde u}).
\end{equation}
\begin{definition}[Abduction and deduction maps]\label{def:abduction-deduction}
For domain $s\in\{1,\dots,K,\star\}$, write both maps as $r$-vectors indexed by $\tilde u\in\tilde\cU$:
\[
\vec{P}^s(\bm{\tilde u}\mid x) := \langle P^s(\tilde u\mid x)\rangle_{\tilde u\in\tilde\cU}^\top
\qquad\text{(\emph{abduction map})},
\]
\[
\vec{P}^s(y_1\mid x,\bm{\tilde u}) := \langle P^s(y_1\mid x,\tilde u)\rangle_{\tilde u\in\tilde\cU}^\top
\qquad\text{(\emph{deduction map})}.
\]
The abduction map infers the unobserved $\tilde U$ from the observed $X$; the deduction map predicts $Y$ given both $X$ and the inferred $\tilde U$, both under domain $s$.
\end{definition}

Within each domain, neither map is identifiable from data, but the two are \emph{entangled} by \Cref{eq:AD-entanglement}: their inner product is pinned to $P^s(y_1\mid x)$, which source data identifies.

\begin{remark}[Abduction--deduction entanglement]\label{rem:entanglement}
We refer to this constraint as the \emph{abduction--deduction entanglement}: the inner product $\vec{P}^s(\bm{\tilde u}\mid x)^\top \vec{P}^s(y_1\mid x,\bm{\tilde u})$ is pinned to $P^s(y_1\mid x)$ by \Cref{eq:AD-entanglement} while the two factors individually remain underdetermined. The paper's central move (\Cref{sec:plausible-target}) is to parameterize, rather than break, this entanglement.
\end{remark}

\paragraph{Shared deduction, varying abduction.} The structural premise of the multi-domain setting is that the deduction map is invariant across domains while the abduction map varies. Conditioning on both $X$ and $\tilde U$ pins down the response mechanism uniformly across domains; what differs across domains is which $\tilde U$ likely produced a given $X$. Formally,
\begin{equation}\label{eq:deduction-invariance}
    \vec{P}^1(y_1\mid x,\bm{\tilde u}) = \dots = \vec{P}^K(y_1\mid x,\bm{\tilde u}) = \vec{P}^\star(y_1\mid x,\bm{\tilde u}),
\end{equation}
while the abduction map $P^s(\tilde u\mid x)$ remains domain-specific.

\paragraph{Softmax/sigmoid parameterization linear in $\phi$.} We parameterize the abduction map as a softmax (multinomial-logit) readout and the deduction map as a sigmoid (logit-linear, or log-odds linear) readout, both linear in a frozen pretrained representation $\phi:\cX\to\R^d$. Fix a \emph{transplant dimension} $r$ for the parameterized coarsening (identified with $|\tilde\cU|$). The true support size of $\tilde U$ is generally unknown, so $r$ is a modeling choice; \Cref{fig:rank-vs-test} below stress-tests sensitivity to $r$. We posit matrices $A_1,\dots,A_K,A_\star\in\R^{d\times r}$ and $D\in\R^{d\times r}$ such that for every $s\in\{1,\dots,K,\star\}$,
\begin{align}
    \vec{P}^s(\bm{\tilde u}\mid x) &\approx \softmax(A_s^\top\phi(x)), \label{eq:abduction-param} \\
    \vec{P}^s(y_1\mid x,\bm{\tilde u}) &\approx \sigmoid(D^\top\phi(x)). \label{eq:deduction-param}
\end{align}

\emph{Indexing convention.} Fix an ordering of $\tilde\cU$. Both sides of \Cref{eq:abduction-param,eq:deduction-param} are $r$-vectors indexed by $\tilde u\in\tilde\cU$. The $\tilde u$-th entry on the LHS of \Cref{eq:abduction-param} is the scalar abduction probability $P^s(\tilde u\mid x)$; on the RHS, it is the $\tilde u$-th component of the softmax. For \Cref{eq:deduction-param}, the $\tilde u$-th entry on the LHS is the scalar conditional $P^s(y_1\mid x,\tilde u)$ (one Bernoulli probability per $\tilde u$); on the RHS, $\sigmoid$ is applied entry-wise to the $r$-vector $D^\top\phi(x)$, and its $\tilde u$-th entry is the parameterized value of $P^s(y_1\mid x,\tilde u)$. There is no $\tilde u$ on the RHS as a variable; the $\tilde u$-dependence lives in the entry index.

$A_s$ is the domain-$s$ abduction parameter and $D$ is the shared deduction parameter. The pair $(A_s, D)$ parameterizes the abduction--deduction factorization as linear-in-$\phi$ readouts under the softmax and sigmoid link functions. A trained head $h_s: \R^d \to [0,1]$ identified from domain-$s$ data, $(h_s\circ\phi)(x) \approx P^s(y_1\mid x)$, fixes the inner product in \Cref{eq:AD-entanglement} but leaves the factors $(A_s, D)$ underdetermined. The entanglement is, in the parameterized form, an underdetermined system constraining $(A_s, D)$ through $h_s$.

\subsection{Representation Transplants}
\label{sec:transplant-operator}

\begin{definition}[Transplant basis]\label{def:transplant-basis}
Given abduction parameters $A_1,\dots,A_K,A_\star\in\R^{d\times r}$ and deduction $D\in\R^{d\times r}$, a \emph{transplant basis} is any matrix $L \in \R^{d\times r}$ satisfying
\begin{equation}\label{eq:transplant-basis}
    A_s^\top L = I_r \;\;\forall s\in\{1,\dots,K,\star\}, \qquad D^\top L = 0.
\end{equation}
\end{definition}

A transplant basis simultaneously left-inverts every abduction map (including the unseen target's) and lies in the left-null-space of the deduction map, so the transplant operator below acts only on the abduction component of $\phi(x)$. Existence of $L$ is characterized in \Cref{prop:L-exists} of \Cref{app:proofs-L-exists}; $d\ge(K+2)r$ is necessary, and generic-position parameters in such $d$ suffice.

\begin{definition}[Transplant]\label{def:T-multi}
For $s,s'\in\{1,\dots,K,\star\}$,
\begin{equation}\label{eq:rep-transport-map}
    T_{s\to s'}: W \;\mapsto\; \big(I_d + L(A_s^\top - A_{s'}^\top)\big)\,W.
\end{equation}
\end{definition}

\begin{wrapfigure}{r}{0.45\textwidth}
\vspace{-\baselineskip}
\centering
\includegraphics[width=\linewidth]{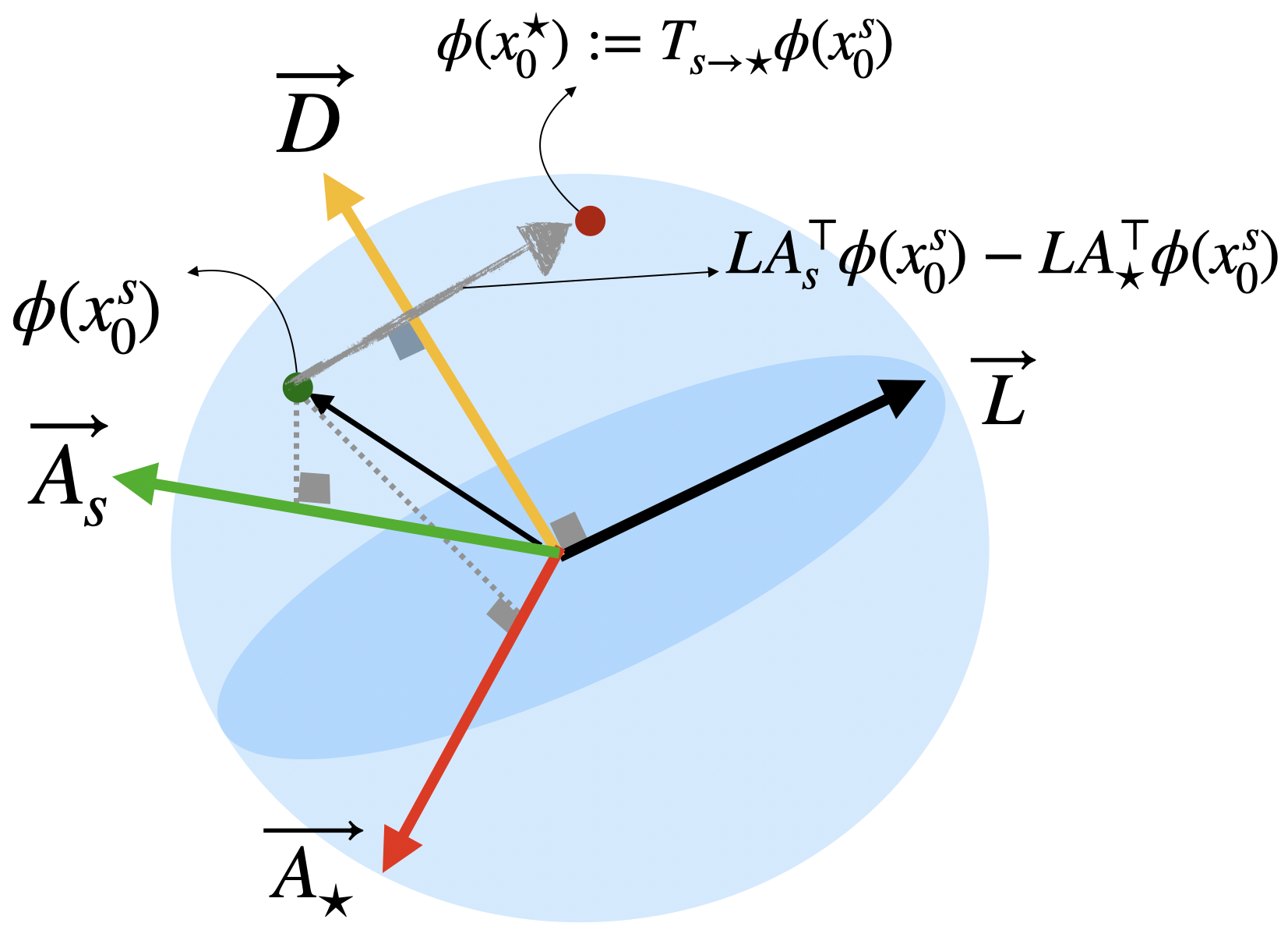}
\caption{Representation transplant. The operator re-points the abduction component of $\phi(x)$ from its source-$s$ reading to a target reading while preserving the deduction component.}
\label{fig:transplant-schematic}
\vspace{-\baselineskip}
\end{wrapfigure}

The operator $T_{s\to s'}$ swaps the abduction component of a representation from what it is in domain $s$ to what it is in domain $s'$, leaving the deduction component fixed; \Cref{sec:related-work} discusses the relationship to linear-intervention operators on neural representations.

The transplant-basis conditions \Cref{eq:transplant-basis} imply, for every $W\in\R^d$ and every $s,s'\in\{1,\dots,K,\star\}$,
\begin{align}
    A_{s'}^\top \,T_{s\to s'}W &= A_s^\top W, \label{eq:transplant-A-property}\\
    D^\top \,T_{s\to s'}W &= D^\top W. \label{eq:transplant-D-property}
\end{align}
The abduction reading at the transplanted vector matches the source-$s$ reading at the original vector, while the deduction reading is preserved (\Cref{lem:multi-bio} in \Cref{app:multi-bio}).

\paragraph{Prediction-invariance on transplanted inputs.} Take a point $x_0^s \in \supp_{P^s}(X)$ with representation $w_0^s := \phi(x_0^s)$, set $w_0^{s'} := T_{s\to s'}\,w_0^s$, and let $x_0^{s'} \in \phi^{-1}(w_0^{s'})$ be a point whose representation matches. Assuming $x_0^{s'} \in \supp_{P^{s'}}(X)$, the transplant-basis conditions \Cref{eq:transplant-basis} and the parameterizations \Cref{eq:abduction-param,eq:deduction-param} imply
\begin{align}
    \vec{P}^{s'}(y_1\mid x_0^{s'},\bm{\tilde u}) &\approx \sigmoid(D^\top w_0^{s'}) = \sigmoid(D^\top\phi(x_0^s)) \approx \vec{P}^s(y_1\mid x_0^s,\bm{\tilde u}), \label{eq:properties-of-w-s-0-D}\\
    \vec{P}^{s'}(\bm{\tilde u}\mid x_0^{s'}) &\approx \softmax(A_{s'}^\top w_0^{s'}) = \softmax(A_s^\top\phi(x_0^s)) \approx \vec{P}^s(\bm{\tilde u}\mid x_0^s). \label{eq:properties-of-w-s-0-A}
\end{align}
By the abduction--deduction entanglement \Cref{eq:AD-entanglement} applied in each domain,
\begin{align*}
    P^s(y_1\mid x_0^s)
    &= \vec{P}^s(\bm{\tilde u}\mid x_0^s)^\top \vec{P}^s(y_1\mid x_0^s,\bm{\tilde u}) \\
    &= \vec{P}^{s'}(\bm{\tilde u}\mid x_0^{s'})^\top \vec{P}^{s'}(y_1\mid x_0^{s'},\bm{\tilde u}) = P^{s'}(y_1\mid x_0^{s'}).
\end{align*}
In words, transplanted points have invariant predictions across the domains they connect. The derivation requires the transplanted representation to admit a same-domain pre-image---a positivity-style condition we state formally as \Cref{ass:closure-multi} in \Cref{sec:algorithm}:
\begin{equation}\label{eq:closure-transport-representations}
    \forall w \in \R^d:\;\; P^s\big(\phi(X) = w\big) > 0 \;\;\iff\;\; P^{s'}\big(\phi(X) = T_{s\to s'}\,w\big) > 0.
\end{equation}
The condition is enforceable across source pairs via a penalty, but for $s'=\star$ it is assumed (target $X$ is unobserved).

\subsection{Invariant representations as a structural consequence}
\label{sec:invariant-rep}

The transplant basis $L$ induces a decomposition of the representation into an abduction component (along $L$) and a deduction component (orthogonal to $L$). Pick any $A\in\R^{d\times r}$ with $A^\top L = I_r$ (any $A_s$ from \Cref{eq:transplant-basis} works). Then $LA^\top$ is an oblique projector onto $\range(L)$, and
\begin{equation}\label{eq:phi-decomp}
    \phi(X) \;=\; \underbrace{LA^\top\phi(X)}_{W_A:\ \text{abduction component}} \;+\; \underbrace{(I_d - LA^\top)\phi(X)}_{W_D:\ \text{deduction component}}.
\end{equation}
The closure condition \Cref{eq:closure-transport-representations} gives matched supports for $P^s(W_A\mid W_D)$ and $P^{s'}(L(A_{s'}^\top-A_s^\top)W_A\mid W_D)$. Under a slightly stronger \emph{distributional-match} condition---namely $P^s(W_A\mid W_D) = P^{s'}\big(W_A\mid W_D\circ T_{s\to s'}\big)$ across domains---the deduction component is an invariant representation. The derivation walks the law of total probability through a cross-domain identity and a fiberwise change of variables:
\begin{align*}
   P^s(y_1\mid w_D)
   &= \sum_{w_A} P^s(y_1\mid W = w_D + w_A)\,P^s(w_A\mid w_D) \tag{total probability} \\
   &= \sum_{w_A} P^{s'}\big(y_1\mid W = T_{s\to s'}(w_D + w_A)\big)\,P^s(w_A\mid w_D) \tag{cross-domain identity} \\
   &= \sum_{w_A'} P^{s'}\big(y_1\mid W = w_D + w_A'\big)\,P^{s'}(w_A'\mid w_D) \tag{relabeling on the fiber} \\
   &= P^{s'}(y_1\mid w_D).
\end{align*}
The cross-domain identity is $P^s(y_1\mid W=w)=P^{s'}(y_1\mid W=T_{s\to s'}w)$ for $w$ in the source-$s$ support whose transplant lands in the source-$s'$ support, which follows from \Cref{eq:abduction-param,eq:deduction-param} and the readout identities \Cref{eq:transplant-A-property,eq:transplant-D-property} via $\softmax(A_s^\top w)^\top\sigmoid(D^\top w)=\softmax(A_{s'}^\top T_{s\to s'}w)^\top\sigmoid(D^\top T_{s\to s'}w)$ (see \Cref{lem:multi-bio} in the appendix). The relabeling step substitutes $w_A':=\pi_A T_{s\to s'}(w_D+w_A)$ with $\pi_A := LA^\top$ the projector onto the abduction component; under \Cref{ass:finite-support} this is a bijection of the (finite) $W_A$-fiber at fixed $W_D=w_D$, and the mass relabeling matches $P^{s'}(w_A'\mid w_D)$ by the distributional-match condition (see \Cref{rem:T-preserves-WD}). The formal proof is in \Cref{app:proofs-cor-invariant-rep}.
This calculation promotes to a corollary of the parameterization:

\begin{corollary}[Invariant representation as a structural consequence]\label{cor:invariant-rep}
Under the parameterization \Cref{eq:abduction-param,eq:deduction-param} with shared deduction \Cref{eq:deduction-invariance}, the transplant-basis conditions \Cref{eq:transplant-basis}, the closure condition \Cref{eq:closure-transport-representations}, and the distributional-match condition $P^s(W_A\mid W_D) = P^{s'}(W_A\mid W_D\circ T_{s\to s'})$ across domains, the deduction component $W_D := (I_d - LA^\top)\phi(X)$ (with $A$ any matrix satisfying $A^\top L = I_r$, e.g.\ any $A_s$) is an invariant representation:
\begin{equation*}
    P^s(Y\mid W_D) \;=\; P^{s'}(Y\mid W_D), \qquad \forall s,s'\in\{1,\dots,K,\star\}.
\end{equation*}
\end{corollary}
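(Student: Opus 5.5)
The plan is to formalize the four-line calculation displayed just before \Cref{cor:invariant-rep}, treating each equality as a separate lemma-level step. Throughout, fix $s,s'$, a matrix $A$ with $A^\top L=I_r$, and the projector $\pi_A:=LA^\top$, and write $W=\phi(X)=W_A+W_D$ as in \Cref{eq:phi-decomp}.

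\textbf{Step 1: the transplant fixes the deduction coordinate.} First I would check that $T_{s\to s'}$ acts on $W$ only through $\range(L)$. Indeed $T_{s\to s'}w-w=L(A_s^\top-A_{s'}^\top)w\in\range(L)$. Because $(I_d-\pi_A)L=L-L(A^\top L)=0$, the $W_D$-coordinate satisfies $W_D\circ T_{s\to s'}=W_D$. Consequently, for fixed $w_D$ the map $w_A\mapsto w_A':=\pi_A T_{s\to s'}(w_D+w_A)$ sends the fiber $\{W_D=w_D\}$ to itself and satisfies $T_{s\to s'}(w_D+w_A)=w_D+w_A'$. It is affine in $w_A$ with linear part $\pi_A(I_d+L(A_s^\top-A_{s'}^\top))$ restricted to $\range(L)$. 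In the coordinates $w_A=Lc$ this becomes $c\mapsto c+(A_s^\top-A_{s'}^\top)Lc+\text{const}$, which equals $c+\text{const}$ by \Cref{eq:transplant-basis}. It is therefore a translation, hence a bijection of the fiber. This supports the paper's \Cref{rem:T-preserves-WD} without needing finiteness beyond what is required to write sums.

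\textbf{Step 2: the cross-domain identity.} Next I would invoke \Cref{lem:multi-bio}, i.e.\ \Cref{eq:transplant-A-property,eq:transplant-D-property}. Combined with \Cref{eq:abduction-param,eq:deduction-param}, the shared $D$ from \Cref{eq:deduction-invariance}, and the entanglement \Cref{eq:AD-entanglement}, it gives
\[
P^s(y_1\mid W=w)=\softmax(A_s^\top w)^\top\sigmoid(D^\top w)=\softmax(A_{s'}^\top T_{s\to s'}w)^\top\sigmoid(D^\top T_{s\to s'}w)=P^{s'}(y_1\mid W=T_{s\to s'}w).
\]
The closure condition \Cref{eq:closure-transport-representations} guarantees that whenever $w$ is in the $P^s$-support, $T_{s\to s'}w$ is in the $P^{s'}$-support, so both conditionals are defined. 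One subtlety needs care here. The prediction-invariance paragraph is phrased at the level of $x$, so I would pass to conditioning on $W$. I would treat the parameterization as exact and depending on $x$ only through $\phi(x)$; then $P^s(y_1\mid W=w)$ equals the same expression for every $x\in\phi^{-1}(w)$.

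\textbf{Step 3: assemble.} Expand $P^s(y_1\mid w_D)=\sum_{w_A}P^s(y_1\mid w_D+w_A)P^s(w_A\mid w_D)$ and substitute Step 2. Then change variables $w_A\mapsto w_A'$ using Step 1, so each summand becomes $P^{s'}(y_1\mid w_D+w_A')$. Finally, the distributional-match condition, read through Step 1 as the pushforward of $P^s(\cdot\mid w_D)$ under $w_A\mapsto w_A'$ being $P^{s'}(\cdot\mid w_D)$, turns the weights into $P^{s'}(w_A'\mid w_D)$. The total-probability formula in $s'$ then closes the chain. The binary case suffices, since $y_0$ follows by complement.

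The main obstacle is Step 3's use of the distributional-match condition. Its stated form, $P^s(W_A\mid W_D)=P^{s'}(W_A\mid W_D\circ T)$, has to be interpreted precisely as a pushforward statement on the fiber, and the interpretation must be consistent with $W_D\circ T=W_D$. My first attempt would be to \emph{define} the condition as that pushforward equality, then verify it is what the relabeling needs. I would also check that the choice of $A$ (any $A_s$) is immaterial, because Step 1 holds for every left-inverse $A$ of $L$.
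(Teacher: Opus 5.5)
Your proposal is correct and follows the paper's proof essentially step for step. Your Step~1 is \Cref{rem:T-preserves-WD} ($T_{s\to s'}$ fixes $W_D$ and translates each $W_A$-fiber), and Steps~2--3 are the paper's total-probability expansion, cross-domain identity via \Cref{lem:multi-bio}, and relabeling on the finite fiber; your choice to define distributional match as $P^{s'}(\cdot\mid w_D)=(T_{s\to s'})_\#P^{s}(\cdot\mid w_D)$ is exactly the direction the relabeling needs (the literal form collapses because $W_D\circ T_{s\to s'}=W_D$, and \Cref{ass:dist-match} writes the pushforward the other way round), so keep that definition explicit.
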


Existing work on invariance learning~\citep{arjovsky2019invariant} can be interpreted as attempts to directly recover this deduction component. We note, however, that the deduction component alone is non-identifiable and insufficient for prediction without re-coupling to an abduction; the invariance property here is a structural consequence of the abduction--deduction parameterization and closure under transplant, not an independent learning principle. \Cref{cor:invariant-rep} is not used by the algorithm in \Cref{sec:algorithm}; it situates our parameterization relative to invariance-based DG. The next subsection turns the same machinery in the opposite direction, using the transplant to \emph{generate} candidate target distributions rather than to extract an invariant representation.

\subsection{Plausible-target family and surrogate risk}
\label{sec:plausible-target}

The ground-truth parameters $A_1,\dots,A_K,A_\star,D,L$ are not identified from observational source data alone. We treat them as parameters: tuples $(\{\tilde A_s\},\tilde L)$ with $\tilde A_s^\top \tilde L = I_r$ for every $s\in\{1,\dots,K,\star\}$ and $D^\top \tilde L = 0$, together with parameterized transplants $\tilde T_{s\to s'}$ via \Cref{eq:rep-transport-map}, produce a family of plausible target distributions, each consistent with the source-trained heads. Specifically, applying the same identity used in \Cref{eq:properties-of-w-s-0-D,eq:properties-of-w-s-0-A} to the target,
\begin{equation}\label{eq:w-star-0}
    P^\star(y_1\mid X = x_0^\star) = P^s\big(y_1\mid X\in\phi^{-1}(T_{\star\to s}\phi(x_0^\star))\big),\qquad\forall s\in\{1,\dots,K\},
\end{equation}
which requires \Cref{eq:closure-transport-representations} extended to the target. \Cref{eq:w-star-0} reads target-to-source: it ties a target prediction to a source prediction at the pre-image under $T_{\star\to s}$. The construction below uses the inverse direction, transplanting source representations \emph{into} a target distribution; the two are related by $T_{s\to\star}=T_{\star\to s}^{-1}$ (unipotence; \Cref{lem:multi-bio}).

\paragraph{Closure penalty.} Closure across source pairs is enforceable from data. We penalize it via the Hausdorff distance between actual and transplanted source supports:
\begin{equation}\label{eq:match}
    \mathrm{Closure}(\{\tilde A_s\},\tilde L) := \sum_{s,s'\in\{1,\dots,K\}} \max_{x\in\supp_{P^s}(X)} \min_{x'\in\supp_{P^{s'}}(X)} \big\|\phi(x') - \tilde T_{s\to s'}\phi(x)\big\|.
\end{equation}
Closure to the target regime cannot be enforced empirically because target $X$ is unobserved; it must be assumed.

\paragraph{Fit constraint.} For a candidate target head $\tilde g^\star:\R^d\to[0,1]$ on top of the frozen $\phi$, \Cref{eq:w-star-0} ties source supports to target supports: $\tilde g^\star$ is consistent with $(\{\tilde A_s\},\tilde L)$ if it minimizes cross-entropy on transformed source data,
\begin{equation}\label{eq:loss-grand}
    \mathrm{Fit}(\tilde g^\star,\{\tilde A_s\},\tilde L) := \sum_{s\in\{1,\dots,K\}} \E_{X,Y\sim P^s}\big[\mathrm{CE}\big(\tilde g^\star(\tilde T_{s\to\star}\phi(X)), Y\big)\big].
\end{equation}

\paragraph{Surrogate target distribution and risk.} In standard DG the learner has no target data---not even unlabeled $X$ samples from $P^\star(X)$---so direct evaluation of target risk is unavailable. The transplant identity \Cref{eq:w-star-0} ties source supports to target supports, motivating evaluation on a surrogate $\tilde P^\star(X,Y) = \tilde P^\star(X) \cdot P^\star(Y\mid X)$ with $\supp_{\tilde P^\star}(X) = \supp_{P^\star}(X)$. With discrete support and a complex-enough hypothesis class, $\argmin_{h:\cX\to\cY} R_{\tilde P^\star}(h) = \argmin_{h:\cX\to\cY} R_{P^\star}(h)$, so minimizing surrogate risk is a valid proxy for minimizing target risk. A practical choice for $\tilde P^\star(X)$ is to apply transplant to source representations:
\begin{align}
    \tilde P^\star(\phi(x), y)
    &:= \tilde P^\star(\phi(x))\cdot P^\star(y_1\mid \phi(x)) \nonumber\\
    &= P^s(\tilde T_{s\to\star}\phi(X))\cdot \tilde g^\star(\tilde T_{s\to\star}\phi(X)). \label{eq:surrogate-distribution}
\end{align}
This is consistent with \Cref{eq:loss-grand}. The \emph{surrogate target risk} of a classifier $h\in\cH$ is then
\begin{equation}\label{eq:risk-grand}
    \mathrm{Risk}(h;\tilde g^\star,\{\tilde A_s\},\tilde L) := \E_{W\sim\tilde P^\star(\phi(X)),\,Y\sim\tilde g^\star(W)}\big[\ell(h(W), Y)\big].
\end{equation}
The two heads play distinct roles: $\tilde g^\star$ is the \emph{adversary's} candidate target head---part of the transplant parameterization $(\tilde g^\star,\{\tilde A_s\},\tilde L)$ that synthesizes a plausible target $(X,Y)$ distribution---while $h\in\cH$ is the \emph{learner's} classifier evaluated against that synthesized distribution. The learner's objective therefore depends on the adversary's choice of head through $\tilde P^\star$.

\paragraph{Best worst-case objective.} We upper-bound the surrogate target risk of a classifier $h$ by maximizing the risk while encouraging $\mathrm{Fit}$ optimality and penalizing $\mathrm{Closure}$:
\begin{equation}\label{eq:obj}
    \mathrm{Obj}\big(\tilde g^\star,\{\tilde A_s\},\tilde L; h\big) := \mathrm{Risk}(h;\tilde g^\star,\{\tilde A_s\},\tilde L) - \lambda_1\cdot\mathrm{Fit}(\tilde g^\star,\{\tilde A_s\},\tilde L) - \lambda_2\cdot\mathrm{Closure}(\{\tilde A_s\},\tilde L),
\end{equation}
with $\lambda_1,\lambda_2$ regularization hyperparameters controlling how strictly the theoretical constraints are imposed. We seek $h\in\cH$ minimizing the surrogate-risk upper bound:
\begin{equation}\label{eq:minimax}
    h^\star \;\in\; \argmin_{h\in\cH}\;\mathrm{Risk}\big(h;\,\argmax_{\tilde g^\star,\{\tilde A_s\},\tilde L}\mathrm{Obj}(\tilde g^\star,\{\tilde A_s\},\tilde L; h)\big).
\end{equation}
\Cref{sec:algorithm} presents an algorithm that solves this min-max optimization and a guarantee on its solution.

\section{Algorithm and Guarantee}
\label{sec:algorithm}

\subsection{A learner--adversary game}
\label{sec:learner-adversary}

\begin{algorithm}[t]
\caption{Causal Robust Optimization (CRO)}
\label{alg:CRO}
\begin{algorithmic}[1]
\small
\REQUIRE Empirical sources $\hat P^1,\dots,\hat P^K$; representation $\phi$; hypothesis class $\cH$; hyperparameters $\lambda_1,\lambda_2$; tolerance $\epsilon>0$.
\ENSURE Classifier $h^\star \in \cH$.
\STATE $\hat\cP^\star \gets \emptyset$;\; $h_0 \gets$ ERM on pooled $\hat P^1,\dots,\hat P^K$;\; $t\gets 0$;\; $M_0\gets-\infty$.
\REPEAT
    \STATE $t\gets t+1$.
    \STATE $\big(\hat g^\star_t,\{\hat A^{(t)}_s\},\hat L_t\big) \in \argmax_{\tilde g^\star,\{\tilde A_s\},\tilde L}\widehat{\mathrm{Obj}}(\tilde g^\star,\{\tilde A_s\},\tilde L; h_{t-1})$.
    \STATE Form $\hat P^\star_t$ via \Cref{eq:surrogate-distribution};\; $\hat\cP^\star \gets \hat\cP^\star\cup\{\hat P^\star_t\}$.
    \STATE Learner: $h_t \in \argmin_{h\in\cH}\max_{\hat P\in\hat\cP^\star} R_{\hat P}(h)$.
    \STATE $M_t \gets \max_{\hat P\in\hat\cP^\star} R_{\hat P}(h_t)$.
\UNTIL $t\geq 2$ \textbf{and} $|M_t - M_{t-1}|\leq\epsilon$.
\RETURN $h_t$.
\end{algorithmic}
\end{algorithm}

The transplant parameterization of \Cref{sec:transplant} yields, for any $h$, an upper bound on the surrogate target risk by searching over abduction--deduction pairs consistent with the source distributions. To find $h^\star$ minimizing this bound (\Cref{eq:minimax}), we adapt the Causal Robust Optimization (CRO) algorithm of~\citet{jalaldoust2024partial}: a game between a learner and an adversary in which, at iteration $t$, the learner proposes a candidate $h_t \in \cH$ and the adversary uses the transplant parameterization to find a \emph{hard} plausible target distribution $\hat P^\star_t$ on which $h_t$'s risk is large. The learner adds $\hat P^\star_t$ to a growing collection $\hat\cP^\star = \{\hat P^\star_1,\dots,\hat P^\star_t\}$ and updates $h_t$ to minimize the maximum risk over the collection. The game terminates when this maximum risk converges. In practice we work with empirical distributions $\hat P^1,\dots,\hat P^K$ and write $\widehat{\mathrm{Obj}}$ for the plug-in estimator of $\mathrm{Obj}$. \Cref{alg:CRO} gives the pseudocode.

The adversary step (line 4), at fixed $h_{t-1}$, bounds the surrogate target risk by maximizing over transplant parameters subject to the source-fit term $\mathrm{Fit}$ (\Cref{eq:loss-grand}) and the closure penalty $\mathrm{Closure}$ (\Cref{eq:match}). The learner step (line 6) refines $h$ to make this bound tight. The two steps together implement the min-max in \Cref{eq:minimax}.

\paragraph{Frozen vs.\ refined predictors.} If $h$ is frozen at the initial pooled-source ERM and the outer learner step is omitted, CRO becomes a pure partial evaluation of target risk for that fixed predictor. Including the learner step refines $h$ to minimize that bound, yielding the minimax guarantee below. The two regimes share the transplant-parameterized adversary; they differ only in whether $h$ is fixed or updated.

\subsection{Minimax optimality}
\label{sec:minimax}

The guarantee for CRO depends on the adversary's search space matching the family of plausible target distributions. We collect the five structural and regularity conditions used in the theorem.

\begin{assumption}[Confounded labels, multi-domain]\label{ass:confounded-multi}
The source and target domains induce the causal graph in \Cref{fig:SD-for-bow-graph}.
\end{assumption}

\begin{assumption}[Finite support]\label{ass:finite-support}
$Y$ is binary and $X$ has finite support.
\end{assumption}

\begin{assumption}[Invariant deduction]\label{ass:invariant-deduction}
There exists a coarsening $\tilde U \subset U$ such that $\vec{P}^s(y_1\mid x,\bm{\tilde u}) = \vec{P}^{s'}(y_1\mid x,\bm{\tilde u})$ for every $x\in\cX$ and $s,s'\in\{1,\dots,K,\star\}$.
\end{assumption}

\begin{remark}[When $f_Y$-invariance gives \Cref{ass:invariant-deduction}]\label{rem:fY-sufficiency}
Invariance of the structural label mechanism $f_Y$---i.e., $P^s(Y\mid X,U)=P^{s'}(Y\mid X,U)$---implies \Cref{ass:invariant-deduction} only when the coarsening is \emph{sufficient for $Y$ given $X$}: $P(Y\mid X,U)=P(Y\mid X,\tilde U)$, equivalently $Y\perp U\mid X,\tilde U$. Without this sufficiency the within-fiber marginalization $P^s(Y\mid X,\tilde U)=\sum_{u:\psi(u)=\tilde u}P(Y\mid X,U=u)\,P^s(U\mid X,\tilde U)$ inherits the domain-varying $\pi_s$ through $P^s(U\mid X,\tilde U)$, so $f_Y$-invariance and deduction invariance can come apart. \Cref{ass:invariant-deduction} is stated directly on the coarsening to sidestep this subtlety.
\end{remark}

\begin{assumption}[Linear abduction--deduction, multi-domain]\label{ass:linear-multi}
There exist $A_1,\dots,A_K,A_\star\in\R^{d\times r}$ (per-domain abduction parameters) and a single shared $D\in\R^{d\times r}$ (deduction parameter) such that for every $s\in\{1,\dots,K,\star\}$ and every $x\in\cX$,
\begin{equation*}
\vec{P}^s(\bm{\tilde u}\mid x) = \softmax(A_s^\top\phi(x)),
\qquad
\vec{P}^s(y_1\mid x,\bm{\tilde u}) = \sigmoid(D^\top\phi(x)),
\end{equation*}
both read as $r$-vectors indexed by $\tilde u\in\tilde\cU$ in a fixed ordering, with $\sigmoid$ applied entry-wise.
\end{assumption}

\begin{assumption}[Closure under transplant, multi-domain]\label{ass:closure-multi}
There exists $L\in\R^{d\times r}$ with $A_s^\top L = I_r$ for every $s\in\{1,\dots,K,\star\}$ and $D^\top L = 0$, such that for every $s,s'\in\{1,\dots,K,\star\}$,
\begin{equation*}
    \forall w\in\R^d:\;\; P^s\big(\phi(X) = w\big) > 0 \;\iff\; P^{s'}\big(\phi(X) = T_{s\to s'}\,w\big) > 0.
\end{equation*}
\end{assumption}

\begin{theorem}[Minimax-optimality of CRO over the plausible target set]\label{thm:multi-id}
    Suppose access to infinite source data, i.e., perfect samplers for $P^1(X,Y),\dots,P^K(X,Y)$, and that each gradient-based search in CRO (\Cref{alg:CRO}) attains its global optimum. Under \Cref{ass:confounded-multi,ass:finite-support,ass:invariant-deduction,ass:linear-multi,ass:closure-multi}, CRO terminates in finitely many iterations for every tolerance $\epsilon>0$, and the worst-case target risk of the returned classifier $h^\star \in \cH$---measured over the plausible target set entailed by the structural assumptions---converges to the minimax value as $\epsilon\to 0$:
\begin{equation*}
    \max_{\tilde P^\star\in\cP^\star(P^1,\dots,P^K)} R_{\tilde P^\star}(h^\star) \;\xrightarrow{\epsilon\to 0}\; \min_{h\in\cH}\;\max_{\tilde P^\star\in\cP^\star(P^1,\dots,P^K)} R_{\tilde P^\star}(h),
\end{equation*}
    where $\cP^\star(P^1,\dots,P^K)$ is the set of target distributions entailed by any $(K+1)$-tuple of SCMs compatible with \Cref{ass:confounded-multi,ass:finite-support,ass:invariant-deduction,ass:linear-multi,ass:closure-multi}.
\end{theorem}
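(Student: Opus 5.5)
The argument has two halves. The first shows that, with perfect samplers and globally optimal searches, the adversary in line 4 of \Cref{alg:CRO} ranges over exactly the plausible target set $\cP^\star(P^1,\dots,P^K)$. The second runs the standard convergence argument for the CRO learner--adversary game of \citet{jalaldoust2024partial} over that set.

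\textbf{Step 1: soundness.} Take any $(K+1)$-tuple of SCMs compatible with \Cref{ass:confounded-multi,ass:finite-support,ass:invariant-deduction,ass:linear-multi,ass:closure-multi}. Its true parameters $(\{A_s\},A_\star,D,L)$ satisfy \Cref{eq:transplant-basis}, and the closure penalty \Cref{eq:match} vanishes by \Cref{ass:closure-multi}. Combining the readout identities \Cref{eq:transplant-A-property,eq:transplant-D-property} (\Cref{lem:multi-bio}) with the entanglement \Cref{eq:AD-entanglement} gives the cross-domain identity \Cref{eq:w-star-0}. Hence the true target head $g^\star(w)=P^\star(y_1\mid \phi^{-1}(w))$ attains the minimal Fit value on transplanted source data, namely the source conditional entropy. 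So the induced surrogate \Cref{eq:surrogate-distribution} is feasible for the adversary with zero penalty.

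\textbf{Step 1: completeness.} Conversely, take any maximizer of Obj in the limit where the penalties are enforced as hard constraints ($\lambda_1,\lambda_2\to\infty$, or restricting to the zero-penalty, Fit-optimal face). From it I will construct an SCM tuple by the following recipe. Set $\tilde U$ on $r$ states. Define the abductions by $\softmax(\tilde A_s^\top\phi)$ and the deduction by $\sigmoid(D^\top\phi)$. Use the source marginals, and let the target marginal be supported on $\tilde T_{s\to\star}\supp_{P^s}$.

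This construction entails the observed source laws, by the Fit optimality and finite support (\Cref{ass:finite-support}). It also entails the adversary's target head on that support. Because surrogate and true target share support and conditionals, their risk minimizers coincide (the argmin remark before \Cref{eq:surrogate-distribution}). So up to the marginal on $X$ (over which the risk is maximized anyway), the adversary's feasible set equals $\cP^\star$.

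\textbf{Step 2: the game.} Let $V(h):=\max_{\tilde P\in\cP^\star}R_{\tilde P}(h)$ and $V^\star:=\min_{h}V(h)$. Since $\hat\cP^\star_t\subseteq\cP^\star$, the sequence $M_t$ is nondecreasing in $t$: the collection only grows, and $h_t$ is optimal for it. It is also bounded above by $V^\star$, since $M_t\le \max_{\hat\cP^\star_t}R(h^\dagger)\le V(h^\dagger)=V^\star$ for a minimax $h^\dagger$. A bounded monotone sequence converges, so $|M_t-M_{t-1}|\le\epsilon$ occurs after finitely many iterations, which gives termination.

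For the limit value, the adversary at step $t+1$ returns a $\hat P_{t+1}$ attaining $V(h_t)$. Then $M_{t+1}\ge R_{\hat P_{t+1}}(h_{t+1})$, and I need to relate this to $V(h_t)$. Cross-entropy with a finite $\cY$ and finite $\cX$ makes $R_P(h)$ continuous, in fact bilinear in $(P,\log h)$. The family $\cP^\star$ is compact (a closed subset of a simplex), so a compactness/accumulation argument applies. Along a subsequence $h_t\to\bar h$, and $\hat P_{t+1}$ is eventually approximated by elements already in the collection. This forces $V(\bar h)=\lim M_t\le V^\star$, hence equality.

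\textbf{Main obstacle.} I expect the hardest part to be making this continuity step rigorous. Cross-entropy is unbounded near the boundary of the simplex, so I would first restrict $\cH$ to heads whose outputs are clipped to $[\delta,1-\delta]$, or else argue that minimax heads stay bounded away from $0$ and $1$. I would also need compactness of $\cH$ in order to pass to the limit. The soft-penalty versus hard-constraint gap in Step 1 is a second delicate point. I would handle it by interpreting the idealized global optimization as exact constraint satisfaction.
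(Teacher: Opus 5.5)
Your architecture matches the paper's: identify the adversary's search space with $\cP^\star(P^1,\dots,P^K)$, then run the monotone, bounded $M_t$ argument. Your soundness direction is the paper's Step 1 (\Cref{prop:transplant-consistency-multi}) and is fine. Your completeness direction has a concrete hole.

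\textbf{Completeness.} Fit-optimality in \Cref{eq:loss-grand} pins the free head $\tilde g^\star$ only at the transplanted points $\tilde T_{s\to\star}\phi(x)$. It says nothing about $\softmax(\tilde A_s^\top\phi(x))^\top\sigmoid(\tilde D^\top\phi(x))$, which is the source conditional your constructed SCM actually entails. So neither ``entails the observed source laws'' nor ``entails the adversary's target head'' follows.

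The paper avoids this by construction. It defines $\cP^\star$ parametrically through (P1)--(P7) in \Cref{def:plausible-set}, imposes (P4) directly on $(\tilde A_s,\tilde D)$, and sets the head to the readout $\tilde g^\star(W)=\softmax(\tilde A_\star^\top W)^\top\sigmoid(\tilde D^\top W)$. If you tie $\tilde g^\star$ to that readout, \Cref{eq:transplant-A-property,eq:transplant-D-property} turn Fit-optimality into (P4), and your construction goes through.

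Your ``up to the marginal on $X$'' step is also not valid as stated, for two reasons:
\begin{itemize}
\item The argmin coincidence before \Cref{eq:surrogate-distribution} concerns unconstrained Bayes predictors, not worst-case risk over a restricted $\cH$.
\item The adversary does not maximize over marginals; its marginal is the pushforward of the source marginals.
\end{itemize}
The paper does not resolve this either; it simply treats $\cP^\star$ as the image of the parameter map.

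\textbf{Convergence.} Here your route genuinely differs from the paper's and is the sounder one. \Cref{thm:min-max-conv} asserts $M_{t+1}\ge\max_{P\in\hat\cP_{t+1}}R_P(h_{t+1})\ge R_{P_{t+1}}(h_t)$, and the main proof uses this as $\max_{\cP^\star}R(h_t)\le M_{t+1}$. But only $M_{t+1}\ge R_{P_{t+1}}(h_{t+1})$ holds. Combined with $M_{t+1}\le V^\star$, the paper's inequality would make every iterate minimax, including the ERM start $h_0$. You correctly saw that $M_{t+1}$ must be related to $V(h_t)$ by a separate argument.

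Your compactness fix needs bounded loss, e.g.\ heads clipped to $[\delta,1-\delta]$ as in \Cref{thm:robustness}, but it does not need compactness of $\cH$:
\begin{itemize}
\item Since $\cP^\star$ is compact, for any $\eta>0$ only finitely many $P_{t+1}$ are $\eta$-far in $\ell_1$ from all of $P_1,\dots,P_t$.
\item For every other $t$ there is some $s\le t$ with $P_s$ within $\eta$ of $P_{t+1}$, so $V(h_t)=R_{P_{t+1}}(h_t)\le R_{P_s}(h_t)+C\eta\le M_t+C\eta$, with $C=\log(1/\delta)$.
\item Since $M_t\le V^\star\le V(h_t)$, this gives $V(h_t)\to V^\star$ along the whole sequence.
\end{itemize}

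One gap remains, and you share it with the paper, which merely asserts $t(\epsilon)\to\infty$. The returned classifier is $h_{t(\epsilon)}$, and the rule $|M_t-M_{t-1}|\le\epsilon$ certifies nothing by itself. You need two cases:
\begin{itemize}
\item If all increments are positive, then $t(\epsilon)\to\infty$ and the limit above applies.
\item A zero increment forces $h_t=h_{t-1}$ to be minimax. This uses strict convexity of cross-entropy on the common target support together with convexity of $\cH$, which make the minimizer of each $\max_{P\in\hat\cP_t}R_P$ unique.
\end{itemize}
Without such uniqueness, the algorithm can stall at a non-minimax iterate.
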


The proof, deferred to \Cref{app:proofs-multi-id}, follows from the partial-transportability identity \Cref{eq:w-star-0} together with the convergence argument adapted from~\citet{jalaldoust2024partial}. The statement is best read as a conditional guarantee: \emph{if} the structural assumptions \Cref{ass:confounded-multi,ass:finite-support,ass:invariant-deduction,ass:linear-multi,ass:closure-multi} hold, CRO is minimax-optimal over the parameterized plausible set; the guarantee does not claim DG optimality outside this set.

\begin{remark}[Target-side closure is not empirically verifiable]\label{rem:target-closure-caveat}
The target component of \Cref{ass:closure-multi} (at $s'=\star$) is structural, not statistical: target $X$ is unobserved, so the empirical Hausdorff penalty (\Cref{eq:match}) enforces closure only across source pairs. The conclusion of \Cref{thm:multi-id} is therefore conditional on target-side closure as an assumption rather than as a tested condition; we return to this caveat in the limitations.
\end{remark}

\paragraph{Role of the assumptions.} The five assumptions divide as follows. \Cref{ass:confounded-multi,ass:invariant-deduction} are structural: the same confounded $X\to Y$ mechanism across domains, with deduction held invariant. \Cref{ass:finite-support} is a technical regularity. \Cref{ass:linear-multi} is the parameterization choice---softmax-linear abduction and logit-linear (shared-$D$) deduction in the pretrained representation space, which is what makes a small set of $(A_s, D, L)$ parameters expressive. \Cref{ass:closure-multi} is the load-bearing positivity-style condition: every transplant has a same-domain pre-image, including across the (unseen) target. The penalty in \Cref{eq:match} enforces a finite-sample analog of closure across source pairs only; closure to the target regime is assumed.

\subsection{Robustness and finite-sample considerations}
\label{sec:robustness}

\Cref{thm:multi-id} is an asymptotic statement over a structurally idealized set: infinite source data, exact closure, and a global oracle for the inner searches. Two natural concerns are how the guarantee degrades when (i) closure holds only approximately---the unverifiable target-side condition of \Cref{rem:target-closure-caveat}---and (ii) source data is finite. We address each with a short statement; full proofs are in \Cref{app:robustness}.

\begin{definition}[$\epsilon$-relaxed closure]\label{def:eps-closure}
For $\epsilon\ge 0$, the $\epsilon$-relaxed version of \Cref{ass:closure-multi} requires the following: for every $s,s'\in\{1,\dots,K,\star\}$ and every $w\in\R^d$ with $P^s(\phi(X)=w)>0$, there exists $w'\in\R^d$ with $P^{s'}(\phi(X)=w')>0$ and $\|w'-T_{s\to s'}w\|\le\epsilon$, and symmetrically with $s,s'$ swapped. Exact closure is the case $\epsilon=0$.
\end{definition}

Let $\cP^\star_\epsilon$ denote the plausible target set under $\epsilon$-relaxed closure (\Cref{ass:closure-multi} replaced by \Cref{def:eps-closure}), and let $V^\star_\epsilon:=\min_{h\in\cH}\max_{\tilde P^\star\in\cP^\star_\epsilon}R_{\tilde P^\star}(h)$ be its minimax value. Since the relaxed condition admits a strict superset of distributions ($\cP^\star_0\subseteq\cP^\star_\epsilon$), $V^\star_\epsilon\ge V^\star_0$. The next theorem bounds the gap.

\begin{theorem}[Graceful degradation under approximate closure]\label{thm:robustness}
Suppose \Cref{ass:confounded-multi,ass:finite-support,ass:invariant-deduction,ass:linear-multi} hold, and that $\cH$ consists of $L_h$-Lipschitz heads $h:\R^d\to[\delta,1-\delta]$ for some $\delta\in(0,1/2)$; the loss is cross-entropy. Then
\begin{equation}\label{eq:robustness-bound}
0\;\le\;V^\star_\epsilon - V^\star_0\;\le\;\frac{L_h}{\delta}\cdot\epsilon.
\end{equation}
\end{theorem}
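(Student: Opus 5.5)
The lower bound $V^\star_\epsilon\ge V^\star_0$ is immediate from $\cP^\star_0\subseteq\cP^\star_\epsilon$: for every $h$, the inner maximum over the larger set dominates, and taking $\min_h$ preserves the inequality. The content is the upper bound. I would prove it by a \emph{coupling/projection} argument. For every $\tilde P\in\cP^\star_\epsilon$ I exhibit some $\tilde P_0\in\cP^\star_0$ such that, for every $h\in\cH$,
\[
R_{\tilde P}(h)\le R_{\tilde P_0}(h)+\tfrac{L_h}{\delta}\epsilon .
\]
Then $\max_{\cP^\star_\epsilon}R(h)\le\max_{\cP^\star_0}R(h)+\tfrac{L_h}{\delta}\epsilon$ for each $h$, and minimizing over $h$ on both sides yields \Cref{eq:robustness-bound}.

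The construction of $\tilde P_0$ uses the representation-level description of plausible targets. By \Cref{ass:linear-multi} and \Cref{lem:multi-bio}, a plausible target is determined by a transplant $T_{s\to\star}$ and by source data pushed through it; compare the surrogate \Cref{eq:surrogate-distribution}. Under $\epsilon$-closure, each target support point $w'$ lies within $\epsilon$ of a transplanted source point $T_{s\to\star}w$. Under exact closure, the point $T_{s\to\star}w$ itself is an admissible target support point carrying the same label law, by the cross-domain identity $P^s(y_1\mid w)=P^\star(y_1\mid T_{s\to\star}w)$. So I define a map $\pi:w'\mapsto T_{s\to\star}w$, chosen as a nearest transplanted point. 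Since $X$ has finite support (\Cref{ass:finite-support}), this choice is measurable and trivial to make. I set $\tilde P_0$ to be the pushforward of $\tilde P$ under $(w',y)\mapsto(\pi(w'),y)$.

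Two things then need checking. First, the label conditionals must match, i.e.\ $\tilde P(y\mid w')=\tilde P_0(y\mid\pi(w'))$. Both are obtained from the same source conditional $P^s(y\mid w)$, via the readout identities \Cref{eq:transplant-A-property,eq:transplant-D-property}, so the label law is carried along unchanged. Second, $\tilde P_0$ must belong to $\cP^\star_0$; this holds because its support is exactly the transplanted source support.

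The loss comparison is the routine step. For $h$ with values in $[\delta,1-\delta]$, the maps $p\mapsto-\log p$ and $p\mapsto-\log(1-p)$ are $1/\delta$-Lipschitz on that interval. Hence, for every $y$,
\[
|\ell(h(w'),y)-\ell(h(\pi w'),y)|\le \tfrac1\delta|h(w')-h(\pi w')|\le\tfrac{L_h}{\delta}\|w'-\pi w'\|\le\tfrac{L_h}{\delta}\epsilon .
\]
Integrating over the coupling $(W',Y)\mapsto(\pi W',Y)$ gives the desired risk inequality uniformly in $h$.

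I expect the main obstacle to be the first of the two checks: showing that $\tilde P_0$ is a genuine element of $\cP^\star_0$, induced by a compatible $(K+1)$-tuple of SCMs that satisfies exact closure for \emph{all} pairs $s,s'$, not just source-to-target. I would try to define the target SCM so that $f_Y$ is shared, the target abduction parameter $A_\star$ is unchanged, and only $f_X^\star$ (equivalently the prior $\pi_\star$) is redefined so that $\phi(X)$ has the snapped support. Exact closure among the sources would be inherited, since snapping affects only the target. Closure $\star\leftrightarrow s$ would then hold by construction, because the support of $\tilde P_0$ is precisely $T_{s\to\star}$ applied to the support of $P^s$, and $T_{s\to\star}^{-1}=T_{\star\to s}$ by unipotence. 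One subtlety needs separate care: the snapped support must be consistent simultaneously for every source $s$, which requires $T_{s\to\star}\supp P^s$ to agree across $s$. I would handle this by appealing to exact closure among the source pairs together with the composition identity $T_{s'\to\star}T_{s\to s'}=T_{s\to\star}$, which follows from $A_{s'}^\top L=I_r$.
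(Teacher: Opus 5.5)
\textbf{Where the proposal stands.} Your skeleton is the paper's: the lower bound from $\cP^\star_0\subseteq\cP^\star_\epsilon$, a label-preserving coupling that moves each atom by at most $\epsilon$, the $1/\delta$-Lipschitz bound for cross-entropy on $[\delta,1-\delta]$, then $\sup$ and $\min_h$. The paper snaps in the other direction (transplanted source atoms onto target-support atoms) and cites Kantorovich--Rubinstein instead of an explicit coupling. You correctly locate the crux, $\tilde P_0\in\cP^\star_0$, which the paper simply asserts. But your argument for it fails, and the failure is substantive.

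\textbf{The label check conflates two readouts.} By (P7), $\tilde P(y_1\mid w')=q(w')$ with $q(w):=\softmax(\tilde A_\star^\top w)^\top\sigmoid(\tilde D^\top w)$. By contrast, \Cref{lem:multi-bio} and (P4) identify $q(\pi(w'))=q(\tilde T_{s\to\star}w)$ with $P^s(y_1\mid w)$. For $w'\neq\pi(w')$ these differ, so your pushforward violates (P7) at $\pi(w')$. No other partner avoids this. For any exact-closure witness, (P4)--(P7) and \Cref{lem:multi-bio} force the conditional at each target atom to equal $P^s(y_1\mid w)$, where $w$ is the source atom it is transplanted from. So a coupling into $\cP^\star_0$ must in general also move labels. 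That costs a term of order $\mathrm{Lip}(q)\log\frac{1-\delta}{\delta}\,\epsilon$, governed by the norm bound $B$ of (P1), not by $L_h$.

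This term cannot be dropped. Take $\cH$ to be constant heads, so $L_h$ can be taken arbitrarily small and the stated bound would force $V^\star_\epsilon=V^\star_0$. Every element of $\cP^\star_0$ has label rate in the convex hull of $\{P^s(y_1\mid w)\}$. An $\epsilon$-closure witness, however, can place a target atom off the transplanted point with a readout outside that hull, which can strictly raise the constant-head minimax value.

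\textbf{Source--source closure is not inherited.} \Cref{def:eps-closure} and the appendix definition of $\cP^\star_\epsilon$ relax closure for every pair in $\{1,\dots,K,\star\}$, source pairs included. The sources are fixed data, so editing the target cannot restore (P6) among them. The composition identity then only makes the sets $\tilde T_{s\to\star}\supp_{P^s}(\phi(X))$ agree across $s$ up to $O(\|\tilde T\|_{\mathrm{op}}\,\epsilon)$. Hence no single snapped support is exactly closed with all sources. You would have to change $(\{\tilde A_s\},\tilde L)$ subject to (P4)--(P5), an existence question the hypotheses (which omit \Cref{ass:closure-multi}) do not settle; $\cP^\star_0$ may even be empty.

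\textbf{Two smaller points.} \Cref{def:eps-closure} controls $\|w-\tilde T_{\star\to s}w'\|$ in source space, not $\|w'-\tilde T_{s\to\star}w\|$. So your displacement is only bounded by $\|\tilde T_{s\to\star}\|_{\mathrm{op}}\,\epsilon$, not $\epsilon$. Also, $\pi$ need not be onto the transplanted support, though this is repairable by adding vanishing extra mass.

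\textbf{What your construction does prove.} Suppose closure is relaxed only at pairs involving $\star$. Keep the witness parameters, place $\tilde P_0$ on $\tilde T_{s\to\star}\supp_{P^s}(\phi(X))$ with conditional $q$, and couple through $\pi$. Using $\|\tilde T_{s\to\star}\|_{\mathrm{op}}\le 1+2B^2$ and $\mathrm{Lip}(q)\le\frac54 B$ from (P1), this gives $V^\star_\epsilon-V^\star_0\le(1+2B^2)\bigl(\frac{L_h}{\delta}+\frac{5B}{4}\log\frac{1-\delta}{\delta}\bigr)\epsilon$.
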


The bound is linear in the closure residual $\epsilon$, so CRO's worst-case target risk degrades smoothly rather than catastrophically as closure fails. The constant $L_h/\delta$ surfaces the two practitioner-facing knobs: head Lipschitzness on the representation space, and a confidence floor that keeps cross-entropy away from its singularity. The proof, in \Cref{app:robustness-proof}, couples each ``relaxed-closure'' transplant target to an exact-closure one within $\ell^2$-distance $\epsilon$ and propagates through the loss.

\paragraph{Finite-sample plug-in error.} The minimax-optimality of \Cref{thm:multi-id} assumes infinite source data; the algorithm in practice uses empirical $\hat P^s$ with $N$ samples per source. A uniform-convergence argument over the (compact) parameter space $\Theta_B$ of Step~3 of the \Cref{thm:multi-id} proof gives the following.

\begin{proposition}[Finite-sample plug-in error, informal]\label{prop:finite-sample}
Let $\hat h$ be the output of \Cref{alg:CRO} run on empirical $\hat P^1,\dots,\hat P^K$ of size $N$ per source, and let $\Theta_B$ be the parameter space of Step~3 of the \Cref{thm:multi-id} proof, with metric entropy $\log\mathcal{N}(\Theta_B,\eta)$ at scale $\eta$. Under \Cref{ass:confounded-multi,ass:finite-support,ass:invariant-deduction,ass:linear-multi,ass:closure-multi}, with probability at least $1-\delta$,
\begin{equation}\label{eq:finite-sample-bound}
\max_{\tilde P^\star\in\cP^\star}R_{\tilde P^\star}(\hat h)
\;\le\;V^\star_0
\;+\;\underbrace{C_1\sqrt{\frac{\log\mathcal{N}(\Theta_B,\eta)+\log(1/\delta)}{N}}}_{\text{statistical error}}
\;+\;\underbrace{\epsilon_{\mathrm{opt}}}_{\text{optimization}}
\;+\;\underbrace{\epsilon_{\mathrm{Haus}}}_{\text{closure residual}}
\;+\;\eta,
\end{equation}
for an absolute constant $C_1$ depending on the cross-entropy Lipschitz constant $1/\delta$ and the operator-norm bound $B$ in \textnormal{(P1)}, where $\epsilon_{\mathrm{opt}}$ absorbs the suboptimality of the inner adversary and learner searches relative to their global optima, and $\epsilon_{\mathrm{Haus}}=\mathrm{Closure}(\hat A_s,\hat L)$ is the residual empirical Hausdorff penalty at the returned parameters.
\end{proposition}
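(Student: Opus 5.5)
The plan is to decompose the excess worst-case risk of $\hat h$ into four gaps, one for each error term in \eqref{eq:finite-sample-bound}, and bound each separately. Write $\Theta_B$ for the compact set of adversary parameters $\theta=(\tilde g^\star,\{\tilde A_s\},\tilde L)$ with operator norms at most $B$ as in (P1). Let $\Phi(h,\theta)$ denote the population surrogate risk $\mathrm{Risk}(h;\theta)$ and $\hat\Phi(h,\theta)$ its plug-in version built from $\hat P^1,\dots,\hat P^K$. Write $\mathrm{Obj}$ and $\widehat{\mathrm{Obj}}$ for the corresponding objectives. Four facts are then needed:
\begin{itemize}
\item[(a)] uniform closeness of $\hat\Phi$ to $\Phi$ and of $\widehat{\mathrm{Obj}}$ to $\mathrm{Obj}$ over $\cH\times\Theta_B$;
\item[(b)] adversaries that are near-optimal for the empirical problem are near-optimal for the population problem;
\item[(c)] the plausible set indexed by $\Theta_B$ with exact closure reproduces $\cP^\star$;
\item[(d)] the terms $\epsilon_{\mathrm{opt}}$ and $\epsilon_{\mathrm{Haus}}$ enter additively.
\end{itemize}

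\textbf{Step 1 (uniform convergence).} Since $\cH$ consists of heads valued in $[\delta,1-\delta]$, cross-entropy is bounded by $\log(1/\delta)$ and is $(1/\delta)$-Lipschitz in the predicted probability. Every $\Theta_B$-parameterized transplant $\tilde T_{s\to\star}=I_d+\tilde L(\tilde A_s^\top-\tilde A_\star^\top)$ has operator norm bounded in terms of $B$. So under \Cref{ass:finite-support}, $\theta\mapsto\hat\Phi(h,\theta)$ and $\theta\mapsto\widehat{\mathrm{Fit}}(\theta)$ are Lipschitz in $\theta$ with a constant depending only on $B$, $1/\delta$, and $\max_x\|\phi(x)\|$.

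\begin{itemize}
\item Take an $\eta$-net of $\Theta_B$.
\item Apply Hoeffding's inequality to each net point, since $\hat\Phi$ and $\widehat{\mathrm{Fit}}$ are averages of bounded i.i.d.\ terms over $N$ samples per source.
\item Take a union bound over the net, then extend off the net by Lipschitzness.
\end{itemize}

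This gives, with probability at least $1-\delta$,
\[
\sup_{h,\theta}\big|\hat\Phi-\Phi\big|,\ \sup_\theta\big|\widehat{\mathrm{Fit}}-\mathrm{Fit}\big|\;\le\;C_1\sqrt{\tfrac{\log\mathcal N(\Theta_B,\eta)+\log(1/\delta)}{N}}+O(\eta).
\]
The $\eta$ is absorbed after rescaling the net scale. If $\cH$ is not covered by $\Theta_B$, its complexity is folded in by treating $h$ as fixed at each round, because the learner only minimizes over finitely many distributions.

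\textbf{Step 2 (adversary and learner).} Apply the argument of Step~3 of the \Cref{thm:multi-id} proof to the empirical game. The result is that the output $\hat h$ satisfies
\[
\max_{\theta\in\hat\Theta}\hat\Phi(\hat h,\theta)\;\le\;\min_h\max_{\theta\in\hat\Theta}\hat\Phi(h,\theta)+\epsilon_{\mathrm{opt}},
\]
where $\hat\Theta$ is the set of empirically fit-optimal, closure-penalized parameters.

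Next, transfer this to the population. By Step 1, the population-feasible set $\Theta^\star$ (exact Fit optimality and zero closure) is contained in an enlargement of $\hat\Theta$ at the statistical scale. The closure constraint for source pairs is only enforced up to the residual Hausdorff penalty $\epsilon_{\mathrm{Haus}}$ at the returned parameters. I would invoke the coupling argument from the proof of \Cref{thm:robustness} to turn this residual into an additive risk error of order $\epsilon_{\mathrm{Haus}}$; the Lipschitz constant $L_h/\delta$ is absorbed into the stated term. Chaining these inequalities in the order population, empirical, empirical, population gives the bound, with $V^\star_0$ reached through the identification $\{\Phi(\cdot,\theta):\theta\in\Theta^\star\}=\{R_{\tilde P^\star}:\tilde P^\star\in\cP^\star\}$ established in \Cref{thm:multi-id}.

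\textbf{Main obstacle.} The hard step is Step 2's set-stability claim. Argmax sets of penalized objectives are not continuous in general, so uniform closeness of $\widehat{\mathrm{Obj}}$ to $\mathrm{Obj}$ does not by itself bound $\max_{\Theta^\star}$ by $\max_{\hat\Theta}$.

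\begin{itemize}
\item \textbf{First attempt.} Replace exact argmax feasibility with an $\epsilon$-slack sublevel set $\{\theta:\mathrm{Fit}(\theta)\le\min\mathrm{Fit}+\tau\}$, choosing $\tau$ equal to twice the statistical error. Sublevel sets nest correctly under uniform perturbation: the population-optimal set lies inside the empirical $\tau$-sublevel set, which lies inside the population $2\tau$-sublevel set. The remaining gap between the $2\tau$-sublevel worst case and $\Theta^\star$ would then be controlled by continuity of $\Phi$ on compact $\Theta_B$.
\item \textbf{Fallback.} If no rate is available from compactness alone, I would state this slack as an extra assumption, namely a local growth condition on $\mathrm{Fit}$ around its minimizers. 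This is consistent with the proposition being labeled informal.
\end{itemize}
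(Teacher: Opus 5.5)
Your decomposition is the same as the paper's sketch. The paper splits the excess into (a) a uniform-convergence term over an $\eta$-cover of $\Theta_B$, evaluated at $\hat h$, and (b) $\widehat V-V^\star_0$, which it attributes entirely to $\epsilon_{\mathrm{opt}}$ and, via the $W_1$ coupling of \Cref{thm:robustness}, to $\epsilon_{\mathrm{Haus}}$. That sketch never confronts the fact that source fit is imposed through $\hat P^s$ rather than $P^s$. So the empirical and population feasible parameter sets differ for purely statistical reasons.

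\textbf{Set stability.} You are right that this is the crux, but your first attempt does not close it. It already modifies line~4 of \Cref{alg:CRO}: the exactly Fit-optimal empirical set need not contain $\Theta^\star$, so the adversary must range over a slack set $\hat\Theta_\tau$. Your sandwich $\Theta^\star\subseteq\hat\Theta_\tau\subseteq\Theta_{2\tau}$ then reduces everything to bounding $\min_h\max_{\Theta_{2\tau}}\Phi(h,\cdot)-V^\star_0\le\mathrm{Lip}_\theta(\Phi)\,d_H(\Theta_{2\tau},\Theta^\star)$.
\begin{itemize}
\item Compactness and continuity give only $d_H(\Theta_{2\tau},\Theta^\star)\to 0$, with no rate.
\item Your fallback growth condition does not rescue the displayed rate. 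A smooth cross-entropy $\mathrm{Fit}$ grows at most quadratically away from interior minimizers. So the best it yields is $d_H\le C\sqrt{\tau}$, which is an $N^{-1/4}$ term.
\item Along this route, the $N^{-1/2}$ form needs a \emph{linear} error bound $\mathrm{Fit}(\theta)-\min\mathrm{Fit}\ge c\,\mathrm{dist}(\theta,\Theta^\star)$, or else an explicit modulus term in the bound. Either is a hypothesis the proposition does not state.
\end{itemize}

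\textbf{The Hausdorff term.} The same near-feasible-versus-feasible problem reappears in your handling of $\epsilon_{\mathrm{Haus}}$.
\begin{itemize}
\item A small source-pair residual does not place the adversary's parameters near an exact-closure witness.
\item The coupling in the proof of \Cref{thm:robustness} only snaps transplanted points onto the target support. It neither produces such a witness nor addresses source pairs.
\item Moving a source-pair residual to the target side via $\tilde T_{s'\to\star}\tilde T_{s\to s'}=\tilde T_{s\to\star}$ costs a factor $1+2B^2$ on top of $L_h/\delta$.
\end{itemize}
So what you can honestly obtain is $C\,\epsilon_{\mathrm{Haus}}$, not a coefficient-one term. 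It must also control the residual of every distribution the adversary can return, not only the last one.

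\textbf{Step~1.} This step has a separate, easily repaired error. Hoeffding at the points of a net of $\Theta_B$, followed by a union bound, gives uniformity in $\theta$ only for a head fixed before the sample is drawn. The first link of your chain, from the population worst case at $\hat h$ to the empirical one, is evaluated at the data-dependent $\hat h$. That head is fit to a data-dependent collection $\hat\cP^\star$, so ``treating $h$ as fixed at each round'' is not legitimate. The last link is fine, because there you can compare at a fixed population minimax head. The paper's term (a) takes the same shortcut at $\hat h$.

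Under \Cref{ass:finite-support} you can avoid covering for this step altogether.
\begin{itemize}
\item Every integrand is bounded by some constant $M$. The risk is bounded because heads take values in a confidence band. $\mathrm{Fit}$ is bounded because (P1), together with $\max_x\|\phi(x)\|<\infty$, keeps the softmax/sigmoid outputs away from $0$ and $1$.
\item Hence $|\hat\Phi(h,\theta)-\Phi(h,\theta)|\le M\max_s\|\hat P^s-P^s\|_{\mathrm{TV}}$ holds simultaneously for all $(h,\theta)$, data-dependent ones included.
\item Cauchy--Schwarz on the mean plus McDiarmid gives $\|\hat P^s-P^s\|_{\mathrm{TV}}\le\frac12\sqrt{2|\cX|/N}+\sqrt{\log(K/\delta)/(2N)}$ for all $s$, with probability at least $1-\delta$.
\end{itemize}
This trades $\log\mathcal N(\Theta_B,\eta)$ for $|\cX|$. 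Alternatively, keep the cover and add $\log\mathcal N(\cH,\eta)$ to the entropy term.
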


The statement names the three terms a practitioner can read off the algorithm's outputs: a $1/\sqrt N$ statistical rate (with $\log\mathcal{N}(\Theta_B,\eta)$ playing the role of effective complexity), an optimization error governed by the global-optimum gap of the inner searches (zero under the assumption of \Cref{thm:multi-id}, but typically positive in practice), and the source-pair Hausdorff residual that the algorithm minimizes during training. Tight constants for the softmax/sigmoid-linear-in-$\phi$ class via bracketing entropy are standard and orthogonal to the paper's contribution; we defer them to future work.

\section{Related Work}
\label{sec:related-work}

We situate representation transplant and CRO with respect to four lines of prior work: causality-inspired domain generalization, distributional robustness, transportability and partial transportability, and linear interventions on neural representations. The transplant parameterization sits at their intersection: a linear operator on a frozen representation (lineage four) used as a primitive for partial transportability (lineage three) and operationalized via a learner--adversary game for worst-case DG (lineages one and two).

\paragraph{Causality-inspired domain generalization.} Causality-inspired DG methods build on related principles under different assumptions. Invariant prediction~\citep{peters2016causal} establishes structural conditions under which heterogeneous data partially identify the causal parents of $Y$ among covariates $X = (X_1,\dots,X_d)$, leveraging the invariance of source-prediction rules based on causal parents; \citet{rojascarulla2018invariant} extend this principle to causal transfer; IRM~\citep{arjovsky2019invariant} and V-REx~\citep{krueger2021out} operationalize related principles as regularized learning objectives. DomainBed-style empirical evaluations~\citep{gulrajani2021search}, however, have not validated meaningful gains over ERM, and the theoretical claims rely on assumptions---in particular the absence of unobserved confounding of $Y$---that may not hold. Our setting differs by treating the $X\to Y$ mechanism as confounded by $\tilde U$ and parameterizing the resulting non-identifiability via the abduction--deduction factorization (\Cref{sec:transplant}); this is structurally distinct from a parents-of-$Y$ recovery target. \Cref{cor:invariant-rep} recovers an invariant representation as a consequence of our parameterization rather than as a primitive learning goal.

\paragraph{Robustness and worst-case generalization.} Distributional robustness reformulates DG as a min-max over a family of distributions: Group DRO~\citep{sagawa2020distributionally} optimizes worst-case risk over annotated subgroups. CRO (\Cref{alg:CRO}) is a min-max of similar shape, but the family ranged over is structurally derived---the transplant-parameterized set of plausible targets entailed by source data and \Cref{ass:confounded-multi,ass:finite-support,ass:invariant-deduction,ass:linear-multi,ass:closure-multi}---rather than user-specified or annotation-driven. On the evaluation side, WILDS~\citep{koh2021wilds} curates real-world distribution-shift benchmarks that complement the DomainBed protocol~\citep{gulrajani2021search} used here.

\paragraph{Transportability and partial transportability.} The transportability literature~\citep{pearl2011transportability,bareinboim2016transportability} formalizes when causal effects estimated in a source population transfer to a target. \emph{Partial} transportability~\citep{lee2019general,jalaldoust2024partial} addresses settings where transfer is not point-determined but admits informative bounds; the multi-domain construction in \Cref{sec:transplant} is most directly an instance of partial transportability of $P^\star(Y\mid X)$, with the transplant operator providing a concrete parameterization on a pretrained representation.

\paragraph{Linear interventions on neural representations.} Mechanically, representation transplant is a low-rank linear intervention on a frozen pretrained representation, parameterized by $(\{A_s\}, L)$. Linear interventions on internal NN representations have a substantial recent history: interchange interventions for testing causal abstractions~\citep{geiger2021causal} and their extension to interchange intervention training~\citep{geiger2022inducing}; Distributed Alignment Search (DAS)~\citep{geiger2024finding}, which aligns interpretable causal variables with distributed subspaces of a frozen representation; activation patching for localizing factual knowledge~\citep{meng2022locating}; representation finetuning (LoReFT)~\citep{wu2024reft} for parameter-efficient adaptation; activation steering~\citep{turner2023activation} and representation engineering~\citep{zou2023representation} for behavioral control; and concept erasure (INLP, R-LACE, LEACE)~\citep{ravfogel2020null,ravfogel2022adversarial,belrose2023leace} via learned linear projectors. Our construction is mechanically a member of this broader family. To the best of our knowledge, the most similar concept to our representation transplant is DAS~\citep{geiger2024finding}, together with its representation-finetuning extension LoReFT~\citep{wu2024reft}: DAS shares the counterfactual-style interpretation of a low-rank linear operator on a frozen $\phi(X)$---asking what the model would predict if a high-level causal variable were swapped via a representation-level interchange---but is deployed for mechanistic interpretability (verifying that an interpretable causal variable is aligned with a subspace of the representation), whereas our transplant is deployed for partial transportability and minimax DG. We adopt the same operator family for a different purpose---as a parameterization of the abduction--deduction non-identifiability of a trained predictor---and use it as a primitive for partial transportability and minimax DG. To our knowledge, no prior work uses linear-intervention operators for these targets, nor formalizes a closure-under-intervention assumption of the kind in \Cref{ass:closure-multi}.

\section{Experiments}
\label{sec:experiments}

We evaluate Causal Robust Optimization (CRO; \Cref{alg:CRO}) on a designed synthetic benchmark (Colored MNIST) and two real-world benchmarks (PACS, OfficeHome), and close with an empirical pass over the four structural premises that underlie the minimax guarantee. All evaluation follows the DomainBed protocol of~\citet{gulrajani2021search}: leave-one-environment-out cells, three random seeds per cell, identical hyperparameter budgets across methods, and training-domain validation as the model-selection rule. Source-validation accuracy is the only selection rule we report---target labels are never observed during model selection on any benchmark.

\subsection{Implementation details}
\label{sec:experiments-impl}

CRO is implemented on top of the DomainBed framework~\citep{gulrajani2021search} as a new \texttt{Algorithm} subclass; baselines use the official DomainBed implementations. The featurizer $\phi$ is ResNet-50 pretrained on ImageNet with AugMix for the image benchmarks (PACS, OfficeHome) and the standard 4-layer \texttt{MNIST\_CNN} of DomainBed for Colored MNIST; in both cases the final classification head $h$ is a linear layer on top of frozen $\phi$. Training proceeds in two phases: Phase~1 is standard ERM-style training of $(\phi, h)$ on the union of source domains; Phase~2 is the CRO outer loop (\Cref{alg:CRO}) refining $h$ against the transplant-parameterized adversary. Per benchmark we sample 20 hyperparameter configurations $\times$ 3 trial seeds (180 runs on Colored MNIST, 240 on each of PACS, OfficeHome) and select per cell by source-validation accuracy. Full architecture, two-phase training schedule, hyperparameter search ranges and per-dataset compute tiers, dataset statistics, GPU/wall-clock budgets, and reproducibility identifiers are documented in \Cref{app:experiments}.

\subsection{Synthetic data: Colored MNIST}
\label{sec:experiments-colored-mnist}

\paragraph{Task.} Colored MNIST~\citep{arjovsky2019invariant} is a binary classification of MNIST digits ($Y=0$ for digits $0$--$4$, $Y=1$ for digits $5$--$9$) where each image is colorized red or green according to a domain-specific correlation $\rho_s$ with the label. The source environments correspond to $\rho = +0.9$ and $\rho = +0.8$ (color tracks the label); the diagnostic environment is $\rho = -0.9$ (color anti-tracks the label). Color is a spurious feature whose label-correlation flips between source and target; shape is the invariant feature. The benchmark therefore isolates whether a method picks up the invariant structure when it is in tension with the spurious one---a designed instance of the shortcut-learning phenomenon documented across real-world DG benchmarks~\citep{geirhos2020shortcut,beery2018recognition}.

\paragraph{Source vs.\ target risk.}
\begin{figure}[t]
\centering
\includegraphics[width=0.45\linewidth]{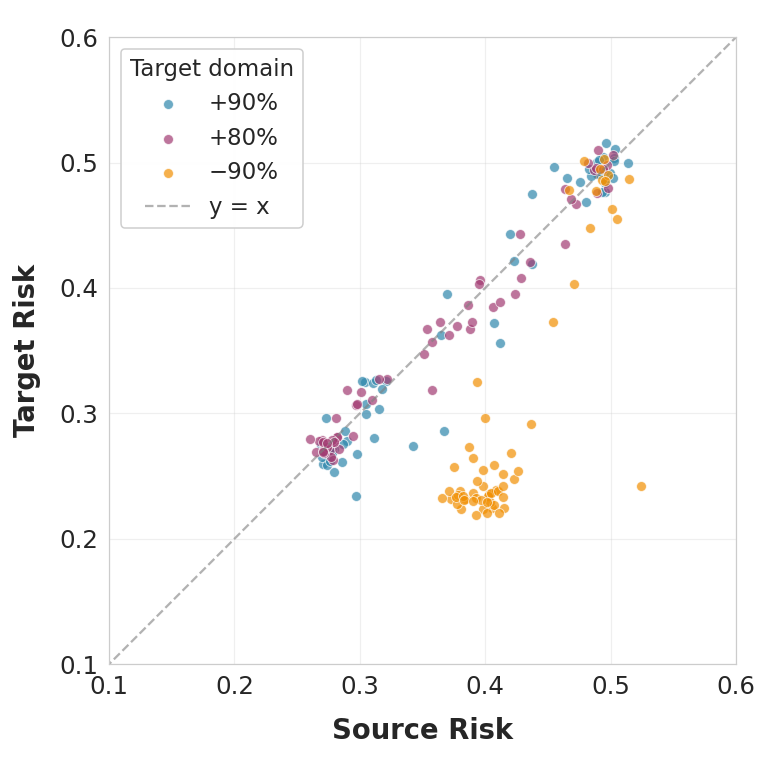}
\caption{Source vs.\ target risk across CRO training checkpoints on Colored MNIST, colored by held-out target.}
\label{fig:source-vs-test}
\end{figure}
Before reporting benchmark numbers, it is informative to look at how source-validation risk relates to held-out target risk across CRO checkpoints. \Cref{fig:source-vs-test} plots this relationship across the three Colored MNIST leave-one-out cells. The $+90\%$ and $+80\%$ cells (blue, magenta) lie close to $y=x$: source-validation is a tight proxy for target performance. The $-90\%$ cell (orange) is the contrast---at source risk $\approx 0.35$--$0.42$, many checkpoints achieve target risk $\approx 0.22$--$0.30$, far below where the diagonal would put them, and the spread at fixed source risk is wide. The diagnostic cell is therefore exactly where source-validation is a poor proxy for held-out generalization, and is where standard ERM-style baselines that select by source-validation collapse in \Cref{tab:cmnist}.

\paragraph{Results.} \Cref{tab:cmnist} reports leave-one-environment-out test accuracy under training-domain validation. Baseline numbers are reproduced from the DomainBed reference table~\citep{gulrajani2021search}; CRO numbers are our runs averaged over three seeds.

\begin{table}[t]
\caption{Colored MNIST: leave-one-environment-out test accuracy under DomainBed-style training-domain validation. Baseline numbers from~\citet{gulrajani2021search} under the same featurizer (DomainBed's \texttt{MNIST\_CNN}), the same training-domain validation rule, and the same 20-configuration $\times$ 3-seed hyperparameter budget as CRO; CRO is our runs averaged over three seeds. Best per column in bold.}
\label{tab:cmnist}
\centering
\small
\begin{tabular}{lcccc}
\toprule
\textbf{Algorithm}   & \textbf{$+$90\%}     & \textbf{$+$80\%}     & \textbf{$-$90\%}     & \textbf{Avg}         \\
\midrule
ERM                  & 71.7 $\pm$ 0.1       & 72.9 $\pm$ 0.2       & 10.0 $\pm$ 0.1       & 51.5                 \\
IRM                  & 72.5 $\pm$ 0.1       & 73.3 $\pm$ 0.5       & 10.2 $\pm$ 0.3       & 52.0                 \\
GroupDRO             & 73.1 $\pm$ 0.3       & 73.2 $\pm$ 0.2       & 10.0 $\pm$ 0.2       & 52.1                 \\
Mixup                & 72.7 $\pm$ 0.4       & 73.4 $\pm$ 0.1       & 10.1 $\pm$ 0.1       & 52.1                 \\
MLDG                 & 71.5 $\pm$ 0.2       & 73.1 $\pm$ 0.2       & 9.8 $\pm$ 0.1        & 51.5                 \\
CORAL                & 71.6 $\pm$ 0.3       & 73.1 $\pm$ 0.1       & 9.9 $\pm$ 0.1        & 51.5                 \\
MMD                  & 71.4 $\pm$ 0.3       & 73.1 $\pm$ 0.2       & 9.9 $\pm$ 0.3        & 51.5                 \\
DANN                 & 71.4 $\pm$ 0.9       & 73.1 $\pm$ 0.1       & 10.0 $\pm$ 0.0       & 51.5                 \\
CDANN                & 72.0 $\pm$ 0.2       & 73.0 $\pm$ 0.2       & 10.2 $\pm$ 0.1       & 51.7                 \\
MTL                  & 70.9 $\pm$ 0.2       & 72.8 $\pm$ 0.3       & 10.5 $\pm$ 0.1       & 51.4                 \\
SagNet               & 71.8 $\pm$ 0.2       & 73.0 $\pm$ 0.2       & 10.3 $\pm$ 0.0       & 51.7                 \\
ARM                  & \textbf{82.0} $\pm$ 0.5 & \textbf{76.5} $\pm$ 0.3 & 10.2 $\pm$ 0.0       & 56.2                 \\
VREx                 & 72.4 $\pm$ 0.3       & 72.9 $\pm$ 0.4       & 10.2 $\pm$ 0.0       & 51.8                 \\
RSC                  & 71.9 $\pm$ 0.3       & 73.1 $\pm$ 0.2       & 10.0 $\pm$ 0.2       & 51.7                 \\
\textbf{CRO (ours)}  & 72.1 $\pm$ 0.7       & 72.4 $\pm$ 0.3       & \textbf{76.0} $\pm$ 0.7 & \textbf{73.5}      \\
\bottomrule
\end{tabular}
\end{table}

\paragraph{Discussion.} The $-90\%$ column is the diagnostic. All prior baselines collapse to $\approx 10\%$ accuracy (worse than chance): they inherit the source's color--label rule and apply it to a target where that rule is inverted, so source-validation selects a predictor that is systematically wrong on the held-out cell. CRO at $76.0\%$ indicates the transplant-parameterized adversary surfaces an abduction--deduction factorization in which color sits in the abduction subspace (the $L$-direction) and shape in the deduction subspace, with the learner converging onto the deduction readout. On the two source-tracking cells CRO matches the baseline cluster ($72.1\%/72.4\%$); the per-column wins of ARM ($82.0\%/76.5\%$) on those cells reflect its ability to absorb the source-side color correlation, which is exactly the rule that gets inverted on the diagnostic cell.

\begin{figure}[t]
\centering
\includegraphics[width=0.55\textwidth]{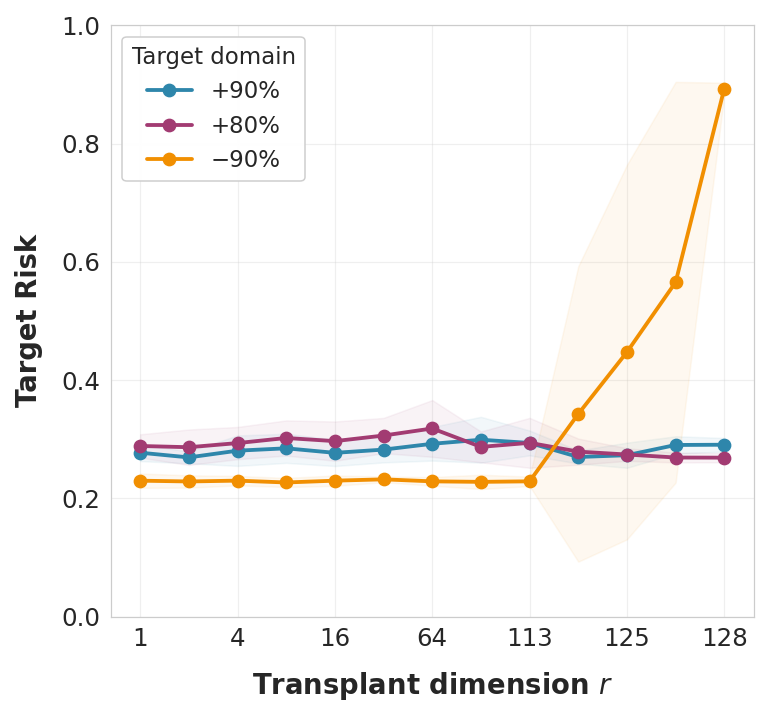}
\caption{CRO target risk on Colored MNIST as a function of the transplant dimension $r$. The diagnostic $-90\%$ cell is flat-low through $r\leq 113$ and collapses by $r\in\{125,128\}$ (transition in $r\in(113,125)$); the source-tracking cells are insensitive to $r$.}
\label{fig:rank-vs-test}
\end{figure}

\paragraph{Stress test: transplant dimension.}
The transplant dimension $r$ is the size of the parameterized coarsening of $\tilde U$ (\Cref{eq:abduction-param,eq:deduction-param}). The true support size $|\tilde\cU|$ is generally unknown, so we stress-test sensitivity to $r$ on Colored MNIST: \Cref{fig:rank-vs-test} sweeps $r$ from $1$ to the feature dimension $128$ and reports CRO target risk on all three leave-one-out cells. The two source-tracking cells ($+90\%$, $+80\%$) are flat in $r$ at target risk $\approx 0.28$--$0.30$. The diagnostic $-90\%$ cell is flat-low ($\approx 0.23$) for $r\leq 113$ and then climbs sharply, reaching target risk $\approx 0.9$ at $r=128$. The collapse beyond $r\approx 113$ happens because the orthogonal subspace $L^\perp$---the deduction component of $\phi$---becomes too small to carry an accurate invariant deduction map, so the worst-case risk on the diagnostic target explodes. The other two held-out cells do not collapse: at large $r$, even with a poor deduction component the abduction component can compensate to fit the source predictions, and on a target whose abduction is well-aligned with the sources, the ensemble prediction remains good. Small to moderate $r$ avoids both failure modes and is the operating point we use for the headline numbers in \Cref{tab:cmnist}.

\paragraph{CRO termination.}
\begin{figure}[t]
\centering
\includegraphics[width=0.55 \linewidth]{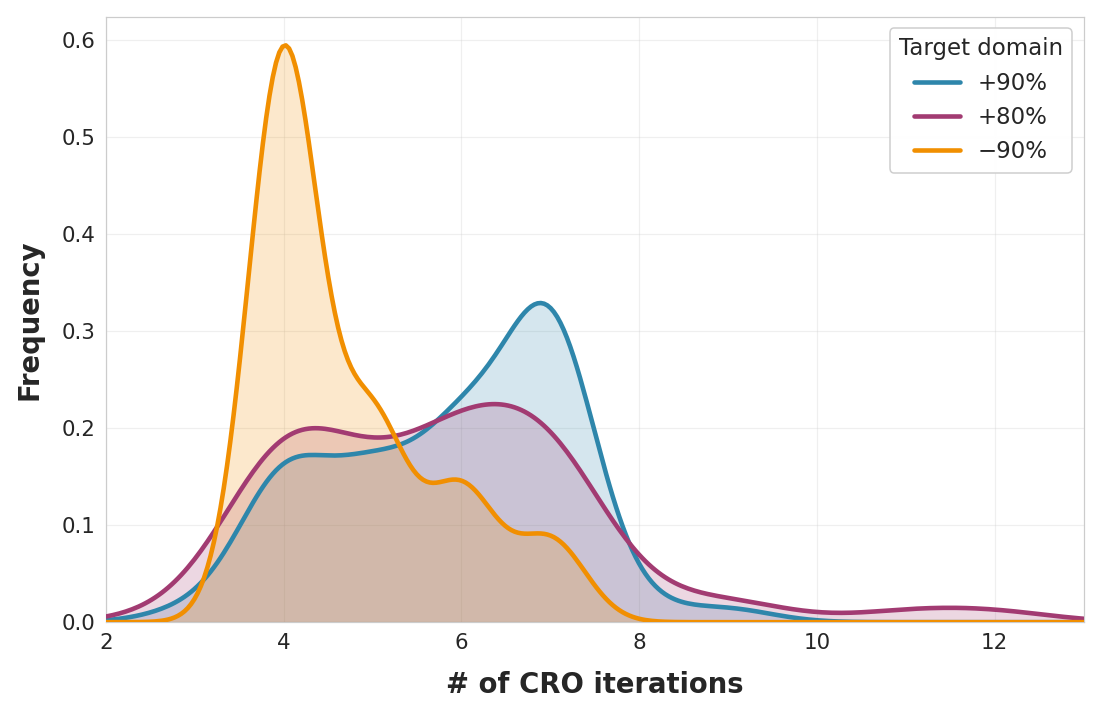}
\caption{Distribution of CRO outer-loop iterations before termination on Colored MNIST, across seeds and hyperparameter configurations, by held-out target.}
\label{fig:cmnist-iter-density}
\end{figure}
CRO (\Cref{alg:CRO}) terminates when the worst-case risk over the growing collection of plausible targets stops increasing materially. \Cref{fig:cmnist-iter-density} reports the empirical distribution of iteration counts at termination across seeds and hyperparameter configurations. The diagnostic $-90\%$ cell terminates fastest (mode $\approx 4$, almost all mass between $3$ and $6$): the adversary identifies a small number of worst-case targets very quickly, and the learner converges to a worst-case-optimal predictor without needing many rounds. The source-tracking cells take a few more rounds (mode $\approx 7$ for $+90\%$, bimodal at $4$ and $6$ for $+80\%$), consistent with a less peaked worst-case landscape. Across the full sweep, termination occurs in at most $13$ outer iterations, so CRO's compute budget is driven by the inner adversary and learner updates rather than the outer loop count.

\subsection{Real-world datasets: PACS and OfficeHome}
\label{sec:experiments-realworld}

The two real-world benchmarks test CRO in regimes where the deduction-invariance assumption is at best approximate---rendering style and category-level appearance can themselves vary across domains, partially shifting the deduction map. We report numbers under the same DomainBed training-domain-validation protocol and analyze where CRO matches, leads, or trails strong baselines.

\paragraph{PACS.} PACS~\citep{li2017deeper} is a four-domain object-recognition benchmark (photo, art-painting, cartoon, sketch) with seven classes. Spurious features are texture and rendering style; the invariant features are object shape and category-level structure. Each cell holds out one of the four domains and trains on the other three.

\begin{table}[t]
\caption{PACS: leave-one-domain-out test accuracy under DomainBed-style training-domain validation. Held-out domain in the column header (A: Art, C: Cartoon, P: Photo, S: Sketch). Baseline numbers from~\citet{gulrajani2021search}; CRO is our runs. Best per column in bold; ties bolded jointly.}
\label{tab:pacs}
\centering
\small
\begin{tabular}{lccccc}
\toprule
\textbf{Algorithm}   & \textbf{A}           & \textbf{C}           & \textbf{P}           & \textbf{S}           & \textbf{Avg}         \\
\midrule
ERM                  & 84.7 $\pm$ 0.4       & 80.8 $\pm$ 0.6       & 97.2 $\pm$ 0.3       & 79.3 $\pm$ 1.0       & 85.5                 \\
IRM                  & 84.8 $\pm$ 1.3       & 76.4 $\pm$ 1.1       & 96.7 $\pm$ 0.6       & 76.1 $\pm$ 1.0       & 83.5                 \\
GroupDRO             & 83.5 $\pm$ 0.9       & 79.1 $\pm$ 0.6       & 96.7 $\pm$ 0.3       & 78.3 $\pm$ 2.0       & 84.4                 \\
Mixup                & 86.1 $\pm$ 0.5       & 78.9 $\pm$ 0.8       & \textbf{97.6} $\pm$ 0.1 & 75.8 $\pm$ 1.8    & 84.6                 \\
MLDG                 & 85.5 $\pm$ 1.4       & 80.1 $\pm$ 1.7       & 97.4 $\pm$ 0.3       & 76.6 $\pm$ 1.1       & 84.9                 \\
CORAL                & \textbf{88.3} $\pm$ 0.2 & 80.0 $\pm$ 0.5    & 97.5 $\pm$ 0.3       & 78.8 $\pm$ 1.3       & 86.2                 \\
MMD                  & 86.1 $\pm$ 1.4       & 79.4 $\pm$ 0.9       & 96.6 $\pm$ 0.2       & 76.5 $\pm$ 0.5       & 84.6                 \\
DANN                 & 86.4 $\pm$ 0.8       & 77.4 $\pm$ 0.8       & 97.3 $\pm$ 0.4       & 73.5 $\pm$ 2.3       & 83.6                 \\
CDANN                & 84.6 $\pm$ 1.8       & 75.5 $\pm$ 0.9       & 96.8 $\pm$ 0.3       & 73.5 $\pm$ 0.6       & 82.6                 \\
MTL                  & 87.5 $\pm$ 0.8       & 77.1 $\pm$ 0.5       & 96.4 $\pm$ 0.8       & 77.3 $\pm$ 1.8       & 84.6                 \\
SagNet               & 87.4 $\pm$ 1.0       & 80.7 $\pm$ 0.6       & 97.1 $\pm$ 0.1       & \textbf{80.0} $\pm$ 0.4 & \textbf{86.3}    \\
ARM                  & 86.8 $\pm$ 0.6       & 76.8 $\pm$ 0.5       & 97.4 $\pm$ 0.3       & 79.3 $\pm$ 1.2       & 85.1                 \\
VREx                 & 86.0 $\pm$ 1.6       & 79.1 $\pm$ 0.6       & 96.9 $\pm$ 0.5       & 77.7 $\pm$ 1.7       & 84.9                 \\
RSC                  & 85.4 $\pm$ 0.8       & 79.7 $\pm$ 1.8       & \textbf{97.6} $\pm$ 0.3 & 78.2 $\pm$ 1.2    & 85.2                 \\
\textbf{CRO (ours)}  & 82.8 $\pm$ 0.5       & \textbf{81.0} $\pm$ 0.7 & 97.5 $\pm$ 1.1       & 77.0 $\pm$ 4.2       & 84.6                 \\
\bottomrule
\end{tabular}
\end{table}

Unlike on Colored MNIST, no baseline collapses on PACS, and CRO does not lead the average---SagNet ($86.3$) and CORAL ($86.2$) sit at the top, with CRO at $84.6$ in line with the broad baseline cluster, leading Cartoon ($81.0$) and matching the Photo cluster ($97.5$, within $0.1$ of the Mixup/RSC tie at $97.6$). We read this under the structural lens of \Cref{thm:multi-id}: PACS shifts mix abduction shift (which CRO is built for) with partial shifts in the deduction map---rendering style on a sketch is not only a confounder for what's depicted but also a partial determinant of $f_Y$ (silhouettes that work as a sketch read as a different category as a photo). When deduction invariance holds only approximately, the parameterized set of plausible targets is a worse cover of the true target, and the worst-case-optimal predictor trades headroom against caution. PACS therefore reads as a robustness check rather than as a setting where CRO's structural assumptions cleanly apply.

\begin{figure}[t]
\centering
\includegraphics[width=0.55\linewidth]{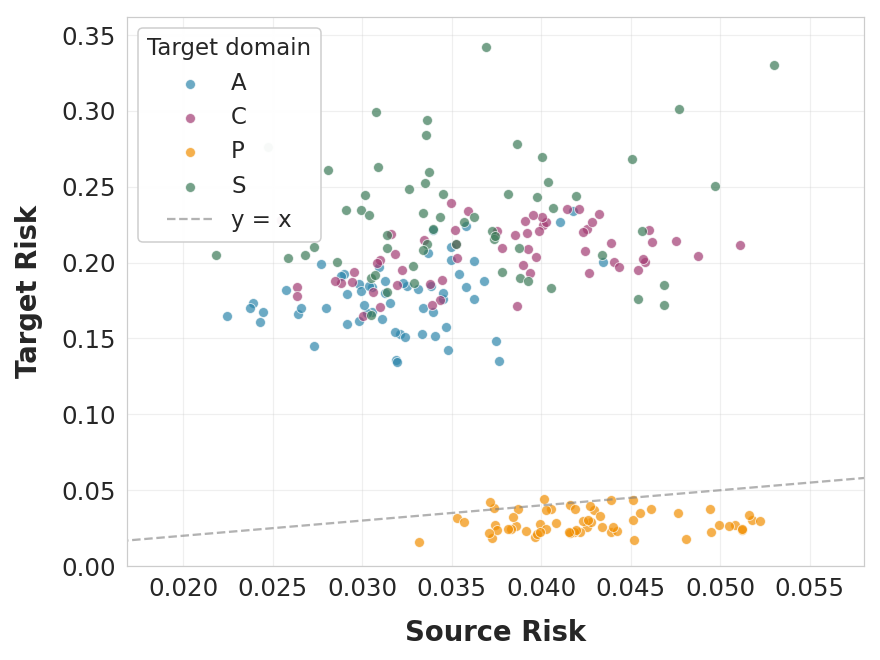}
\caption{Source vs.\ target risk across CRO checkpoints on PACS, by held-out domain. Source risk concentrates in a narrow $0.02$--$0.05$ band; target risk varies over an order of magnitude wider band depending on the held-out domain.}
\label{fig:pacs-source-vs-test}
\end{figure}
\paragraph{Source vs.\ target risk on PACS.} \Cref{fig:pacs-source-vs-test} is the PACS analog of \Cref{fig:source-vs-test}: source-validation risk vs.\ held-out target risk across CRO checkpoints, one color per held-out domain. The picture is qualitatively different from Colored MNIST. Source risk concentrates in a narrow $0.02$--$0.05$ band across all cells---the featurizer fits the three source domains tightly---while target risk varies over a much wider band (roughly $0.02$ to $0.34$) depending on the held-out domain. The held-out-domain identity dominates the spread: Photo as target sits below $y{=}x$ at $\approx 0.03$ target risk because real photographs are closer to the average of the other three source mixes; Sketch as target sits highest at $\approx 0.15$--$0.30$ because abstract line drawings depart most from photographic priors. At fixed source risk, target risk varies by $\sim$$0.2$ within a cell, so source-validation selection is much less informative on PACS than on the Colored MNIST source-tracking cells---consistent with the deduction-shift reading of the cross-domain behavior on PACS.

\paragraph{OfficeHome.} OfficeHome~\citep{venkateswara2017deep} contains 65 categories across 4 domains (Art, Clipart, Product, Real). \Cref{tab:officehome} reports leave-one-domain-out test accuracy under the standard DomainBed protocol.

\begin{table}[t]
\caption{OfficeHome: leave-one-domain-out test accuracy under DomainBed-style training-domain validation. Held-out domain in the column header (A: Art, C: Clipart, P: Product, R: Real). Baseline numbers from~\citet{gulrajani2021search}; CRO is our runs averaged over three seeds. Best per column in bold.}
\label{tab:officehome}
\centering
\small
\begin{tabular}{lccccc}
\toprule
\textbf{Algorithm}   & \textbf{A}           & \textbf{C}           & \textbf{P}           & \textbf{R}           & \textbf{Avg}         \\
\midrule
ERM                  & 61.3 $\pm$ 0.7       & 52.4 $\pm$ 0.3       & 75.8 $\pm$ 0.1       & 76.6 $\pm$ 0.3       & 66.5                 \\
IRM                  & 58.9 $\pm$ 2.3       & 52.2 $\pm$ 1.6       & 72.1 $\pm$ 2.9       & 74.0 $\pm$ 2.5       & 64.3                 \\
GroupDRO             & 60.4 $\pm$ 0.7       & 52.7 $\pm$ 1.0       & 75.0 $\pm$ 0.7       & 76.0 $\pm$ 0.7       & 66.0                 \\
Mixup                & 62.4 $\pm$ 0.8       & 54.8 $\pm$ 0.6       & 76.9 $\pm$ 0.3       & 78.3 $\pm$ 0.2       & 68.1                 \\
MLDG                 & 61.5 $\pm$ 0.9       & 53.2 $\pm$ 0.6       & 75.0 $\pm$ 1.2       & 77.5 $\pm$ 0.4       & 66.8                 \\
CORAL                & \textbf{65.3} $\pm$ 0.4 & 54.4 $\pm$ 0.5    & 76.5 $\pm$ 0.1       & 78.4 $\pm$ 0.5       & 68.7                 \\
MMD                  & 60.4 $\pm$ 0.2       & 53.3 $\pm$ 0.3       & 74.3 $\pm$ 0.1       & 77.4 $\pm$ 0.6       & 66.3                 \\
DANN                 & 59.9 $\pm$ 1.3       & 53.0 $\pm$ 0.3       & 73.6 $\pm$ 0.7       & 76.9 $\pm$ 0.5       & 65.9                 \\
CDANN                & 61.5 $\pm$ 1.4       & 50.4 $\pm$ 2.4       & 74.4 $\pm$ 0.9       & 76.6 $\pm$ 0.8       & 65.8                 \\
MTL                  & 61.5 $\pm$ 0.7       & 52.4 $\pm$ 0.6       & 74.9 $\pm$ 0.4       & 76.8 $\pm$ 0.4       & 66.4                 \\
SagNet               & 63.4 $\pm$ 0.2       & 54.8 $\pm$ 0.4       & 75.8 $\pm$ 0.4       & 78.3 $\pm$ 0.3       & 68.1                 \\
ARM                  & 58.9 $\pm$ 0.8       & 51.0 $\pm$ 0.5       & 74.1 $\pm$ 0.1       & 75.2 $\pm$ 0.3       & 64.8                 \\
VREx                 & 60.7 $\pm$ 0.9       & 53.0 $\pm$ 0.9       & 75.3 $\pm$ 0.1       & 76.6 $\pm$ 0.5       & 66.4                 \\
RSC                  & 60.7 $\pm$ 1.4       & 51.4 $\pm$ 0.3       & 74.8 $\pm$ 1.1       & 75.1 $\pm$ 1.3       & 65.5                 \\
\textbf{CRO (ours)}  & 64.8 $\pm$ 0.7       & \textbf{54.9} $\pm$ 1.2 & \textbf{77.9} $\pm$ 1.7 & \textbf{79.4} $\pm$ 0.7 & \textbf{69.3}    \\
\bottomrule
\end{tabular}
\end{table}

CRO leads the OfficeHome average ($69.3$ vs.\ CORAL $68.7$, Mixup/SagNet $68.1$) and tops three of the four cells (Clipart, Product, Real); on Art, CORAL leads by $0.5$ points ($65.3$ vs.\ $64.8$). The $65$-class structure of OfficeHome spreads the abduction signal across many fine-grained categories, so the per-domain prior $\pi_s(\tilde U)$ that CRO models is rich enough to give a meaningful adversary search space without entering the over-parameterized regime of \Cref{fig:rank-vs-test}; the structural assumption (deduction invariance) is also a closer fit than on PACS, because the four OfficeHome domains differ primarily in image style rather than in what is being depicted.

\subsection{Empirical validation of the structural premises}
\label{sec:empirical-validation}

To check that the structural conditions underlying CRO are simultaneously \emph{satisfiable} on a real estimator stack---rather than to argue that they hold in nature---we construct a moral-questions DGP in the spirit of the running example, and report four diagnostics tied to \Cref{ass:invariant-deduction,ass:linear-multi,ass:closure-multi}. The SCM in \Cref{ass:confounded-multi} is fixed by construction here (we choose the data-generating mechanism); the diagnostics below test whether the parameterized factorization closes empirically on $\phi(X)$, whether per-domain abduction maps actually differ, whether the deduction map is invariant across domains, and whether a transplant basis $L$ exists.

\paragraph{Construction.} Each record has unobserved moral values $U$ coarsened to a triple $\tilde U \in \{-1,0,+1\}^3$ ($|\tilde\cU|=27$) along the three axes of \Cref{tab:moral-axes}; $X$ is one of the $680$ \emph{high-ambiguity} binary-choice ethical scenarios from MoralChoice~\citep{scherrer2023evaluating}; $Y \in \{A, B\}$ is the chosen action elicited from \texttt{claude-haiku-4-5-20251001}, frozen, under a values-conditioned prompt encoding $\tilde U$. Three domains share the scenario library, the topic-conditional $P(\text{topic}\mid \tilde U)$, and the response mechanism $P(Y\mid X, \tilde U)$, and differ only in the per-axis moral-values prior $\pi_s(\tilde U)$ (\Cref{tab:domain-priors}). On the pooled $42{,}000$-record set ($14{,}000$/domain) we fit the abduction map $A_s$ by per-domain multinomial logistic regression of $\tilde U$ on $\phi(X)$, and the deduction map $D$ by per-$\tilde u$ binary logistic regression of $Y$ on $\phi(X)$ pooled across all domains, with $\phi(X) = \mathrm{RoBERTa\text{-}base\;CLS}(X)$~\citep{liu2019roberta}.

\paragraph{Factorization closure.} \Cref{tab:factorization-closure} reports per-domain quality of the factorized prediction $\widehat P^s(y_1\mid x) = \softmax(A_s^\top \phi(x))^\top \sigmoid(D^\top \phi(x))$ against the empirical labels and against an unconstrained binary logistic regression $X\to Y$. Calibration is exact to three decimals across all three domains and the per-record probability correlation with direct LR is $0.93$--$0.96$. The accuracy gap to direct LR ranges from $0$ pp on Pragmatist (factorized $=$ direct $= 0.809$) to $\approx 5$ pp on Idealist and Centrist, the price of constraining the predictor to the abduction--deduction factorization rather than letting it be an unconstrained black box.

\begin{table}[h]
\centering
\caption{Factorization closure. The factorized prediction $\widehat P^s(y_1\mid x) = \softmax(A_s^\top \phi(x))^\top \sigmoid(D^\top \phi(x))$ is compared per-domain to empirical labels and to an unconstrained $X\to Y$ logistic regression. Predicted mean matches empirical mean to three decimals across all three domains.}
\label{tab:factorization-closure}
\small
\begin{tabular}{llcccc}
\toprule
\textbf{Domain} & \textbf{Predictor} & \textbf{Pred mean} & \textbf{Brier} & \textbf{Acc} & \textbf{LogLoss} \\
\midrule
\multirow{2}{*}{Pragmatist} & Factorized & 0.465 & 0.171 & 0.809 & 0.526 \\
                            & Direct LR  & 0.459 & 0.148 & 0.809 & 0.470 \\
\midrule
\multirow{2}{*}{Idealist}   & Factorized & 0.250 & 0.142 & 0.774 & 0.448 \\
                            & Direct LR  & 0.250 & 0.123 & 0.823 & 0.395 \\
\midrule
\multirow{2}{*}{Centrist}   & Factorized & 0.337 & 0.188 & 0.708 & 0.559 \\
                            & Direct LR  & 0.337 & 0.170 & 0.761 & 0.514 \\
\bottomrule
\end{tabular}
\end{table}

\paragraph{Domain shift in abduction subspaces.} The abduction maps $A_s$ have stable rank $\approx 15$ across domains and are positioned distinctly in column space. \Cref{tab:principal-angles} reports pairwise principal angles between $\mathrm{col}(A_s)$ and $\mathrm{col}(A_{s'})$. The median $\sim$$65^\circ$ is stable across data scales (same numbers at $N=2{,}000$ and $N=14{,}000$ per domain), so the spread is a property of the data-generating process rather than sample noise: the three abduction maps carve out distinct preferred directions, giving the adversary three meaningfully separated ``corners'' to interpolate among.

\begin{table}[h]
\centering
\caption{Pairwise principal angles between the column spaces of the per-domain abduction maps $A_s$, in degrees. The $\sim$$65^\circ$ median is stable across data scales, indicating that the spread of abduction subspaces is a DGP property rather than sample noise.}
\label{tab:principal-angles}
\small
\begin{tabular}{lccc}
\toprule
\textbf{Pair} & \textbf{Min} & \textbf{Median} & \textbf{Max} \\
\midrule
Pragmatist vs.\ Idealist & 22 & 66 & 89 \\
Pragmatist vs.\ Centrist & 21 & 67 & 89 \\
Idealist   vs.\ Centrist & 19 & 67 & 90 \\
\bottomrule
\end{tabular}
\end{table}

\paragraph{Deduction invariance.} When a $(s, \tilde u)$ cell has enough samples to estimate the per-domain deduction column $D_s[:, \tilde u]$ cleanly, it matches the pooled $D[:, \tilde u]$ to column-cosine $1.000$. The median column-cosine across all valid columns is dragged down ($\approx 0.69$--$0.72$) only by sparsely-sampled cells where the per-domain LR is dominated by $L_2$ regularization rather than data, sharpening to near-$1$ as $N$ grows---consistent with sample-size noise rather than genuine domain-dependence in $D$. Deduction invariance (\Cref{ass:invariant-deduction}) holds.

\paragraph{Existence of a transplant basis $L$.} We seek $L\in\R^{d\times r}$ satisfying $A_s^\top L = I_r$ for $s\in\{\text{prag}, \text{ideal}, \text{cent}, \star\}$ and $D^\top L = 0$ simultaneously, with the Hausdorff penalty controlling where the transplant lands relative to source-pool support. \Cref{tab:L-feasibility} sweeps the penalty weight $\lambda_H$. The biorthogonality residual sits at the rank-deficiency floor ($\approx 1$ per $A$, driven by a single rare cell with $n=6$ records), independent of $\lambda_H$; $D$-orthogonality is achieved at machine epsilon ($\|D^\top L\|_F < 5\times 10^{-4}$) across the sweep. Increasing $\lambda_H$ shrinks the Hausdorff term from $998$ to $377$ at modest cost in biorthogonality, mapping out a clean trade-off curve. $L$ exists across the full sweep, so \Cref{ass:closure-multi} is non-vacuous on this DGP.

\begin{table}[h]
\centering
\caption{Existence of a transplant basis $L$ under varying Hausdorff penalty $\lambda_H$. Biorthogonality residual is per-$A$ Frobenius distance from $I_r$ (rank-deficiency floor $\approx 1$, set by a single rare $(s,\tilde u)$ cell). $L$ exists across the full sweep; the Hausdorff penalty selects which feasible $L$ is picked.}
\label{tab:L-feasibility}
\small
\begin{tabular}{cccc}
\toprule
$\lambda_H$ & \textbf{Bio.\ residual / $A$} & $\|L\|_F$ & \textbf{Hausdorff} \\
\midrule
$0$       & $1.06$ & $282.8$ & $998$ \\
$0.01$    & $1.06$ & $282.5$ & $984$ \\
$0.1$     & $1.06$ & $280.3$ & $900$ \\
$1.0$     & $1.06$ & $267.7$ & $662$ \\
$10$      & $1.19$ & $210.3$ & $377$ \\
\bottomrule
\end{tabular}
\end{table}

\paragraph{Summary.} Across the four diagnostics: (i) the abduction--deduction factorization closes empirically (calibration exact, $89$--$91\%$ argmax-agreement with direct LR), (ii) abduction maps are domain-distinct (median $\sim$$65^\circ$ pairwise principal angles), (iii) deduction invariance holds (per-domain $D_s$ matches pooled $D$ to cosine $1.000$ in well-sampled cells), and (iv) a transplant basis $L$ exists at the rank-deficiency floor with $D$-orthogonality essentially free and a clean Hausdorff trade-off. The structural premises that CRO assumes are simultaneously satisfiable on the empirical estimators, with all residuals consistent with finite-sample noise. We emphasize the scope of this conclusion: the moral-questions DGP is constructed so that the structural premises hold by design, and these diagnostics show that the estimator stack recovers them rather than that the premises hold in nature; the experiment is a self-consistency check on the parameterization, not evidence that the same premises hold on PACS, OfficeHome, or other real-world benchmarks.

\section{Conclusion}
\label{sec:conclusion}

Target risk under an unseen distribution is not pinned down by source data: many abduction--deduction factorizations are consistent with the same source-trained predictor (the abduction--deduction entanglement, \Cref{eq:AD-decomp}), and they disagree about how predictions shift under regime change. Representation transplant---a class of low-rank linear operations on a frozen pretrained representation---turns this entanglement into a parameterization. The headline use, presented in \Cref{sec:transplant}, is multi-domain generalization: under deduction invariance and closure under transplant, a learner--adversary game (CRO; \Cref{sec:algorithm}) attains best worst-case target risk over the parameterized set of plausible target distributions, with invariant representations recovered as a corollary (\Cref{cor:invariant-rep}). On the diagnostic Colored MNIST cell the resulting classifier holds up where established baselines collapse; on the real-world PACS and OfficeHome benchmarks, where deduction invariance is only approximately satisfied, CRO is competitive with the strongest baselines (leading on OfficeHome and matching the broad baseline cluster on PACS). A separately constructed moral-questions DGP (\Cref{sec:empirical-validation}) checks that the structural conditions CRO leans on are simultaneously satisfiable on real estimators, with residuals consistent with sample noise. In short: representation transplant gives a small, structured parameterization of abduction--deduction non-identifiability on top of a frozen pretrained representation, and the CRO algorithm built on it attains minimax-optimal worst-case target risk under the structural assumptions of \Cref{sec:algorithm}.

\paragraph{Limitations.} The closure assumption (\Cref{ass:closure-multi}) is the load-bearing condition throughout the paper. It plays a role analogous to positivity/coverage in standard causal inference and is closely connected to disentanglement assumptions in causal representation learning~\citep{scholkopf2021toward,locatello2019challenging,vonkugelgen2021self,ahuja2023interventional}; it is not directly testable from observational data, and its empirical enforcement is via penalty (\Cref{eq:match}) on source pairs only. Closure to the target regime is assumed, not enforced. Deduction invariance (\Cref{ass:invariant-deduction}) is a sharper assumption than it may appear: real-world shifts often mix abduction shift with partial shifts in $f_Y$ itself (e.g., rendering style on PACS is not purely a confounder), so the parameterized set of plausible targets is a tight cover only when the structural match-up holds. This is reflected empirically: CRO's headroom is largest on Colored MNIST, where the assumption matches the data-generating process by construction, and shrinks on real-world benchmarks where the match-up is approximate. The minimax-optimality result of \Cref{thm:multi-id} additionally requires infinite source data and global convergence of the gradient-based searches inside CRO; \Cref{thm:robustness} and \Cref{prop:finite-sample} relax these to a linear $O(\epsilon)$ degradation under approximate closure and a $O(1/\sqrt N)$ statistical plug-in error, but tight finite-sample constants for the softmax/sigmoid-linear-in-$\phi$ parameterization are left to future work.

\paragraph{Outlook.} The construction in this paper is best read as a way to parameterize causal models on top of pretrained foundation representations. Two directions follow naturally. \emph{Sequence models and causal structure among tokens.} The setup here treats $X$ as a single input with a representation $\phi(X)$. Lifting it to sequence models---where the causal relationships sit among the tokens themselves and can carry richer structure than a single confounder $\tilde U$---opens partial-transportability questions to language modeling and other sequential domains. \emph{Supervised domain adaptation via transplant fine-tuning.} When a small labeled target sample is available, the same abduction--deduction parameterization admits a different statistical regime: rather than retraining $(\phi, h)$ on the target, one can fix $\phi$ and fit the transplant parameters $(\{\tilde A_s\}, \tilde L)$ jointly to source and target data, with potential statistical advantages over full target re-training. This is mechanically related to the parameter-efficient finetuning regime of ReFT~\citep{wu2024reft}; the new content here is that the optimized parameters carry a causal-structural meaning (an abduction--deduction factorization), so the same operator family also delivers a transportability guarantee rather than just an empirical adaptation. The relative statistical efficiency of transplant-fine-tuning vs.\ full target re-training is an open question.

\bibliography{references}

\appendix
\section{Proofs}
\label{app:proofs}

This appendix gives formal statements and proofs for the claims in \Cref{sec:transplant,sec:algorithm}. We work throughout under the parameterizations \cref{eq:abduction-param,eq:deduction-param} as exact identities (writing ``$=$'' rather than ``$\approx$''); approximation error from finite-rank representation fits propagates linearly through every identity below.

\subsection{Existence of a transplant basis}
\label{app:proofs-L-exists}

The cross-domain transplant operator (\Cref{def:T-multi}) requires a matrix $L\in\R^{d\times r}$ that simultaneously left-inverts every abduction map and lies in the left-null-space of the deduction map. The next proposition characterizes when such an $L$ exists.

\begin{proposition}[Existence of a transplant basis]\label{prop:L-exists}
Let $A_1,\dots,A_K,A_\star,D\in\R^{d\times r}$ and let
$M:=[A_1\;\cdots\;A_K\;\;A_\star\;\;D]\in\R^{d\times(K+2)r}$. The system
\begin{equation}\label{eq:bio-system-app}
A_s^\top L=I_r\;\;\forall s\in\{1,\dots,K,\star\},\qquad D^\top L=0,
\end{equation}
admits a solution $L\in\R^{d\times r}$ if and only if the following \emph{kernel condition} holds: every
$(a_1,\dots,a_K,a_\star,c)\in\R^r\times\cdots\times\R^r$ with
$\sum_{s=1}^K A_s a_s + A_\star a_\star + D c = 0$ satisfies $\sum_{s=1}^K a_s + a_\star = 0$.
A sufficient condition is $\rank(M)=(K+2)r$: the columns of $[A_1\;\cdots\;A_K\;A_\star\;D]$ are jointly linearly independent. The dimension constraint $d\ge(K+2)r$ is \emph{necessary} for $\rank(M)=(K+2)r$ but not sufficient---under $d\ge(K+2)r$ the rank attains its maximum at generic-position parameters and can fail to do so at non-generic ones (e.g.\ overlapping column spans of $\{A_s,D\}$).
When the kernel condition holds and $\rank(M)<d$, the solution set is an affine subspace of $\R^{d\times r}$ of dimension $r\,(d-\rank(M))$.
\end{proposition}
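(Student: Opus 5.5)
The plan is to recast \Cref{eq:bio-system-app} as one linear system $M^\top L = B$, where $B\in\R^{(K+2)r\times r}$ stacks $K+1$ copies of $I_r$ on top of a zero block $0_{r\times r}$. The system splits column by column into $r$ independent systems $M^\top \ell_j = b_j$, where $\ell_j$ is the $j$-th column of $L$ and $b_j$ the $j$-th column of $B$. Each one is solvable iff $b_j\in\range(M^\top)$. By the fundamental theorem of linear algebra, $\range(M^\top)=\ker(M)^\perp$, so solvability is equivalent to $v^\top B=0$ for every $v\in\ker(M)$.

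Next I will unpack this condition. Write $v=(a_1,\dots,a_K,a_\star,c)$ in blocks of size $r$. Then $Mv=\sum_s A_s a_s + A_\star a_\star + Dc$, and $v^\top B = \big(\sum_s a_s + a_\star\big)^\top$, since the $D$-block of $B$ is zero. So the requirement ``$v^\top B=0$ for all $v\in\ker M$'' is exactly the kernel condition in the statement. This gives both directions of the iff at once. For necessity I will also give the direct one-line argument: if $L$ solves the system and $Mv=0$, then
\[
0=(Mv)^\top L=\sum_s a_s^\top A_s^\top L + a_\star^\top A_\star^\top L + c^\top D^\top L=\Big(\sum_s a_s+a_\star\Big)^\top .
\]

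The sufficient condition is immediate. If $\rank(M)=(K+2)r$, the columns are independent, so $\ker M=\{0\}$ and the kernel condition holds vacuously. The necessity of $d\ge(K+2)r$ follows because $\rank(M)\le\min(d,(K+2)r)$. For the claim that it is not sufficient, I will give a counterexample with $A_1=D$. Then $v=(a,0,\dots,0,-a)$ lies in $\ker M$ with $\sum a_s = a\neq 0$, so even the kernel condition fails; equivalently, $A_1^\top L=I_r$ and $D^\top L=0$ contradict each other. For genericity, full rank holds off the zero set of a nonzero polynomial, namely some $(K+2)r$-minor, which is nonzero at a block-identity configuration when $d\ge(K+2)r$.

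Finally, the dimension count. When a solution $L_0$ exists, the solution set is $L_0+\{N: M^\top N=0\}$. Each column of $N$ ranges over $\ker(M^\top)$, which has dimension $d-\rank(M)$. So the solution set is an affine subspace of dimension $r(d-\rank(M))$, and it is a single point when $\rank(M)=d$.

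I expect no real obstacle. The only step needing care is the block bookkeeping that identifies $v^\top B$ with $\sum_s a_s+a_\star$, together with making the non-sufficiency example concrete.
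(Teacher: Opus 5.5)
Your proposal is correct and follows essentially the same route as the paper: split the system into the $r$ column systems $M^\top\ell_j=t_j$, use $\range(M^\top)=(\ker M)^\perp$ to obtain the kernel condition, and count dimensions via translates of $\ker(M^\top)$. The extra pieces you add (the direct necessity computation, the $A_1=D$ counterexample, and the nonvanishing-minor genericity argument) supply proofs for claims that the paper states but does not argue.
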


\begin{proof}
Stack $L=[\ell_1,\dots,\ell_r]\in\R^{d\times r}$ column by column. The constraints in \cref{eq:bio-system-app} read, for each $j\in\{1,\dots,r\}$,
\[
M^\top\ell_j=t_j,\qquad t_j:=\begin{bmatrix}e_j\\\vdots\\e_j\\0_r\end{bmatrix}\in\R^{(K+2)r},
\]
where $e_j$ denotes the $j$-th canonical basis vector of $\R^r$, appearing in the $K+1$ abduction blocks and zero in the deduction block. By the fundamental theorem of linear algebra, $M^\top\ell_j=t_j$ has a solution iff $t_j\in\range(M^\top)=(\ker M)^\perp$, equivalently $\langle t_j,z\rangle=0$ for every $z\in\ker M$. Writing $z=(a_1,\dots,a_K,a_\star,c)$,
\[
\langle t_j,z\rangle=\sum_{s=1}^K e_j^\top a_s+e_j^\top a_\star+0_r^\top c=\sum_{s=1}^K a_{s,j}+a_{\star,j},
\]
so the condition is $\sum_{s=1}^K a_s+a_\star=0$. The sufficient condition $\rank(M)=(K+2)r$ implies $\ker M=\{0\}$, in which case the constraint is vacuous. The dimension count follows from $\dim\range(M^\top)=\rank(M)$ and the fact that the solution set of $M^\top\ell_j=t_j$ is an affine $(d-\rank(M))$-dimensional translate of $\ker(M^\top)$. \Cref{tab:L-feasibility} reports empirical feasibility for $K=3$ at the rank-deficiency floor.
\end{proof}

\begin{remark}[Underdetermination and the role of the Hausdorff penalty]\label{rem:L-underdet}
When $\rank(M)<d$, the solution set of \cref{eq:bio-system-app} is an affine subspace of dimension $r\,(d-\rank(M))$. The Hausdorff penalty in \cref{eq:match} selects a specific element of this affine subspace---the parameterization treats $L$ as a free parameter and uses the Hausdorff term to anchor the transplanted representations near the empirical source supports.
\end{remark}

\subsection{Action of the transplant operator on abduction/deduction readouts}
\label{app:multi-bio}

\begin{lemma}[Action of $T_{s\to s'}$ on the readouts]\label{lem:multi-bio}
Under \Cref{ass:linear-multi,ass:closure-multi}, for every $s,s'\in\{1,\dots,K,\star\}$ and every $W\in\R^d$,
\begin{equation}\label{eq:multi-bio}
A_{s'}^\top T_{s\to s'}W = A_s^\top W,\qquad
D^\top T_{s\to s'}W = D^\top W.
\end{equation}
Moreover, $T_{s\to s'}$ is unipotent ($\det T_{s\to s'}=1$), hence a linear bijection of $\R^d$.
\end{lemma}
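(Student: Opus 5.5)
The two readout identities follow by direct substitution of \Cref{def:T-multi}, using only the transplant-basis conditions $A_s^\top L = I_r$ for every $s\in\{1,\dots,K,\star\}$ and $D^\top L=0$. These conditions are guaranteed by \Cref{ass:closure-multi}; the support-closure part of that assumption is not needed here. For the abduction readout, we expand
\[
A_{s'}^\top T_{s\to s'}W = A_{s'}^\top W + (A_{s'}^\top L)(A_s^\top - A_{s'}^\top)W = A_{s'}^\top W + A_s^\top W - A_{s'}^\top W = A_s^\top W .
\]
For the deduction readout, we expand
\[
D^\top T_{s\to s'}W = D^\top W + (D^\top L)(A_s^\top - A_{s'}^\top)W = D^\top W .
\]
Both identities hold for every $W\in\R^d$ and every ordered pair $(s,s')$, including the case $s'=\star$, because the basis conditions include the target.

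For unipotence, set $N := L(A_s^\top - A_{s'}^\top)\in\R^{d\times d}$, so that $T_{s\to s'} = I_d + N$. We show that $N$ is nilpotent by computing
\[
N^2 = L\,(A_s^\top L - A_{s'}^\top L)\,(A_s^\top - A_{s'}^\top) = L\,(I_r - I_r)\,(A_s^\top - A_{s'}^\top) = 0 .
\]
Hence every eigenvalue of $N$ is $0$, every eigenvalue of $I_d+N$ is $1$, and $\det T_{s\to s'}=1$. As a cross-check, one can also use the matrix determinant lemma: $\det(I_d + LB) = \det(I_r + BL)$ with $B = A_s^\top - A_{s'}^\top$, and $BL = 0$. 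The inverse is explicit, since $(I_d+N)(I_d-N) = I_d - N^2 = I_d$. Moreover, $I_d - N = I_d + L(A_{s'}^\top - A_s^\top) = T_{s'\to s}$, which is the relation $T_{s\to\star}=T_{\star\to s}^{-1}$ invoked before \Cref{eq:w-star-0}. It follows that $T_{s\to s'}$ is a linear bijection of $\R^d$.

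There is no genuine obstacle. The only step needing care is the nilpotency computation, which uses the basis condition for both $s$ and $s'$ at once. If only $A_s^\top L = I_r$ held, $N^2$ need not vanish, so it matters that \Cref{ass:closure-multi} (equivalently \Cref{eq:transplant-basis}) imposes the same left-inverse for all domains simultaneously.
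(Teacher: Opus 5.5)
Your proof is correct and matches the paper's argument: the readout identities come from direct substitution of the transplant-basis conditions, and unipotence comes from nilpotency of $L(A_s^\top-A_{s'}^\top)$, which the paper phrases as $KL=0$ so that $\range(K)\subseteq\ker(K)$. Your explicit inverse $(I_d+N)^{-1}=I_d-N=T_{s'\to s}$ is a useful extra. The body attributes the relation $T_{s\to\star}=T_{\star\to s}^{-1}$ to this lemma, but unipotence alone only gives invertibility, not that the inverse is $T_{\star\to s}$, and your computation supplies that.
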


\begin{proof}
By \Cref{def:T-multi}, $T_{s\to s'}=I_d+L(A_s^\top-A_{s'}^\top)$. Using $A_{s'}^\top L=A_s^\top L=I_r$ and $D^\top L=0$,
\begin{align*}
A_{s'}^\top T_{s\to s'}W
&= A_{s'}^\top W+(A_{s'}^\top L)(A_s^\top-A_{s'}^\top)W
 = A_{s'}^\top W+(A_s^\top-A_{s'}^\top)W=A_s^\top W,\\
D^\top T_{s\to s'}W
&= D^\top W+(D^\top L)(A_s^\top-A_{s'}^\top)W = D^\top W.
\end{align*}
For the determinant, write $T_{s\to s'}=I_d+K$ with $K:=L(A_s^\top-A_{s'}^\top)$. The matrix $K$ has rank at most $r$ and satisfies $KL=L(A_s^\top-A_{s'}^\top)L=L(I_r-I_r)=0$, so $\range(K)\subseteq\ker(K)$, i.e., $K$ is nilpotent ($K^2=0$). Hence all eigenvalues of $K$ are $0$ and $\det(I_d+K)=1$.
\end{proof}

\subsection{Invariant representation: proof of \texorpdfstring{\cref{cor:invariant-rep}}{Corollary~\ref{cor:invariant-rep}}}
\label{app:proofs-cor-invariant-rep}

We first state the distributional-match condition in formal terms; \Cref{sec:invariant-rep} introduces it informally.

\begin{assumption}[Distributional match across domains]\label{ass:dist-match}
Let $L$ satisfy \Cref{ass:closure-multi} and pick any $A\in\R^{d\times r}$ with $A^\top L=I_r$ (e.g.\ $A=A_s$ for any $s\in\{1,\dots,K,\star\}$ works under \Cref{ass:closure-multi}). Decompose
\[
\phi(X)=W=W_A+W_D,\qquad W_A:=L A^\top\phi(X),\quad W_D:=(I_d-L A^\top)\phi(X).
\]
For every $s,s'\in\{1,\dots,K,\star\}$ and every $w_D\in\range(I_d-L A^\top)$, the conditional distribution of $W_A$ given $W_D=w_D$ in domain $s$ matches the pushforward under $T_{s\to s'}$ of the corresponding conditional in domain $s'$:
\begin{equation*}
P^s\!\big(W_A\mid W_D=w_D\big) \;=\; (T_{s\to s'})_\#\,P^{s'}\!\big(W_A\mid W_D=w_D\big),
\end{equation*}
viewing $T_{s\to s'}$ as a linear bijection of $\R^d$ (\Cref{lem:multi-bio}) restricted to the appropriate fibers. Under \Cref{ass:finite-support} the relevant measures are discrete on $\phi(\cX)\subset\R^d$, and the pushforward is the relabeling of finite-support point masses induced by $T_{s\to s'}$.
\end{assumption}

\begin{remark}[$T_{s\to s'}$ preserves the deduction component]\label{rem:T-preserves-WD}
Since $T_{s\to s'}-I_d=L(A_s^\top-A_{s'}^\top)$ takes values in $\range(L)$, which is annihilated by $I_d-LA^\top$, the operator $T_{s\to s'}$ leaves $W_D$ invariant: $(I_d-LA^\top)T_{s\to s'}w=(I_d-LA^\top)w$ for all $w$. Consequently $T_{s\to s'}$ restricts to a bijection of each $W_A$-fiber $\{w_D+w_A:w_A\in\range(L)\}$, acting on the $w_A$-coordinate as the affine map $w_A\mapsto w_A+L(A_s^\top-A_{s'}^\top)(w_D+w_A)$. The linear part of this restriction equals $\bigl(I+L(A_s^\top-A_{s'}^\top)\bigr)|_{\range(L)}=I|_{\range(L)}$ (using $A_s^\top L=A_{s'}^\top L=I_r$), so the fiberwise restriction is a translation of $\range(L)$ — a bijection of the $W_A$-fiber. Under \Cref{ass:finite-support} the pushforward in \Cref{ass:dist-match} is the relabeling of the discrete $W_A$-mass at fixed $W_D=w_D$ induced by this bijection.
\end{remark}

\begin{proof}[Proof of \Cref{cor:invariant-rep}]
Fix $s,s'\in\{1,\dots,K,\star\}$ and $w_D\in\range(I_d-L A^\top)$. By the law of total probability (and \Cref{ass:finite-support}, so the integral is a sum over the discrete $W_A$-fiber at $W_D=w_D$),
\begin{equation}\label{eq:cor-invariant-step1}
P^s(y_1\mid W_D=w_D)
=\sum_{w_A} P^s(y_1\mid W=w_D+w_A)\,P^s(w_A\mid w_D).
\end{equation}
By \Cref{ass:closure-multi}, for $\phi$-almost every $w=w_D+w_A$ in $\supp_{P^s}(\phi)$ the transplanted vector $T_{s\to s'}w$ lies in $\supp_{P^{s'}}(\phi)$; combining the abduction--deduction entanglement \cref{eq:AD-entanglement} for both domains with the parameterizations \cref{eq:abduction-param,eq:deduction-param} and \Cref{lem:multi-bio},
\begin{align*}
P^s(y_1\mid W=w)
&= \softmax(A_s^\top w)^\top\sigmoid(D^\top w)\\
&= \softmax(A_{s'}^\top T_{s\to s'}w)^\top\sigmoid(D^\top T_{s\to s'}w)
 = P^{s'}(y_1\mid W=T_{s\to s'}w),
\end{align*}
so the summand in \cref{eq:cor-invariant-step1} satisfies $P^s(y_1\mid W=w_D+w_A)=P^{s'}(y_1\mid W=T_{s\to s'}(w_D+w_A))$. Substituting and applying \Cref{ass:dist-match} as a relabeling on the (finite) $W_A$-fiber,
\begin{align*}
P^s(y_1\mid W_D=w_D)
&= \sum_{w_A} P^{s'}\!\big(y_1\mid W=T_{s\to s'}(w_D+w_A)\big)\,P^s(w_A\mid w_D)\\
&= \sum_{w_A'} P^{s'}\!\big(y_1\mid W=w_D+w_A'\big)\,P^{s'}(w_A'\mid w_D)\\
&= P^{s'}(y_1\mid W_D=w_D).\qedhere
\end{align*}
\end{proof}

\subsection{Formal definition of the plausible target set}
\label{app:plausible-set-formal}

\Cref{thm:multi-id} ranges over the set $\cP^\star(P^1,\dots,P^K)$ of target distributions \emph{entailed} by the source distributions and the structural assumptions \Cref{ass:confounded-multi,ass:finite-support,ass:invariant-deduction,ass:linear-multi,ass:closure-multi}.

\begin{definition}[Plausible target set]\label{def:plausible-set}
Fix source distributions $P^1,\dots,P^K$ on $\cX\times\cY$ and a frozen representation $\phi:\cX\to\R^d$. The \emph{plausible target set entailed by $P^1,\dots,P^K$} is
\begin{equation}\label{eq:plausible-set}
\cP^\star(P^1,\dots,P^K)
:=\Big\{\,\tilde P^\star\in\cP(\cX\times\cY)\;:\;
  \exists\,(\tilde A_1,\dots,\tilde A_K,\tilde A_\star,\tilde D,\tilde L)\in\Theta
  \text{ such that }\,\Pi(\tilde\theta;P^1,\dots,P^K,\tilde P^\star)\text{ holds}\,\Big\},
\end{equation}
where $\Theta$ is the parameter domain and $\Pi$ collects the structural constraints, defined as follows:
\begin{enumerate}
\item[\textnormal{(P1)}]\textnormal{(Domain and norm bound.)} $\tilde A_s,\tilde D,\tilde L\in\R^{d\times r}$ for every $s\in\{1,\dots,K,\star\}$, with operator norms bounded by a fixed constant $B$: $\|\tilde A_s\|_{\mathrm{op}},\|\tilde D\|_{\mathrm{op}},\|\tilde L\|_{\mathrm{op}}\le B$. The empirical analog of this bound is the operator-norm penalty controlled by \texttt{cro\_lambda\_op} (see \Cref{tab:hp-search-cro}); the softmax/sigmoid parameterizations \cref{eq:abduction-param,eq:deduction-param} are invariant under translating $\tilde A_s$ by $\1_r u^\top$ for any $u\in\R^d$ (a softmax gauge), so the bound is imposed after fixing this gauge (e.g., zero-mean columns of $\tilde A_s$).
\item[\textnormal{(P2)}]\textnormal{(Linear abduction.)} The transplant dimension is identified with the coarsening support size, $r=|\tilde\cU|$; for every $x\in\cX$ and every $s\in\{1,\dots,K,\star\}$,
\[
\vec{\tilde P}^s(\bm{\tilde u}\mid x)\;:=\;\softmax(\tilde A_s^\top\phi(x))\;\in\;\R^r,
\]
read as a probability vector indexed by $\tilde u\in\tilde\cU$ in a fixed ordering (\Cref{ass:linear-multi}).
\item[\textnormal{(P3)}]\textnormal{(Invariant deduction.)} $\tilde D\in\R^{d\times r}$ does not depend on $s$; for every $x\in\cX$ and every $\tilde u\in\tilde\cU$,
\[
\tilde P(y_1\mid x,\tilde u)\;:=\;\bigl[\sigmoid(\tilde D^\top\phi(x))\bigr]_{\tilde u},
\]
the $\tilde u$-th entry of the sigmoid output, read as a conditional probability (\Cref{ass:invariant-deduction}).
\item[\textnormal{(P4)}]\textnormal{(Source-fit.)} For every $s\in\{1,\dots,K\}$ and every $x\in\supp_{P^s}(X)$,
\[
P^s(y_1\mid x)\;=\;\softmax(\tilde A_s^\top\phi(x))^\top\,\sigmoid(\tilde D^\top\phi(x)).
\]
\item[\textnormal{(P5)}]\textnormal{(Biorthogonality.)} $\tilde A_s^\top\tilde L=I_r$ for every $s\in\{1,\dots,K,\star\}$ and $\tilde D^\top\tilde L=0$ (\Cref{ass:closure-multi}).
\item[\textnormal{(P6)}]\textnormal{(Closure.)} For every $s,s'\in\{1,\dots,K,\star\}$ and every $w\in\R^d$, $P^s(\phi(X)=w)>0\iff P^{s'}(\phi(X)=\tilde T_{s\to s'}w)>0$, where $\tilde T_{s\to s'}:=I_d+\tilde L(\tilde A_s^\top-\tilde A_{s'}^\top)$ (\Cref{ass:closure-multi}).
\item[\textnormal{(P7)}]\textnormal{(Target consistency.)} The marginal $\tilde P^\star(X)$ has $\supp_{\tilde P^\star}(X)=\supp_{P^\star}(X)$, and the conditional satisfies $\tilde P^\star(y_1\mid x)=\softmax(\tilde A_\star^\top\phi(x))^\top\,\sigmoid(\tilde D^\top\phi(x))$ for every $x\in\supp_{\tilde P^\star}(X)$.
\end{enumerate}
\end{definition}

Conditions (P1)--(P3) and (P5)--(P6) are the structural assumptions of \Cref{ass:linear-multi,ass:invariant-deduction,ass:closure-multi}; (P4) imposes that the parameterization fits the source distributions; (P7) defines the candidate target distribution from the parameters. (P4) together with the abduction--deduction entanglement \cref{eq:AD-entanglement} pins the inner product $\tilde A_s^\top\phi(x)\cdot\tilde D^\top\phi(x)$ to the source prediction $P^s(y_1\mid x)$ but leaves the individual factors underdetermined---this is the multi-domain abduction--deduction entanglement.

\begin{remark}[Empirical version]\label{rem:plausible-set-empirical}
\Cref{def:plausible-set} uses populations $P^s$. With finite samples, \Cref{alg:CRO} works with empirical $\hat P^s$, and (P4) is enforced by the source-fit penalty $\mathrm{Fit}$ in \cref{eq:loss-grand} with weight $\lambda_1$; (P6) is enforced by the Hausdorff penalty $\mathrm{Closure}$ in \cref{eq:match} with weight $\lambda_2$. As $\lambda_1,\lambda_2\to\infty$ the empirical adversary's feasible set converges to the plausible set $\cP^\star(\hat P^1,\dots,\hat P^K)$ in \Cref{def:plausible-set} with the empirical distributions in (P4)--(P6).
\end{remark}

\subsection{Target-side consistency identity}
\label{app:target-consistency}

\begin{proposition}[Target consistency]\label{prop:transplant-consistency-multi}
Under \Cref{ass:confounded-multi,ass:invariant-deduction,ass:linear-multi,ass:closure-multi}, for every $s'\in\{1,\dots,K\}$ and every $x_0^\star\in\cX$ with $P^\star(\phi(X)=\phi(x_0^\star))>0$,
\begin{equation}\label{eq:target-consistency-multi}
P^\star(y_1\mid X=x_0^\star)
=P^{s'}\!\big(y_1\mid X\in\phi^{-1}(T_{\star\to s'}\,\phi(x_0^\star))\big).
\end{equation}
This is \cref{eq:w-star-0} in the body.
\end{proposition}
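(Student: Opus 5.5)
The plan is to reduce everything to the factorized form of the prediction rule and then move between domains with \Cref{lem:multi-bio}. Set $w^\star:=\phi(x_0^\star)$ and $w^{s'}:=T_{\star\to s'}w^\star$.

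\textbf{Step 1 (target side).} Since $P^\star(\phi(X)=w^\star)>0$, the point $x_0^\star$ lies in the target support. The abduction--deduction decomposition \cref{eq:AD-entanglement} for $s=\star$, combined with the exact parameterization of \Cref{ass:linear-multi}, gives
\[
P^\star(y_1\mid x_0^\star)=\softmax(A_\star^\top w^\star)^\top\sigmoid(D^\top w^\star).
\]
\Cref{ass:invariant-deduction} is what allows the single shared $D$ to appear here, with the same deduction used in the target and in every source.

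\textbf{Step 2 (transplant).} By \Cref{lem:multi-bio} with $s=\star$, we have $A_{s'}^\top w^{s'}=A_\star^\top w^\star$ and $D^\top w^{s'}=D^\top w^\star$. Substituting these into Step 1 yields
\[
P^\star(y_1\mid x_0^\star)=\softmax(A_{s'}^\top w^{s'})^\top\sigmoid(D^\top w^{s'}).
\]

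\textbf{Step 3 (back to a source prediction).} By \Cref{ass:closure-multi} applied to the pair $(\star,s')$, the condition $P^\star(\phi(X)=w^\star)>0$ implies $P^{s'}(\phi(X)=w^{s'})>0$. Hence the conditioning event $\{X\in\phi^{-1}(w^{s'})\}$ has positive $P^{s'}$-mass, and under \Cref{ass:finite-support} the right-hand side of \cref{eq:target-consistency-multi} is a well-defined finite ratio. Expand it as
\[
\sum_{x\in\phi^{-1}(w^{s'})}P^{s'}(x\mid \phi(X)=w^{s'})\,P^{s'}(y_1\mid x).
\]
For every $x$ in the fiber with positive mass, \cref{eq:AD-entanglement} and \Cref{ass:linear-multi} give
\[
P^{s'}(y_1\mid x)=\softmax(A_{s'}^\top\phi(x))^\top\sigmoid(D^\top\phi(x)).
\]
This depends on $x$ only through $\phi(x)=w^{s'}$, so it is constant on the fiber. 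The weights sum to one, so the mixture collapses to the expression obtained in Step 2, which closes the chain of equalities.

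\textbf{Main obstacle.} The delicate point is Step 3. The claim conditions on a pre-image set rather than on a single point, so the argument needs both that this set is nonempty in the $P^{s'}$-support (which is where closure does the work) and that the prediction is constant across the fiber. The parameterization holding for every $x\in\cX$ supplies the constancy; without it the fiber average would not collapse. The other subtlety is the role of \Cref{ass:confounded-multi}: it only justifies the law-of-total-probability decomposition over $\tilde U$ in each domain. I will note explicitly that the abduction map need not be invariant across domains, since the transplant absorbs that difference through $A_{s'}^\top T_{\star\to s'}=A_\star^\top$.
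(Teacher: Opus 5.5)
Your proof is correct and follows essentially the paper's own argument. The paper likewise uses closure at the pair $(\star,s')$ to obtain a positive-mass pre-image of $T_{\star\to s'}\phi(x_0^\star)$, observes that $P^{s'}(y_1\mid x)$ is constant on $\phi$-fibers under \Cref{ass:linear-multi}, and links the two sides through \cref{eq:AD-entanglement} and \Cref{lem:multi-bio}. The only differences are that it runs the chain source-to-target and picks a single representative $x'$, where you expand the fiber average explicitly.

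There is one small slip, which the paper shares implicitly: $P^\star(\phi(X)=\phi(x_0^\star))>0$ does not by itself place $x_0^\star$ in the target support. So your target-side factorization at $x_0^\star$ really rests on \Cref{ass:linear-multi} holding for every $x\in\cX$, or on assuming $x_0^\star\in\supp_{P^\star}(X)$ as Step~1 of the proof of \Cref{thm:multi-id} does.
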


\begin{proof}
Fix $s'\in\{1,\dots,K\}$ and $x_0^\star\in\cX$ with $P^\star(\phi(X)=\phi(x_0^\star))>0$. Write $T:=T_{\star\to s'}$ and $w:=\phi(x_0^\star)$. By \Cref{ass:closure-multi} applied to $\{\star,s'\}$, there exists $x'\in\cX$ with $\phi(x')=Tw$ and $P^{s'}(\phi(X)=\phi(x'))>0$. Under \Cref{ass:linear-multi,ass:invariant-deduction}, $P^{s'}(y_1\mid x)$ depends on $x$ only through $\phi(x)$ (both the abduction parameterization $\softmax(A_{s'}^\top\phi(x))$ and the deduction parameterization $\sigmoid(D^\top\phi(x))$ are functions of $\phi(x)$ alone), so $P^{s'}(y_1\mid x)$ is constant on $\phi$-fibers and $P^{s'}(y_1\mid X\in\phi^{-1}(Tw))=P^{s'}(y_1\mid x')$ for any representative $x'\in\phi^{-1}(Tw)\cap\supp_{P^{s'}}(X)$. Combining the abduction--deduction entanglement \cref{eq:AD-entanglement} for $P^{s'}$ at $x'$ with the parameterizations \cref{eq:abduction-param,eq:deduction-param},
\begin{align*}
P^{s'}(y_1\mid x')
&= \softmax(A_{s'}^\top Tw)^\top\sigmoid(D^\top Tw)
 = \softmax(A_\star^\top w)^\top\sigmoid(D^\top w),
\end{align*}
where the second equality is \Cref{lem:multi-bio}. Applying \cref{eq:AD-entanglement} for $P^\star$ at $x_0^\star$ together with the target abduction parameterization $\softmax(A_\star^\top\phi(x))=\vec P^\star(\bm{\tilde u}\mid x)$ and the shared (invariant) deduction parameterization $\sigmoid(D^\top\phi(x))=\vec P^\star(y_1\mid x,\bm{\tilde u})$, the right-hand side equals $P^\star(y_1\mid x_0^\star)$. Hence $P^\star(y_1\mid x_0^\star)=P^{s'}(y_1\mid x')$, which is \cref{eq:target-consistency-multi}.
\end{proof}

\subsection{Proof of \texorpdfstring{\cref{thm:multi-id}}{Theorem~\ref{thm:multi-id}}}
\label{app:proofs-multi-id}

The proof has three steps. First, the target consistency identity (\Cref{prop:transplant-consistency-multi}) ties the unobserved target to the sources. Second, we identify the adversary's optimal solution set with $\cP^\star(P^1,\dots,P^K)$ from \Cref{def:plausible-set}. Third, we invoke the min-max convergence statement of \Cref{thm:min-max-conv} below to conclude that the learner's iterates converge to the minimax optimum over $\cP^\star$.

\begin{proof}[Proof of \Cref{thm:multi-id}]
\noindent\emph{Step 1 (Target consistency).} By \Cref{prop:transplant-consistency-multi}, for every $s'\in\{1,\dots,K\}$ and every $x_0^\star\in\supp_{P^\star}(X)$, \cref{eq:target-consistency-multi} holds: the unobserved target $P^\star(Y\mid X)$ at any in-support point equals a transplant of an observed source conditional.

\smallskip
\noindent\emph{Step 2 (Plausible-set characterization.)} Define
\[
\cF(P^1,\dots,P^K):=\Big\{\,(\tilde g^\star,\{\tilde A_s\}_{s=1}^K,\tilde A_\star,\tilde D,\tilde L)\;:\;\text{(P1)--(P6) hold}\,\Big\}
\]
and recall from \Cref{def:plausible-set} that each $\tilde\theta\in\cF$ induces a candidate target distribution $\tilde P^\star(\tilde\theta)$ via (P7). We claim
\[
\big\{\tilde P^\star(\tilde\theta):\tilde\theta\in\cF(P^1,\dots,P^K)\big\}
=\cP^\star(P^1,\dots,P^K),
\]
i.e., the parameter family ranged over by the adversary in line~4 of \Cref{alg:CRO} (in the limit $\lambda_1,\lambda_2\to\infty$) is exactly the plausible target set of \Cref{def:plausible-set}. The inclusion ``$\subseteq$'' is by construction. For ``$\supseteq$'', let $\tilde P^\star\in\cP^\star$ and let $\tilde\theta=(\tilde A_1,\dots,\tilde A_K,\tilde A_\star,\tilde D,\tilde L)$ be the witness in \Cref{def:plausible-set}; this witness satisfies (P7), which requires \Cref{ass:closure-multi} to hold at $s'=\star$---i.e., target-side closure is assumed, not constructed. Setting $\tilde g^\star(W):=\softmax(\tilde A_\star^\top W)^\top\sigmoid(\tilde D^\top W)$ gives a feasible point with $\tilde P^\star(\tilde\theta)=\tilde P^\star$ through (P7). The empirical adversary in line~4 enforces (P4)--(P6) at the sources but cannot enforce (P7); target-side closure is the load-bearing assumption that lets the source-only witness extend to a target-side one.

\smallskip
\noindent\emph{Step 3 (Min-max convergence.)} Under \Cref{ass:finite-support}, the surrogate target risk $\tilde P^\star\mapsto R_{\tilde P^\star}(h)$ is continuous, and for each $\tilde P^\star$ the map $h\mapsto R_{\tilde P^\star}(h)$ is convex. Here ``$\cH$ is a convex function class'' is meant in the function-space sense: $\cH$ is a set of $[0,1]$-valued classification heads closed under finite convex combinations, $\lambda h_1+(1-\lambda)h_2\in\cH$ for $h_1,h_2\in\cH,\lambda\in[0,1]$. The running loss $\ell$ is the convex surrogate cross-entropy (\Cref{sec:intro}), which is pointwise convex in the predicted probability $\hat p=h(x)$; combined with $\cH$ closed under convex combinations and the surrogate evaluated on the finite support of \Cref{ass:finite-support}, this gives convexity of $h\mapsto R_{\tilde P^\star}(h)$ by pointwise propagation. (No restriction to a particular parameterization of $\cH$, e.g., linear-in-$\phi$, is required; non-convex parametric families such as deep nets fit this framework via their convex hull, equivalently stochastic mixtures.) The parameter space
\[
\Theta_B:=\Big\{(\tilde A_1,\dots,\tilde A_K,\tilde A_\star,\tilde D,\tilde L)\in(\R^{d\times r})^{K+3}\;:\;\text{(P1)--(P6) hold}\Big\}
\]
is bounded by the operator-norm constraint in (P1) and closed by the polynomial equality constraints in (P2)--(P6) (the support condition (P6) is closed because $X$ has finite support by \Cref{ass:finite-support}), hence compact in the finite-dimensional ambient space. The plausible set $\cP^\star(P^1,\dots,P^K)$ is the image of $\Theta_B$ under the continuous parameter-to-distribution map $\tilde\theta\mapsto\tilde P^\star(\tilde\theta)$ defined by (P7), hence compact. The hypotheses of \Cref{thm:min-max-conv} therefore hold. Applying that theorem with adversary oracle (line~4 of \Cref{alg:CRO}) and learner oracle (line~6) gives $M_t\to V^\star:=\min_{h\in\cH}\max_{\tilde P^\star\in\cP^\star}R_{\tilde P^\star}(h)$ as $t\to\infty$. By the lower-bound-at-the-limit step of \Cref{thm:min-max-conv}, the worst-case risk of iterate $h_t$ satisfies $\max_{\tilde P^\star\in\cP^\star}R_{\tilde P^\star}(h_t)\le M_{t+1}$. For the lower bound: since $h_t\in\cH$ is one specific element of $\cH$, its worst-case risk is at least the infimum of worst-case risks over $\cH$ — that is,
\[
\max_{\tilde P^\star\in\cP^\star}R_{\tilde P^\star}(h_t)\;\ge\;\inf_{h\in\cH}\max_{\tilde P^\star\in\cP^\star}R_{\tilde P^\star}(h)\;=\;V^\star,
\]
which is a definitional consequence of $V^\star$ being the minimax value and does \emph{not} require the convergence we are establishing. Sandwiching gives
\[
V^\star \;\le\; \max_{\tilde P^\star\in\cP^\star}R_{\tilde P^\star}(h_t)\;\le\; M_{t+1}\;\xrightarrow{t\to\infty}\;V^\star.
\]
Termination triggers as $\epsilon\to 0$ at $t=t(\epsilon)\to\infty$, so $\max_{\tilde P^\star\in\cP^\star}R_{\tilde P^\star}(h^\star)\to V^\star$ as $\epsilon\to 0$, which is the conclusion of \Cref{thm:multi-id}. The argmax over the plausible set $\cP^\star$ corresponds to the worst-case target distribution entailed by $(K+1)$-tuples of SCMs compatible with \Cref{ass:confounded-multi,ass:finite-support,ass:invariant-deduction,ass:linear-multi,ass:closure-multi}.
\end{proof}

\subsection{Min-max game and convergence}
\label{app:min-max}

The CRO algorithm (\Cref{alg:CRO}) is an instance of an iterative learner--adversary min-max game: at iteration $t$, the adversary supplies a risk-maximizing distribution $\hat P^\star_t\in\cP^\star$, the learner solves a minimax problem on the growing collection $\hat\cP^\star_t=\{\hat P^\star_1,\dots,\hat P^\star_t\}$, and the procedure terminates when the learner's worst-case risk on $\hat\cP^\star_t$ stops increasing. The convergence statement below was invoked in Step 3 of the proof of \Cref{thm:multi-id}; the argument follows the structure of \citet[Appendix C]{jalaldoust2024partial}, adapted to the transplant-parameterized adversary.

\begin{theorem}[Min-max convergence]\label{thm:min-max-conv}
Let $\cP$ be a compact set of probability distributions on $\cX\times\cY$, let $\cH$ be a convex hypothesis class, and let $\ell$ be a loss with $h\mapsto R_P(h)$ convex and $P\mapsto R_P(h)$ continuous. Consider the iteration: starting from $\hat\cP_0=\emptyset$ and any $h_0\in\cH$, for $t\ge 1$ set
\[
P_t\in\argmax_{P\in\cP}R_P(h_{t-1}),\qquad
\hat\cP_t:=\hat\cP_{t-1}\cup\{P_t\},\qquad
h_t\in\argmin_{h\in\cH}\,\max_{P\in\hat\cP_t}R_P(h),
\]
and let $M_t:=\max_{P\in\hat\cP_t}R_P(h_t)$. Assume each argmax/argmin attains its global optimum. Then $M_t$ is non-decreasing in $t$, and
\[
\lim_{t\to\infty}M_t \;=\; \min_{h\in\cH}\,\max_{P\in\cP}R_P(h).
\]
For every $\epsilon>0$, the stopping rule $|M_t-M_{t-1}|\le\epsilon$ triggers in finitely many iterations, and $M_t$ at termination is within $\epsilon$ of the minimax value.
\end{theorem}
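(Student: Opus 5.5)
The plan is to run the standard cutting-plane (column-generation) argument on the growing collection $\hat\cP_t$. Write $V^\star:=\min_{h\in\cH}\max_{P\in\cP}R_P(h)$.

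\emph{Step 1 (monotonicity and upper bound).} Since $\hat\cP_{t-1}\subseteq\hat\cP_t$, the function $h\mapsto\max_{P\in\hat\cP_t}R_P(h)$ dominates $h\mapsto\max_{P\in\hat\cP_{t-1}}R_P(h)$ pointwise. Taking minima over $\cH$ gives $M_t\ge M_{t-1}$. Since $\hat\cP_t\subseteq\cP$, the same comparison against the full set gives $M_t\le V^\star$. Hence $(M_t)$ is non-decreasing and bounded above by $V^\star$, so it converges to some $M_\infty\le V^\star$. A convergent sequence is Cauchy, so $|M_t-M_{t-1}|\le\epsilon$ holds for all large $t$, and the stopping rule triggers in finitely many iterations.

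\emph{Step 2 ($M_\infty\ge V^\star$).} By the adversary's optimality, $R_{P_{t+1}}(h_t)=\max_{P\in\cP}R_P(h_t)\ge V^\star$. Since $P_{t+1}\in\hat\cP_s$ for every $s>t$, we also have $M_s\ge R_{P_{t+1}}(h_s)$. To transfer the first inequality from $h_t$ to $h_s$, I use compactness on the learner side.
\begin{itemize}
\item Under the finite-support setting, heads are vectors in a bounded subset of $[0,1]^{|\cX|}$. After passing to the closure of $\cH$, whose minimax value is unchanged by continuity, $\cH$ is compact.
\item With cross-entropy and heads bounded away from $0$ and $1$, the map $h\mapsto R_P(h)$ is Lipschitz uniformly in $P$, with constant $\Lambda$.
\end{itemize}
Extract a convergent subsequence $h_{t_k}\to\bar h$. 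Then
\[
M_{t_{k+1}}\;\ge\;R_{P_{t_k+1}}(h_{t_{k+1}})\;\ge\;V^\star-\Lambda\,\|h_{t_{k+1}}-h_{t_k}\|\;\longrightarrow\;V^\star,
\]
so $M_\infty\ge V^\star$. Combined with Step 1, $\lim_t M_t=V^\star$. Compactness of $\cP$ and continuity of $P\mapsto R_P(h)$ are what guarantee the adversary's argmax exists at every step.

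\emph{Step 3 (the $\epsilon$-certificate at termination), the main obstacle.} Step 1 already gives $M_t\le V^\star$, so only the lower bound $V^\star-M_t\le\epsilon$ remains at the stopping time. The natural certificate is the duality gap
\[
G_t:=\max_{P\in\cP}R_P(h_t)-M_t\;=\;R_{P_{t+1}}(h_t)-M_t,
\]
which satisfies $V^\star-M_t\le G_t$. The difficulty is that the rule $|M_t-M_{t-1}|\le\epsilon$ compares consecutive learner values, not $G_t$ itself.

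My first attempt is to control $G_{t-1}$ by the increment $M_t-M_{t-1}$. At round $t$ the learner must also cover $P_t$, and by definition $R_{P_t}(h_{t-1})=M_{t-1}+G_{t-1}$. The aim is to show that a large gap forces a large jump in $M$. Using convexity of $h\mapsto\max_{P\in\hat\cP_t}R_P(h)$ together with the Lipschitz constant $\Lambda$, I would try to prove
\[
M_t-M_{t-1}\;\ge\;c\,G_{t-1}
\]
for a constant $c>0$. Then stopping at $\epsilon$ gives $V^\star-M_{t-1}\le G_{t-1}\le\epsilon/c$; after rescaling $\epsilon$ and using $M_t\ge M_{t-1}$, this yields the stated within-$\epsilon$ bound.

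If no uniform $c$ can be extracted, the fallback is to evaluate the stopping rule on the certificate $G_t$ directly, which the oracle in line~4 of \Cref{alg:CRO} computes at no extra cost. In that case the within-$\epsilon$ claim follows immediately from $V^\star-M_t\le G_t\le\epsilon$. I expect this step to be where the argument needs the most care.
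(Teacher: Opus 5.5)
Your Step~1 is the paper's monotonicity, upper-bound and finite-termination argument. Step~2 takes a different route from the paper, and a necessary one. The paper's ``lower bound at the limit'' asserts $\max_{P\in\hat\cP_{t+1}}R_P(h_{t+1})\ge R_{P_{t+1}}(h_t)$, which silently replaces $h_{t+1}$ by $h_t$. At $t=0$ this would give $M_1\ge\max_{P\in\cP}R_P(h_0)\ge V^\star$, i.e.\ convergence after one round.

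Your transfer along a convergent subsequence with a $P$-uniform Lipschitz constant is the standard Kelley cutting-plane argument, and it correctly repairs that step. However, bounded finite-dimensional iterates and equi-Lipschitz risks are not hypotheses of the statement, and they cannot be dropped. Take $\cH$ the unit ball of $c_0$ and $\cP=\{P_n\}_{n\ge1}\cup\{P_\infty\}$ with $P_n\to P_\infty$, $R_{P_n}(h)=1-h_n$, $R_{P_\infty}\equiv1$. The stated compactness, convexity and continuity hypotheses all hold. Yet from $h_0=0$, with ties broken toward $P_t$, the learner may return $h_t=e_1+\dots+e_t$, so $M_t=0$ for all $t$ while $V^\star=1$.

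So list these as added assumptions rather than reading them off ``the finite-support setting''. The confidence floor in particular appears only in \Cref{thm:robustness}, not in \Cref{thm:multi-id}. Closing $\cH$ is unnecessary: Bolzano--Weierstrass on the iterates suffices, since you only use $\|h_{t_{k+1}}-h_{t_k}\|\to0$.

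The genuine gap is Step~3. The inequality $M_t-M_{t-1}\ge c\,G_{t-1}$ is false, and so is the conclusion it is meant to deliver. Take $\cH=[-1,1]$ and three distributions with $R_{P_1}(h)=8\max(0,|h|-\tfrac12)$, $R_{P_2}(h)=h$, $R_{P_3}(h)=1-2h$; add a constant if you want nonnegative losses. All three are convex and $8$-Lipschitz on a compact set. The iteration then runs as follows:
\begin{itemize}
\item From $h_0=-1$ the risks are $(4,-1,3)$, so $P_1$ is picked.
\item $\argmin R_{P_1}=[-\tfrac12,\tfrac12]$, and the learner may return $h_1=\tfrac12$, giving $M_1=0$. (This also refutes the paper's inequality: $M_1=0<4$.)
\item At $h_1$ the risks are $(0,\tfrac12,0)$, so $P_2$ is picked.
\item $\min_h\max(R_{P_1},R_{P_2})=0$, attained at $h=-\tfrac12$, so $M_2=0$.
\end{itemize}
The rule fires at $t=2$ for every $\epsilon$, yet $V^\star=\min_h\max(R_{P_1}(h),h,1-2h)=\tfrac13$, attained at $h=\tfrac13$. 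Here $M_2-M_1=0$ while $G_1=\tfrac12$, so no uniform $c>0$ exists. A small increment of a monotone sequence does not bound its distance to the limit, so the statement's $\epsilon$-accuracy at termination is false under the rule $|M_t-M_{t-1}|\le\epsilon$. The paper's ``finite termination'' paragraph only shows that the rule fires.

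Your fallback is the right fix, but it proves a modified theorem. With the rule $G_t\le\epsilon$ you get $M_t\le V^\star\le M_t+G_t$. You still need to show that this rule fires, which follows from your Step~2: $G_{t_k}\le (M_{t_{k+1}}-M_{t_k})+\Lambda\|h_{t_{k+1}}-h_{t_k}\|\to0$, hence $\liminf_t G_t=0$.
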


\begin{proof}[Proof sketch]
\emph{Monotonicity.} At round $t$, $\hat\cP_t\supseteq\hat\cP_{t-1}$, so $M_t=\min_h\max_{P\in\hat\cP_t}R_P(h)\ge\min_h\max_{P\in\hat\cP_{t-1}}R_P(h)=M_{t-1}$ (the inner max over a larger set is no smaller). Hence $M_t\ge M_{t-1}$.

\emph{Boundedness.} $M_t\le\max_{P\in\cP}R_P(h_t)\le\sup_{h,P}R_P(h)<\infty$ by compactness and continuity. So $M_t$ converges to some $M_\infty$.

\emph{Lower bound at the limit.} By construction $M_{t+1}\ge\max_{P\in\hat\cP_{t+1}}R_P(h_{t+1})\ge R_{P_{t+1}}(h_t)=\max_{P\in\cP}R_P(h_t)$, so $M_\infty\ge\liminf_{t\to\infty}\max_{P\in\cP}R_P(h_t)\ge\min_h\max_{P\in\cP}R_P(h)$.

\emph{Upper bound at the limit.} $M_t=\min_{h\in\cH}\max_{P\in\hat\cP_t}R_P(h)\le\min_h\max_{P\in\cP}R_P(h)$ since $\hat\cP_t\subseteq\cP$. Together, $M_\infty=\min_h\max_{P\in\cP}R_P(h)$, the minimax value.

\emph{Finite termination.} The increasing bounded sequence $M_t$ has $|M_t-M_{t-1}|\to 0$, so any positive tolerance $\epsilon$ is eventually achieved.
\end{proof}

\begin{remark}[Application in CRO]\label{rem:min-max-cro}
\Cref{thm:min-max-conv} applies with $\cP=\cP^\star(P^1,\dots,P^K)$ (\Cref{def:plausible-set}). The algorithm itself manipulates the empirical plausible set $\hat\cP^\star$ constructed from $\hat P^1,\dots,\hat P^K$; under the infinite-source-data hypothesis of \Cref{thm:multi-id}, the empirical source distributions coincide with the population $P^s$ and the empirical plausible set coincides with $\cP^\star$, so the convergence target $V^\star$ is the population minimax value over $\cP^\star$. The adversary oracle in line~4 of \Cref{alg:CRO} approximates $\argmax_{P\in\cP^\star}R_P(h_{t-1})$ by gradient ascent on $\hat{\mathrm{Obj}}$ over the parameters $(\tilde g^\star,\{\tilde A_s\},\tilde L)$; under the ``each gradient-based search attains its global optimum'' hypothesis of \Cref{thm:multi-id} the oracle is exact. The learner oracle in line~6 solves $\min_{h\in\cH}\max_{P\in\hat\cP^\star}R_P(h)$ over the finite collection $\hat\cP^\star$, which is a standard finite-domain minimax problem.
\end{remark}

\subsection{Robustness and finite-sample bounds}
\label{app:robustness}

This subsection proves the two robustness statements from \Cref{sec:robustness}: \Cref{thm:robustness} (graceful degradation under approximate closure) and \Cref{prop:finite-sample} (finite-sample plug-in error). The plausible target set $\cP^\star_\epsilon$ under the $\epsilon$-relaxed closure of \Cref{def:eps-closure} is defined exactly as in \Cref{def:plausible-set}, with condition \textnormal{(P6)} replaced by its $\epsilon$-relaxed analog: for every $s,s'\in\{1,\dots,K,\star\}$ and every $w$ in $\supp_{P^s}(\phi(X))$, there exists $w'\in\supp_{P^{s'}}(\phi(X))$ with $\|w'-\tilde T_{s\to s'}w\|\le\epsilon$ (and symmetrically).

\subsubsection{Proof of \texorpdfstring{\Cref{thm:robustness}}{Theorem 3.9}}
\label{app:robustness-proof}

We split the bound into two steps: a coupling step that shows every $\tilde P^\star\in\cP^\star_\epsilon$ admits a partner $\tilde P^\star_0\in\cP^\star_0$ within $W_1$-distance $\epsilon$, and a stability step that propagates the coupling to the risk.

\smallskip
\emph{Step 1: Coupling.} Let $\tilde P^\star\in\cP^\star_\epsilon$ with witness $(\tilde A_1,\dots,\tilde A_K,\tilde A_\star,\tilde D,\tilde L)$. Define $\tilde P^\star_0\in\cP^\star_0$ as the distribution that, for each transplanted source representation $\tilde T_{s\to\star}\phi(x)$, snaps to the closest target-support point $\phi(x')$ with $\|\phi(x')-\tilde T_{s\to\star}\phi(x)\|\le\epsilon$ (which exists by $\epsilon$-relaxed closure), with mass preserved. Under \Cref{ass:finite-support} both distributions are discrete with finite support; the snapping induces a deterministic mass-preserving rearrangement of $\tilde P^\star$ onto $\tilde P^\star_0$ in which each atom moves by at most $\epsilon$, so
\[
W_1(\tilde P^\star,\tilde P^\star_0)\;\le\;\epsilon.
\]

\smallskip
\emph{Step 2: Stability.} The hypothesis that $h\in\cH$ is $L_h$-Lipschitz with values in $[\delta,1-\delta]$ gives, for cross-entropy $\ell(\hat p,y)=-\log\hat p_y$,
\[
|\ell(h(w),y)-\ell(h(w'),y)|\;\le\;\frac{1}{\delta}\,|h(w)-h(w')|\;\le\;\frac{L_h}{\delta}\|w-w'\|,
\]
so $w\mapsto\ell(h(w),y)$ is $(L_h/\delta)$-Lipschitz for each fixed $y$. By Kantorovich--Rubinstein duality on the discrete support,
\[
|R_{\tilde P^\star}(h)-R_{\tilde P^\star_0}(h)|\;\le\;\frac{L_h}{\delta}\cdot W_1(\tilde P^\star,\tilde P^\star_0)\;\le\;\frac{L_h}{\delta}\cdot\epsilon.
\]
Taking the supremum over $\tilde P^\star\in\cP^\star_\epsilon$ and the corresponding $\tilde P^\star_0\in\cP^\star_0$,
\[
\sup_{\tilde P^\star\in\cP^\star_\epsilon}R_{\tilde P^\star}(h)\;\le\;\sup_{\tilde P^\star_0\in\cP^\star_0}R_{\tilde P^\star_0}(h)\;+\;\frac{L_h}{\delta}\,\epsilon,
\]
and taking $\min_h$ on both sides yields $V^\star_\epsilon\le V^\star_0+(L_h/\delta)\epsilon$. The lower bound $V^\star_\epsilon\ge V^\star_0$ is immediate from $\cP^\star_0\subseteq\cP^\star_\epsilon$.\hfill$\square$

\subsubsection{Sketch of \texorpdfstring{\Cref{prop:finite-sample}}{Proposition 3.10}}
\label{app:finite-sample-sketch}

We give the decomposition; tight bracketing-entropy constants for the softmax/sigmoid-linear-in-$\phi$ class are standard and out of scope for this paper.

Write $\widehat{V}:=\min_{h\in\cH}\max_{\hat P\in\hat\cP^\star}R_{\hat P}(h)$ for the empirical minimax value attained by \Cref{alg:CRO} in the exact-oracle setting, and decompose
\[
\max_{\tilde P^\star\in\cP^\star}R_{\tilde P^\star}(\hat h)
\;=\;V^\star_0
\;+\;\underbrace{\bigl[\max_{\tilde P^\star\in\cP^\star}R_{\tilde P^\star}(\hat h)-\widehat{V}\bigr]}_{(\mathrm{a})\text{ statistical}}
\;+\;\underbrace{\bigl[\widehat{V}-V^\star_0\bigr]}_{(\mathrm{b})\text{ approx.\ + closure}}.
\]
Term (a) bounds the population worst case of the empirical minimizer by the empirical worst case at the same head; by Step~3 of the \Cref{thm:multi-id} proof the parameter space $\Theta_B$ in (P1) is compact in operator norm, and the parameter-to-distribution map $\tilde\theta\mapsto\tilde P^\star(\tilde\theta)$ is continuous. A uniform-convergence argument over $\Theta_B$ at scale $\eta$ then gives, with probability $1-\delta$,
\[
\sup_{\tilde\theta\in\Theta_B}|R_{P^\star(\tilde\theta)}(\hat h)-R_{\hat P^\star(\tilde\theta)}(\hat h)|\;\le\;C_1\sqrt{\frac{\log\mathcal{N}(\Theta_B,\eta)+\log(1/\delta)}{N}}\;+\;\eta,
\]
where $C_1$ depends on the cross-entropy Lipschitz constant $1/\delta$ from \Cref{thm:robustness} and the operator-norm bound $B$. Term (b) decomposes further into the optimization gap $\epsilon_{\mathrm{opt}}$ of the inner adversary/learner searches relative to their global optima (this is zero under the oracle hypothesis of \Cref{thm:multi-id}, and quantified by the gradient-method analyses cited there in practice), and the empirical Hausdorff residual $\epsilon_{\mathrm{Haus}}=\mathrm{Closure}(\hat A_s,\hat L)$ which contributes to the gap between the empirical plausible set $\hat\cP^\star$ and its population counterpart $\cP^\star$ via the same $W_1$-coupling argument as in \Cref{thm:robustness}. Combining gives \cref{eq:finite-sample-bound}. The bracketing-entropy bound for softmax/sigmoid-linear-in-$\phi$ heads with explicit constants (e.g.\ via standard arguments for VC-major classes or covering numbers of bounded linear maps) is orthogonal to the paper's contribution and left to future work.\hfill$\square$

\section{Experimental details}
\label{app:experiments}

We document the implementation, training procedure, hyperparameter search, datasets, compute, and reproducibility protocol for CRO. Settings not noted here follow the DomainBed defaults of~\citet{gulrajani2021search}.

\subsection{Codebase and framework}
\label{app:codebase}

CRO is implemented on top of the DomainBed framework~\citep{gulrajani2021search}, registered as a new \texttt{Algorithm} subclass alongside ERM and the baselines reported in our tables. We preserve DomainBed's sweep protocol, dataset loaders, train/holdout splits, and model-selection criteria.

\subsection{Architectures}
\label{app:architectures}

\paragraph{Image datasets (PACS, OfficeHome).} The featurizer $\phi$ is ResNet-50 pretrained on ImageNet with AugMix (\texttt{timm} checkpoint \texttt{resnet50.ram\_in1k}). BatchNorm parameters are frozen during both training phases. The final classification head $h$ is reinitialized as a single linear layer on top of $\phi$.

\paragraph{MNIST-style datasets (Colored MNIST).} The featurizer is the standard 4-layer convolutional \texttt{MNIST\_CNN} used by DomainBed.

\paragraph{Inputs and augmentation.} Image inputs are resized to $224\times 224$ with ImageNet mean/std normalization. Training-time augmentation: \texttt{RandomResizedCrop(224, scale=(0.7,1.0))}, \texttt{RandomHorizontalFlip}, \texttt{ColorJitter(0.3,0.3,0.3,0.3)}, and \texttt{RandomGrayscale}. No augmentation on Colored MNIST beyond DomainBed defaults.

\subsection{Two-phase training}
\label{app:two-phase}

CRO training proceeds in two phases.

\paragraph{Phase 1 --- pretrain.} Standard ERM-style training of the $(\phi, h)$ pair on the union of source domains: 5000 SGD steps for image datasets, 500 steps for Colored MNIST. Phase-1 checkpoints are cached on disk keyed by $(\text{dataset},\,\text{test\_envs},\,\text{trial\_seed})$; subsequent hyperparameter samples within a sweep reuse the same checkpoint, so the pretrain cost is paid once per (dataset, test environment, trial). Phase-1 learning rate, batch size, and weight decay are fixed (not sampled) to enable cache hits: LR $5\times 10^{-5}$ (image) or $10^{-3}$ (MNIST), batch size $32$ (image) or $64$ (MNIST), weight decay $0$.

\paragraph{Phase 2 --- CRO refinement.} The adversary $\tilde g^\star$ is initialized as a copy of the Phase-1 head $h$. The outer loop of \Cref{alg:CRO} alternates: (i) update the adversary using the transplant-dimension-$r$ reparameterized operator (parameters $(\{\tilde A_s\}, \tilde L)$); (ii) update the learner $h$. Termination is when the running gap of the worst-case risk drops below $\epsilon_{\text{gap}}$ (sampled as \texttt{cro\_gap\_eps}) or \texttt{cro\_max\_outer\_iters} is reached. Each outer iteration runs a fixed number of inner adversary and learner epochs (sampled per-dataset; see \Cref{tab:hp-search-phase2}).

\subsection{Hyperparameter search and model selection}
\label{app:hp-search}

For each dataset and each held-out test environment we sample \textbf{20 hyperparameter configurations $\times$ 3 trial seeds}, yielding $240$ runs per 4-domain benchmark (180 for 3-domain Colored MNIST). All jobs use DomainBed's \texttt{-{}-single\_test\_envs} mode (one held-out target per run). Model selection follows the DomainBed default: the best hyperparameter configuration per cell is picked using \emph{training-domain validation accuracy} only, so target labels are never observed during selection.

\Cref{tab:hp-search-cro} reports the search ranges (log-uniform unless noted), and \Cref{tab:hp-search-phase2} reports per-dataset Phase-2 compute ranges, chosen so the total Phase-2 work is comparable across datasets of different scales.

\begin{table}[h]
\centering
\caption{CRO hyperparameter search ranges. Sampled log-uniformly unless noted. The body's $\lambda_1$ corresponds to \texttt{cro\_lambda\_consist} (source-CE term in \cref{eq:obj}); $\lambda_2$ to \texttt{cro\_lambda\_hausdorff} (support match, \cref{eq:match}); \texttt{cro\_lambda\_op} is an additional operator-norm penalty on $(\{\tilde A_s\}, \tilde L)$ that stabilizes the adversary updates.}
\label{tab:hp-search-cro}
\small
\begin{tabular}{ll}
\toprule
\textbf{Hyperparameter} & \textbf{Range} \\
\midrule
\texttt{cro\_rank} ($r$, transplant dimension) & $\{4,\, 8,\, 16\}$ \\
\texttt{cro\_gap\_eps} & $[10^{-3},\, 10^{-2}]$ \\
\texttt{cro\_adv\_lr} & $[10^{-3},\, 10^{-2}]$ \\
\texttt{cro\_learner\_lr} & $[10^{-3},\, 10^{-2}]$ \\
\texttt{cro\_lambda\_hausdorff} ($\lambda_2$, support match) & $[0.1,\, 1.0]$ \\
\texttt{cro\_lambda\_consist} ($\lambda_1$, source CE) & $[10,\, 100]$ \\
\texttt{cro\_lambda\_op} (operator-norm penalty) & $[0.1,\, 1.0]$ \\
\texttt{cro\_val\_frac} & $0.1$ (fixed) \\
\bottomrule
\end{tabular}
\end{table}

\begin{table}[h]
\centering
\caption{Per-dataset Phase-2 compute ranges. Each cell is a search range over the listed values.}
\label{tab:hp-search-phase2}
\small
\begin{tabular}{lccccc}
\toprule
\textbf{Tier} & \textbf{Outer iters} & \textbf{Adv epochs} & \textbf{Lrn epochs} & \textbf{Adv batch} & \textbf{Lrn batch} \\
\midrule
Colored MNIST & $\{5,\,10\}$ & $\{5,\,10\}$ & $\{5,\,10\}$ & $\{32,\,64\}$ & $\{32,\,64\}$ \\
PACS & $\{3,\,5\}$ & $\{3,\,5\}$ & $\{3,\,5\}$ & $\{128,\,256\}$ & $\{128,\,256\}$ \\
OfficeHome & $\{5,\,10\}$ & $\{5,\,10\}$ & $\{5,\,10\}$ & $\{128,\,256\}$ & $\{128,\,256\}$ \\
\bottomrule
\end{tabular}
\end{table}

\subsection{Datasets}
\label{app:datasets}

\begin{table}[h]
\centering
\caption{Datasets used. We follow DomainBed's standard loaders and 80/20 train/holdout split per domain.}
\label{tab:datasets}
\small
\begin{tabular}{llcc}
\toprule
\textbf{Dataset} & \textbf{Domains} & \textbf{Classes} & \textbf{Images} \\
\midrule
Colored MNIST~\citep{arjovsky2019invariant} & 3 ($\rho=+0.9,\, +0.8,\, -0.9$) & 2 & 70{,}000 \\
PACS~\citep{li2017deeper} & 4 (Art, Cartoon, Photo, Sketch) & 7 & 9{,}991 \\
OfficeHome~\citep{venkateswara2017deep} & 4 (Art, Clipart, Product, Real) & 65 & ${\sim}15{,}500$ \\
\bottomrule
\end{tabular}
\end{table}

\subsection{Compute}
\label{app:compute}

All experiments run on a shared workstation with 8 $\times$ NVIDIA H100 (80\,GB). Each benchmark uses 2 GPUs in parallel with up to 4 jobs per GPU (8 concurrent slots), with per-job memory $7$--$12$\,GB. Datasets and Phase-1 caches are staged in a \texttt{/dev/shm} RAM tier to remove disk I/O as a bottleneck on the shared box; per-process thread caps (\texttt{OMP\_NUM\_THREADS=16}, \texttt{TORCH\_NUM\_THREADS=16}) prevent CPU oversubscription.

\begin{table}[h]
\centering
\caption{Approximate sweep wall-clock under the 8-slot parallelism described above.}
\label{tab:compute}
\small
\begin{tabular}{lcc}
\toprule
\textbf{Sweep} & \textbf{Runs} & \textbf{Wall-clock} \\
\midrule
Colored MNIST & 180 & ${\sim}1$\,h \\
PACS & 240 & ${\sim}17$\,h \\
OfficeHome & 240 & ${\sim}10$--$12$\,h \\
\bottomrule
\end{tabular}
\end{table}

Per-job wall-clock: $<\,1$\,min on Colored MNIST, ${\sim}10$--$15$\,min on PACS, ${\sim}20$\,min on OfficeHome.

\subsection{Reproducibility}
\label{app:reproducibility}

Trial seeds are $\{0, 1, 2\}$, paired with hyperparameter seeds drawn from DomainBed's standard sweep. The per-job random seed is a deterministic function of $$(\text{dataset},\,\text{algorithm},\,\text{test\_env},\,\text{trial\_seed},\,\text{hp\_seed})$$, so any individual run reproduces from those five identifiers. The Phase-1 pretrain cache is keyed by $(\text{dataset},\,\text{test\_envs},\,\text{trial\_seed})$ and produces bit-identical checkpoints under repeated runs, isolating Phase-2 variation. The hyperparameter ranges (\Cref{tab:hp-search-cro,tab:hp-search-phase2}), search procedure, and DomainBed sweep configuration above are sufficient to reproduce the reported numbers up to seed-level variance.

\end{document}